\newcommand{\dist}[1]{\normalfont{\textrm{#1}}}
\def\equationautorefname#1#2\null{Eq.#1(#2\null)}
\newtheorem{definition}{Definition}
\newtheorem{theorem}{Theorem}
\newtheorem{lemma}{Lemma}
\newtheorem{proposition}{Proposition}
\newtheorem{corollary}{Corollary}
\tikzset{%
  >={Latex[width=2mm,length=2mm]},
  base/.style = {rectangle, rounded corners, 
                 minimum width=1.6cm, minimum height=0.8cm, 
                 text centered},
  start/.style = {base, draw=black, fill=orange!30},
  stop/.style = {base, draw=black, fill=red!30},
  process/.style = {base, draw=black, fill=orange!15,},
  criteria/.style = {ellipse, text centered, fill=Aquamarine!40, inner sep=0.05cm},
}
\title{AI in a vat: Fundamental limits of efficient world\\ modelling for agent sandboxing and interpretability}
\author{Fernando E. Rosas\textsuperscript{1-5}, Alexander Boyd\textsuperscript{1,6}, Manuel Baltieri\textsuperscript{7,1}}
\keywords{World models, agent sandboxing, POMDPs, AI interpretability, AI safety}
\begin{document}

\makeCover  
\maketitle  

\begin{abstract}
Recent work proposes using world models to generate controlled virtual environments in which AI agents can be tested before deployment to ensure their reliability and safety. 
However, accurate world models often have high computational demands that can severely restrict the scope and depth of such assessments. 
Inspired by the classic `brain in a vat' thought experiment, here we investigate ways of simplifying world models that remain agnostic to the AI agent under evaluation. 
By following principles from computational mechanics, our approach reveals a fundamental trade-off in world model construction between efficiency and interpretability, demonstrating that no single world model can optimise all desirable characteristics. 
Building on this trade-off, we identify procedures to build world models that either minimise memory requirements, delineate the boundaries of what is learnable, or allow tracking causes of undesirable outcomes. 
In doing so, this work establishes fundamental limits in world modelling, leading to actionable guidelines that inform core design choices related to effective agent evaluation.
\end{abstract}

\section{Introduction}

Breakthroughs in deep reinforcement learning are progressively enabling AI agents capable of mastering complex tasks across a wide array of domains~\citep{arulkumaran2017deep, wang2022deep}, and a new generation of agents leveraging large language models~\citep{wang2024survey} and large multimodal models~\citep{yin2024survey} is expected to drive a new wave of technological innovation with the potential to benefit all sectors of the global economy~\citep{WEF2025}.
Alongside these benefits, the proliferation of increasingly advanced autonomous AI systems will also bring important new risks regarding their safety, controllability, and alignment with human values~\citep{bengio2024international,tang2024prioritizing}. 
Given these far-reaching prospects, it is imperative to develop frameworks and methodologies to guarantee the safe and beneficial integration of these technologies to our societies.

One path to pursue AI safety and alignment is to use synthetic world models as sandbox environments to evaluate AI agents without real-world consequences~\citep{dalrymple2024towards,diaz2023connecting}. 
These simulated environments are ideal for observing how AI agents handle edge cases and respond to novel situations, potentially revealing safety issues or alignment failures before deployment~\citep{he2024security}. 
The efficacy of this approach, however, critically relies on the world model accurately representing relevant aspects of real environments, which is key for guaranteeing that the agent's behaviour in simulation may transfer to real-world settings. 
Thus, a key challenge lies in dealing with the computational demands of high-fidelity simulations, whose costs can impose heavy restrictions on the breadth and depth of safety and reliability assessments.

Here we address these issues by investigating the fundamental limits that shape the design of world models. 
By bridging concepts from reinforcement learning, control theory, and computational mechanics, we identify a fundamental trade-off between the computational efficiency of a world model and its interpretability. 
This approach also leads to the distinction between \emph{forward} and \emph{reverse} interpretability approaches, where the former characterises the predictive capabilities of agents, and the latter enables retrodictive analyses of the origins of undesirable outcomes. 
Overall, this work establishes foundational groundwork that leads to actionable guidelines for building world models to study AI agents following different desiderata.

\section{Scenario and approach}
\label{sec:scenario_and_approach}

\begin{quotation}
    \noindent \textit{Representation and what is represented belong to two completely different worlds.} 
\end{quotation}
\begin{flushright}
    Hermann von Helmholtz, \textit{Handbuch der physiologischen Optik} (1867)
\end{flushright}

Consider the design of a world model to sandbox and test an AI agent~\citep{dalrymple2024towards}. 
What should this world model look like? What information should it encode? And for what purpose?

To ensure a reliable assessment of AI agent behaviour from simulations to a real-world setting, world models must faithfully reflect the world’s structure and dynamics.
This could be seen as suggesting that designing reliable world models is critically bounded by a trade-off between accuracy and computational tractability. 
Interestingly, this trade-off can be partially circumvented by recognising that effective world models only need to incorporate variables that make a difference for the AI's actions, and these variables only require a granularity that is sufficient to simulate their dynamics.

To illustrate this idea, consider how one could choose to build a world model to sandbox a simple robot. 
Although one could in principle design a simulation that includes the quantum dynamics of the whole planet, such a simulation would not only be computationally unfeasible but also unnecessary to answer most questions of interest. 
Indeed, such a world model would likely be too spatially extended (by including regions of the planet that are inaccessible to the robot) and have too much resolution (by including quantum effects for a fundamentally classical robot).
To avoid this, one could instead design a more parsimonious world model that factors out indistinguishable properties from the robot's perspective, focusing on the agent's `interface' including, for instance, sensorimotor contingencies~\citep{o2001sensorimotor, baltieri2017active, baltieri2019generative, tschantz2020learning, mannella2021active} or task-relevant information~\citep{zhang2021learning}.

Related questions have been extensively investigated in the philosophy of mind and cognitive (neuro)science literature, and more recently in reinforcement learning. 
These works suggest an important insight: while an agent's actions turn into outcomes through the mediation of the external world, the agent lacks direct access to the world's `true nature' and only interacts with it via its inputs and outputs~\citep{clark2013whatever,seth2018being}. 
This notion is illustrated by the classical \emph{`brain in a vat'} thought experiment, which proposes that if an organism's brain were to be placed in a vat, and a computer used to read the brain's output signals and generate plausible sensory signals, then the brain may not be able to tell it is in fact in a vat.\footnote{The modern form of this thought experiment is due to~\cite{putnam1981reason}, but has roots in Descartes' `evil demon'~\citep{descartes1641} and Plato's cave allegory \citep{plato375} while serving as inspiration for popular media such as \textit{The Matrix} movies.} 
This thought experiment suggests that an ideal world model should depend only on three elements: the set of possible actions of the agent $\mathcal{A}$, the set of possible outcomes affecting the agent $\mathcal{Y}$,\footnote{The outcome may be a combination of a quantity observable by the agent and a reward signal, so that $\mathcal{Y} = \mathcal{O}\times \mathbb{R}$.} and the statistical relationship between action sequences and outcomes. 
In fact, it should be possible --- at least in principle --- to build a compressed representation of the `effective world' of an AI agent that cannot be distinguished from a full simulation, irrespectively of how smart or powerful it may be.

These ideas can be operationalised using principles from computational mechanics~\citep{crutchfield1994calculi,crutchfield2012between}, which reveal how observable processes can be generated by multiple data-generating procedures (see~\autoref{app:example} for an example). 
Embracing this multiplicity leads to a perspective that we describe as \emph{`AI in a vat'}, which posits that designers should not focus on a single world model, but instead should (i) consider the class of all world models that are indistinguishable from the AI agent's perspective, (ii) characterise their properties, and then (iii) choose one depending on specific priorities. 
After setting some formal foundations in \autoref{sec:formal_foundations}, the remainder of this work explores how this approach reveals key design choices and procedures to construct optimal models related to different ways of using world models to pursue AI safety and alignment (see~\autoref{fig:diagram}):
\begin{itemize}
    \item \textit{Computational efficiency} (\autoref{sec:reduction}): sandboxing agents to evaluate their behaviour or provide formal guarantees about their capabilities~\citep{dalrymple2024towards} using a minimal amount of resources.
    \item \textit{Forward interpretability} (\autoref{sec:epistemic_worlds}): building models to study which features are learnable by agents, and how representations are encoded inside them~\citep{shai2025transformers}.
    \item \textit{Reverse interpretability} (\autoref{sec:backwards_interp}): deploying models that can be run backward in time to investigate the origins and tipping points that lead to specific --- desirable or undesirable --- outcomes.
\end{itemize}

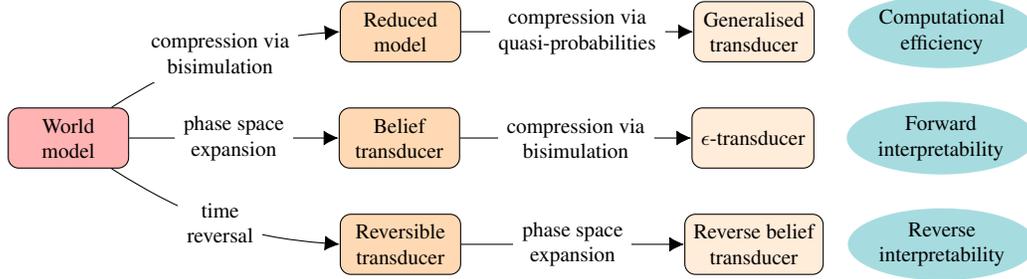
\begin{figure}[ht]
\centering
\begin{tikzpicture}[
    node distance=2cm, 
    every node/.style={fill=white,font=\footnotesize, minimum width=1em}, 
    align=center
    ]
  
  \node (world) [stop] {World\\model};
  
  \node (reduced) [start, above right of=world, xshift=3.0cm] {Reduced\\model};
  \node (predictive) [start, right of=world, xshift=2.4cm] {Belief\\transducer};
  \node (retro) [start, below right of=world, xshift=3.0cm] {Reversible\\transducer};
  
  \node (gentrans) [process, right of=reduced, xshift=2.7cm] {Generalised\\transducer};
  \node (etrans) [process, right of=predictive, xshift=2.7cm] {$\epsilon$-transducer};
  \node (rev_beliefs) [process, right of=retro, xshift=2.7cm] {Reverse belief\\transducer};
  
  \node (efficiency) [criteria, right of=gentrans, xshift=0.5cm] {Computational\\efficiency};
  \node (forward) [criteria, right of=etrans, xshift=0.5cm] {Forward\\interpretability};
  \node (reverse) [criteria, right of=retro, xshift=5.2cm] {Reverse\\interpretability};
  
  \draw[->]   (world)   to[bend left=14]  node {compression via\\bisimulation} (reduced.west);
  \draw[->]   (world)      -- node {phase space\\expansion} (predictive);
  \draw[->]   (world)   to[bend right=14] node {time\\reversal} (retro.west);
  \draw[->]   (reduced)    -- node {compression via\\ quasi-probabilities} (gentrans);
  \draw[->]   (predictive) -- node {compression via\\bisimulation} (etrans);
  \draw[->]   (retro) -- node {phase space\\expansion} (rev_beliefs);
  
  \end{tikzpicture}
  \caption{\centering 
  \small{Recommendations for building world optimal models, including implementations (boxes), transformations (arrows), and design criteria (ellipses).}}
  \label{fig:diagram}
\end{figure}

\section{Generating interfaces via transducers}
\label{sec:formal_foundations}

This section formalises the notions of `world model' and `interface'.
In the following, uppercase letters (e.g. $X,Y$) are used to denote random variables and lowercase (e.g. $x,y$) their realisations, calligraphic letters (e.g. $\mathcal{X},\mathcal{Y}$) denote the sets over which they take values, and the symbol $\Delta$ (as in $\Delta(\mathcal{X}),\Delta(\mathcal{Y})$) is used to denote the collection of all distributions over those sets. 
We use the shorthand notation $p(x|y) = \Pr(X=x|Y=y)$ to express probabilities when there is no risk of ambiguity, and assume that equalities of the form $p(x|y,z) = p(x|y)$ hold for all realisations that can take place with non-zero probability. 
$\mathbb{N}=\{0,1,2,\ldots\}$ corresponds to zero-based numbering, and we use the following abbreviations: $\bm x_{a:b}=(x_a,\dots,x_b)$, $\bm x_{:b} = \bm x_{0:b}$, $\bm x_{a:} = \bm x_{a:\infty}$, and $\bm x_: = \bm x_{0:\infty}$.

\subsection{World models}

We operationalise interfaces as descriptions of how actions turn into outcomes for a particular agent.
\begin{definition}
\label{def:interface}
    An \textbf{interface} $\mathcal{I}(\bm Y|\bm A)$ is a collection of distributions $\{p(\bm y_{:t}|\bm a_{:}), t\in\mathbb{N}\}$ corresponding to a stochastic process over outcome sequences $\bm y_: 
    \in\mathcal{Y}^{\mathbb{N}}$ conditioned on action sequences $\bm a_:
    \in\mathcal{A}^{\mathbb{N}}$.
    An interface is \textbf{anticipation-free} if $p(\bm y_{:t}|\bm a_{:}) = p(\bm y_{:t}|\bm a_{:t})$ for all $t\in\mathbb{N}$.
\end{definition}
Essentially, an interface describes a semi-infinite stochastic process \citep{kallenberg1997foundations, loomis2023topology} for each action sequence $\bm a_{:}$ while being agnostic to the agent's computational capabilities, architecture, or internal functioning. 
Interfaces can be constructed from an underlying world model that specifies how actions turn into outcomes. 
Next, we introduce a general notion of a world model in terms of statistical sufficiency (\autoref{app:sufficient_stats}), and use $h_t = (a_t,y_t)$ so that $\bm h_{:t}$ denotes the joint history of the interface up to time $t$.

\begin{definition}
\label{def:world_model}
    A \textbf{world model} for an interface $\mathcal{I}(\bm Y|\bm A)$ is a collection of distributions $\{p(\bm s_{:t}|\bm h_{:}),t\in\mathbb{N}\}$ corresponding to a 
    stochastic process over state sequences $\bm s_: \coloneqq (s_0,s_1,\dots)\in\mathcal{S}^{\mathbb{N}}$ that satisfies
    \begin{equation}
    \label{eq:cond_world_model}
    (1)~p(y_{t}|\bm s_{:t},\bm a_{:t},\bm y_{:t-1})
    =p(y_{t}|s_{t},a_{t})
    \quad\text{and}\quad
    (2)~p(\bm s_{:t}| \bm h_{:t-1},\bm a_{t:})
    = p(\bm s_{:t}| \bm h_{:t-1})
    \quad\forall t\in\mathbb{N}.
    \end{equation}
\end{definition}
 
Thus, world models are candidate mechanisms for implementing the statistical relationships between actions and outcomes, taking the form of auxiliary processes $S_t$ that encapsulate relevant information between past events and present outcomes (condition~1) while guaranteeing time's arrow so future actions cannot affect previous world states or outcomes (condition~2). 
We may informally denote a world model simply by $S_t$ when it is unambiguous from context. 
This definition includes models of the `external world' such as \emph{partially observed Markov decision processes} (POMDPs), as well as their `epistemic' counterparts, \emph{belief MDPs}~\citep{kaelbling1998planning}, as explained in~\autoref{sec:transducer_types}. 
The unifying property of all world models is presented next (proof in~\autoref{app:proof_of_factorisation_forward}).

\begin{lemma}
\label{lemma:world_model}
A process $S_t$ is a world model for an anticipation-free interface $\mathcal{I}(\bm Y|\bm A)$ if and only if 
\vspace{-3pt}
\begin{align}
\label{eq:world_model}
    p(\bm y_{:t},\bm s_{:t+1}|\bm a_{:})
    = 
    p(s_0) \prod_{\tau=0}^{t}
    p(y_\tau|s_{\tau},a_{\tau}) p(s_{\tau+1}|\bm s_{:\tau},\bm h_{:\tau}),
    \quad
    \forall t\in\mathbb{N}.
    \end{align}
\end{lemma}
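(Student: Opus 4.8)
The plan is to prove both directions by expanding the joint $p(\bm y_{:t},\bm s_{:t+1}|\bm a_:)$ with the chain rule in the natural time order $s_0,y_0,s_1,\dots,s_t,y_t,s_{t+1}$ and then collapsing the factors using the two conditions of \autoref{def:world_model} together with anticipation-freeness. Everything hinges on one auxiliary observation that I would isolate first: for each $\tau\in\mathbb{N}$ the joint $p(\bm y_{:\tau},\bm s_{:\tau+1}|\bm a_:)$ depends on the action sequence only through $\bm a_{:\tau}$. I expect this to be the main obstacle, because it is what licenses discarding the ``future'' actions $\bm a_{\tau+1:}$ that the chain rule keeps dragging along, and it is the single place where both hypotheses are genuinely used. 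To prove it I would write the joint as $p(\bm s_{:\tau+1}|\bm y_{:\tau},\bm a_:)\,p(\bm y_{:\tau}|\bm a_:)$: the second factor drops its dependence on $\bm a_{\tau+1:}$ by anticipation-freeness, while the first, after noting that conditioning on $(\bm y_{:\tau},\bm a_:)$ is the same as conditioning on $(\bm h_{:\tau},\bm a_{\tau+1:})$, collapses to $p(\bm s_{:\tau+1}|\bm h_{:\tau})$ by condition (2) at index $\tau+1$.

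For the forward direction I would set $Q_\tau \coloneqq p(\bm y_{:\tau-1},\bm s_{:\tau}|\bm a_:)$, so that $Q_0 = p(s_0|\bm a_:)$ and $Q_{t+1}$ is the left-hand side of \eqref{eq:world_model}, and telescope $Q_{t+1} = Q_0\prod_{\tau=0}^{t} Q_{\tau+1}/Q_\tau$. Each ratio equals $p(y_\tau,s_{\tau+1}|\bm s_{:\tau},\bm y_{:\tau-1},\bm a_:)$ by definition; by the auxiliary claim numerator and denominator depend only on $\bm a_{:\tau}$ and $\bm a_{:\tau-1}$, so I may replace $\bm a_:$ by $\bm a_{:\tau}$ in the ratio. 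Factoring it into an outcome term and a state term and relabelling $(\bm a_{:\tau},\bm y_{:\tau})=\bm h_{:\tau}$, condition (1) reduces the outcome term to $p(y_\tau|s_\tau,a_\tau)$ while the state term is already $p(s_{\tau+1}|\bm s_{:\tau},\bm h_{:\tau})$; condition (2) at index $0$ gives $Q_0=p(s_0)$. Reassembling the telescoped product is exactly \eqref{eq:world_model}.

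For the reverse direction I would assume \eqref{eq:world_model} for all $t$ and read structure off the right-hand side. Since the largest action index appearing there is $t$, the joint $p(\bm y_{:t},\bm s_{:t+1}|\bm a_:)$ carries no dependence on $\bm a_{t+1:}$; marginalising over all states then yields $p(\bm y_{:t}|\bm a_:)=p(\bm y_{:t}|\bm a_{:t})$, i.e.\ the interface is anticipation-free. Condition (1) comes from dividing \eqref{eq:world_model} at $t$ by the same identity at $t-1$: the left-hand ratio is $p(y_t,s_{t+1}|\bm s_{:t},\bm y_{:t-1},\bm a_:)$, the right-hand ratio is $p(y_t|s_t,a_t)\,p(s_{t+1}|\bm s_{:t},\bm h_{:t})$, and summing out $s_{t+1}$ leaves $p(y_t|\bm s_{:t},\bm y_{:t-1},\bm a_:)=p(y_t|s_t,a_t)$. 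Condition (2) comes from \eqref{eq:world_model} at level $t-1$: dividing it by its own marginal over $\bm s_{:t}$ gives $p(\bm s_{:t}|\bm y_{:t-1},\bm a_:)$, and since both are functions of $\bm a_{:t-1}$ alone this equals $p(\bm s_{:t}|\bm h_{:t-1})$, independent of $\bm a_{t:}$. Throughout, all ratios are taken on positive-probability realisations, in line with the paper's standing convention, and the fact that each collapsed conditional given $\bm a_:$ is constant in the future actions is precisely the sense in which conditioning ``on $\bm a_{:t}$'' is to be read for exogenous actions.
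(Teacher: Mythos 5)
Your proposal is correct and takes essentially the same route as the paper's own proof: your isolated auxiliary claim (that $p(\bm y_{:\tau},\bm s_{:\tau+1}|\bm a_{:})$ depends on actions only through $\bm a_{:\tau}$, via anticipation-freeness for the outcome factor and condition (2) at index $\tau+1$ for the state factor) is exactly the paper's opening chain of equalities, your telescoped $Q_\tau$ product matches its recursive expansion with condition (1) splitting each factor, and your ratio arguments for the converse mirror its verification of properties (1) and (2), including the observation that $\bm a_{t:}$ simply does not appear in the expression for $p(\bm s_{:t}|\bm h_{:t-1})$. The only difference is cosmetic: you additionally re-derive anticipation-freeness from \autoref{eq:world_model} in the reverse direction, a harmless strengthening the paper leaves implicit in its hypothesis.
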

\vspace{-1pt}
Thus, world models let us express interfaces in terms of probabilistic graphical models~\citep{koller2009probabilistic}. 
Among other things, \autoref{eq:world_model} can be used to generate outcomes for given sequences $\bm a_{:t}$ and $\bm s_{:t}$ by sampling 
$p(\bm y_{:t}|\bm s_{:t},\bm a_{:t}) = \prod_{\tau=0}^t p(y_\tau|s_\tau,a_\tau)$. 
In this sense, we say that the world model $S_t$ \textbf{\textit{generates}} the interface $\mathcal{I}(\bm Y|\bm A)$, and that the graphical model outlined in \autoref{eq:world_model} establishes a \textbf{\textit{presentation}} of the interface. These ideas are illustrated by an example in~\autoref{app:example}.

\subsection{Transducers}
\label{sec:transducers}

Sampling sequences of world model's states can be highly non-trivial if their dynamics are non-Markovian. 
This issue can be avoided by restricting ourselves to building world models using \emph{transducers}~\citep{barnett2015computational}, a computational structure that is introduced next.

\begin{definition}
    \label{def:transducer}
    A \textbf{transducer} is a tuple $\big(\mathcal{S}, \mathcal{A}, \mathcal{Y}, \kappa, \dist{p} \big)$, where $\mathcal{S}$ is a set of memory states, $\mathcal{A}$ and $\mathcal{Y}$ are sets of inputs and outputs, $\kappa:\mathcal{A}\times\mathcal{S}\times\mathbb{N}\to\Delta(\mathcal{Y}\times\mathcal{S})$ is a Markov kernel of the form 
    $\{\kappa_\tau(y,\tilde{s}|a,s): s,\tilde{s} \in \mathcal{S}, y \in \mathcal{Y},a \in \mathcal{A}, \tau\in \mathbb{N}\}$, 
    and $\dist{p}\in\Delta(\mathcal{S})$ is an initial distribution over states. 
\end{definition}

If a transducer's memory can only take $|\mathcal{S}|=n$ different states, then their transitions can be described via substochastic matrices $T_\tau^{(y|a)}$ of the form
\begin{align}
\label{eq:transducer_matrices}
        T_\tau^{(y|a)} \coloneq \sum_{i=1}^n\sum_{j=1}^n 
        \kappa_\tau(y,s_i|a,s_j) \bm e_{i} \bm e_j^\intercal~,
\end{align}
where $\bm e_k$ is a binary vector with a 1 at the $k$-th position and zeros elsewhere. 
These transducers are also known as stochastic automata~\citep{claus2013stochastische,cakir2021theory}, generalising deterministic automata~\citep{minsky1967computation} by using stochastic transitions to generate outputs and update their state. 
Running a transducer generates an interface $\mathcal{I}(\bm Y|\bm A)$ given by its inputs and outputs according to 
\begin{equation}
    p(\bm y_{:t}\bm s_{:t+1}|\bm a_{:}) 
    = p(s_0) \prod_{\tau=0}^{t}\kappa_\tau(y_\tau, s_{\tau+1}|a_\tau,s_{t}),
    \label{eq:transducer}
\end{equation}
providing a graphical model that can be used to simulate the interface (see \autoref{fig:some_fig}).
Comparing \autoref{eq:transducer} with \autoref{lemma:world_model} let us to formalise this fact as follows.

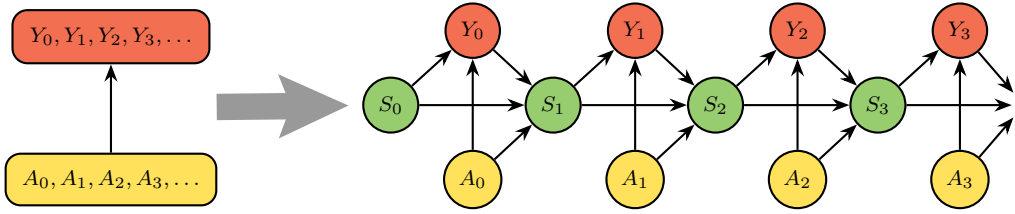
\begin{figure}[ht]
\centering
\begin{tikzpicture}[
    every node/.style={font=\footnotesize, minimum width=0.5em,thick, minimum size=0.3cm}, 
    state/.style={circle, draw, fill=YellowGreen},
    output/.style={circle, draw, fill=Red!70},
    input/.style={circle, draw, fill=Goldenrod!90},
    node distance=1.4cm,
    >=Stealth
    ]

\node[state] (S0) {$S_0$};
\node[state, right=of S0] (S1) {$S_1$};
\node[state, right=of S1] (S2) {$S_2$};
\node[state, right=of S2] (S3) {$S_3$};
\node[right=of S3] (S4) {};

\node[input,below right of=S0, xshift=0.1cm] (A0) {$A_0$};
\node[input,below right of=S1, xshift=0.1cm] (A1) {$A_1$};
\node[input,below right of=S2, xshift=0.1cm] (A2) {$A_2$};
\node[input,below right of=S3, xshift=0.1cm] (A3) {$A_3$};

\node[output,above right of=S0, xshift=0.1cm] (Y0) {$Y_0$};
\node[output,above right of=S1, xshift=0.1cm] (Y1) {$Y_1$};
\node[output,above right of=S2, xshift=0.1cm] (Y2) {$Y_2$};
\node[output,above right of=S3, xshift=0.1cm] (Y3) {$Y_3$};

\node[rectangle, draw, fill=Goldenrod!90, inner sep=0.22cm, rounded corners=0.2cm, left=3cm of A0] (Aseq) {$A_0, A_1, A_2, A_3, \ldots$};
\node[rectangle, draw, fill=Red!70, inner sep=0.23cm, rounded corners=0.2cm, above of=Aseq, yshift=0.5cm] (Yseq) {$Y_0, Y_1, Y_2, Y_3, \ldots$};

\draw[->,line width=0.28mm] (Aseq) -- (Yseq);

\draw[->,line width=0.28mm] (S0) -- (Y0);
\draw[->,line width=0.28mm] (A0) -- (Y0);
\draw[->,line width=0.28mm] (A0) -- (S1);
\draw[->,line width=0.28mm] (S0) -- (S1);
\draw[->,line width=0.28mm] (Y0) -- (S1);

\draw[->,line width=0.28mm] (S1) -- (Y1);
\draw[->,line width=0.28mm] (A1) -- (Y1);
\draw[->,line width=0.28mm] (A1) -- (S2);
\draw[->,line width=0.28mm] (S1) -- (S2);
\draw[->,line width=0.28mm] (Y1) -- (S2);

\draw[->,line width=0.28mm] (S2) -- (Y2);
\draw[->,line width=0.28mm] (A2) -- (Y2);
\draw[->,line width=0.28mm] (A2) -- (S3);
\draw[->,line width=0.28mm] (S2) -- (S3);
\draw[->,line width=0.28mm] (Y2) -- (S3);

\draw[->,line width=0.28mm] (S3) -- (Y3);
\draw[->,line width=0.28mm] (A3) -- (Y3);
\draw[->,line width=0.28mm] (A3) -- (S4);
\draw[->,line width=0.28mm] (S3) -- (S4);
\draw[->,line width=0.28mm] (Y3) -- (S4);
      
\draw[-{Stealth[width=23pt,length=26pt,inset=4pt]}, line width=10pt, color=gray!80, shorten <= 40pt, shorten >= 5pt] 
      ($(Aseq)!0.5!(Yseq)$) -- (S0);

\end{tikzpicture}
\caption{\centering 
\small{Illustration of an interface (left) and a possible unravelling of it via a presentation with a world model built from the memory states of a transducer (right), as given by~\autoref{eq:transducer}.}}
\label{fig:some_fig} 
\end{figure}

\begin{lemma}
\label{lemma:transducer_as_world_model}
     Transducers correspond to world models of anticipation-free interfaces whose dynamics satisfy the Markov condition
     $p(s_{\tau+1}|\bm s_{:\tau},\bm h_{:\tau})=
     p(s_{\tau+1}| s_{\tau},h_{\tau})$ for all $\tau\in\mathbb{N}$.
\end{lemma}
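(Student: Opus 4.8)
The plan is to establish the stated correspondence as a two-way implication by directly comparing the transducer factorisation in \autoref{eq:transducer} with the world-model factorisation of \autoref{lemma:world_model}, using the chain rule as the bridge between them. The central algebraic observation is that a joint kernel over an output and a next state splits as $\kappa_\tau(y,\tilde s\mid a,s)=\kappa_\tau(y\mid a,s)\,\kappa_\tau(\tilde s\mid y,a,s)$, and these two factors are precisely the two ingredients appearing in each term of the product in \autoref{eq:world_model}, namely the emission $p(y_\tau\mid s_\tau,a_\tau)$ and the (Markovian) transition $p(s_{\tau+1}\mid s_\tau,h_\tau)$. The whole argument therefore reduces to showing that this term-by-term identification is consistent and that the resulting objects are of the right type.

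For the direction from transducers to world models, I would start from \autoref{eq:transducer}, split each factor $\kappa_\tau(y_\tau,s_{\tau+1}\mid a_\tau,s_\tau)$ by the chain rule into an emission $\kappa_\tau(y_\tau\mid a_\tau,s_\tau)$ and a transition $\kappa_\tau(s_{\tau+1}\mid y_\tau,a_\tau,s_\tau)$, so that the joint acquires exactly the shape of \autoref{eq:world_model}. The task is then to verify that these kernel factors coincide with the genuine conditionals of the joint: reading the directed (chain) structure of the product — equivalently, summing out the future factors, each of which marginalises to one — shows that $p(y_\tau\mid s_\tau,a_\tau)=\kappa_\tau(y_\tau\mid a_\tau,s_\tau)$ and $p(s_{\tau+1}\mid\bm s_{:\tau},\bm h_{:\tau})=\kappa_\tau(s_{\tau+1}\mid y_\tau,a_\tau,s_\tau)$, with the latter depending on the past only through $(s_\tau,h_\tau)$, which is precisely the Markov condition. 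The same causal ordering shows that states and outcomes up to time $t$ are functions of inputs only up to time $t$, giving anticipation-freeness; \autoref{lemma:world_model} then certifies that $S_t$ is a world model, so conditions~(1) and~(2) of \autoref{def:world_model} hold automatically.

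For the converse, I would assume $S_t$ is a world model for an anticipation-free interface satisfying the Markov condition, invoke \autoref{lemma:world_model} to obtain \autoref{eq:world_model}, and use $p(s_{\tau+1}\mid\bm s_{:\tau},\bm h_{:\tau})=p(s_{\tau+1}\mid s_\tau,h_\tau)$ to collapse the transition factor. Recombining it with the emission $p(y_\tau\mid s_\tau,a_\tau)$ through the chain rule yields the single conditional $p(y_\tau,s_{\tau+1}\mid s_\tau,a_\tau)$, which I take to define the kernel $\kappa_\tau(y,\tilde s\mid a,s)$ together with $\dist{p}=p(s_0)$. A short check that each $\kappa_\tau$ is a bona fide Markov kernel — nonnegative and normalised over $(y,\tilde s)$ for every $(a,s)$ — recovers \autoref{eq:transducer} verbatim, exhibiting $S_t$ as a transducer.

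The main obstacle is not the algebra but making the conditional-independence reads in the forward direction rigorous: one must argue purely from the directed factorisation that marginalising the future factors leaves the emission and transition conditionals unchanged, and that the conditionals involved (such as $p(s_{\tau+1}\mid s_\tau,h_\tau)$) are well defined on every conditioning event of positive probability. Equal care is needed to confirm that anticipation-freeness and the factorisation hold uniformly across \emph{all} input sequences $\bm a_:$ rather than for a single fixed one, so that the correspondence between the kernel and the joint conditionals is genuinely input-independent.
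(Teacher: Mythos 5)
Your proposal is correct and takes essentially the same route as the paper, which justifies \autoref{lemma:transducer_as_world_model} precisely by comparing the transducer factorisation in \autoref{eq:transducer} against the characterisation in \autoref{lemma:world_model} --- i.e.\ the chain-rule splitting $\kappa_\tau(y,\tilde{s}\mid a,s)=\kappa_\tau(y\mid a,s)\,\kappa_\tau(\tilde{s}\mid y,a,s)$ in one direction and its recombination into a kernel in the other, exactly as you describe. The positivity and input-uniformity caveats you flag at the end are dispatched in the paper by its standing convention that conditional equalities such as $p(x\mid y,z)=p(x\mid y)$ hold on all realisations of non-zero probability, so they require no additional argument.
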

We may denote a transducer informally as $(S_t, A_t, Y_t)$ when it is unambiguous from the context, and describe its memory state $S_t$ as a world model when appropriate. 
Complementary characterisations of transducers in terms of sufficient statistics and information properties are provided in~\autoref{app:proof_transducer_faces}.

Transducers can be seen as reflecting the memory structure of interfaces. 
In particular, an interface $\mathcal{I}(\bm Y|\bm A)$ is said to be \textbf{\textit{memoryless}} if $S_t=0$ is a valid transducer, and is \textbf{\textit{fully} observable} if $S_t = Y_t$ yields a valid transducer --- including Markov decision processes (MDPs) as a main example. 
We next characterise the statistics of such interfaces, which clarifies that elaborate world models are required by interfaces with non-Markovian dynamics (proof in \autoref{app:interface_memory}).

\begin{lemma}
\label{lemma:interface_memory}
An interface is fully observable if and only if $p(y_{t+1}|\bm y_{:t},\bm a_:) = p(y_{t+1}| y_t,a_t)$, and is memoryless if and only if $p(\bm y_{:t}|\bm a_:) = \prod_{\tau=0}^t p(y_\tau|a_\tau)$. 
\end{lemma}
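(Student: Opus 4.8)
Both equivalences have the same shape: the defining structural property (memoryless means $S_t=0$ is a valid transducer, fully observable means $S_t=Y_t$ is a valid transducer) is unpacked through the transducer generating formula in \autoref{eq:transducer} and matched against the stated factorisation of $\mathcal{I}(\bm Y|\bm A)$. In each case the forward direction reads the factorisation off the generating formula for the prescribed state structure, while the reverse direction builds a transducer realising the factorisation and verifies it is valid via \autoref{lemma:transducer_as_world_model}.

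For the memoryless claim I would take $\mathcal{S}=\{0\}$, so that $p(s_0)=1$ and every $s_\tau$ is forced to $0$. Substituting into \autoref{eq:transducer} and summing out the (trivial) states gives $p(\bm y_{:t}|\bm a_:)=\prod_{\tau=0}^{t}\kappa_\tau(y_\tau,0|a_\tau,0)$. Marginalising all outputs except $y_\tau$ and using that each $\sum_{y}\kappa_{\tau}(y,0|a,0)=1$ identifies $\kappa_\tau(y_\tau,0|a_\tau,0)$ with $p(y_\tau|\bm a_:)$, which thereby depends on $a_\tau$ only; this yields the product form. Conversely, given $p(\bm y_{:t}|\bm a_:)=\prod_{\tau=0}^{t}p(y_\tau|a_\tau)$, I would define a single-state transducer with kernel $\kappa_\tau(y,0|a,0)\coloneqq p(y_\tau=y|a_\tau=a)$; \autoref{eq:transducer} then reproduces the product, and the state-Markov condition of \autoref{lemma:transducer_as_world_model} holds vacuously because $\mathcal{S}$ is a singleton, so $S_t=0$ is a valid transducer.

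For the fully observable claim I would set $\mathcal{S}=\mathcal{Y}$ and use a kernel that copies the state into the output and then transitions, $\kappa_\tau(y,s'|a,s)=\delta_{y,s}\,P_\tau(s'|s,a)$, which is exactly the structure making $S_t=Y_t$. Inserting this into \autoref{eq:transducer}, the Kronecker deltas collapse $s_\tau=y_\tau$ for $\tau\le t$ and leave a telescoping product $p(\bm y_{:t}|\bm a_:)=p(y_0)\prod_{\tau=1}^{t}P_{\tau-1}(y_\tau|y_{\tau-1},a_{\tau-1})$; taking the ratio $p(\bm y_{:t+1}|\bm a_:)/p(\bm y_{:t}|\bm a_:)$ cancels all but one factor and gives $p(y_{t+1}|\bm y_{:t},\bm a_:)=P_t(y_{t+1}|y_t,a_t)$. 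For the converse I would read the required transition kernel directly off the hypothesis by setting $P_\tau(s'|s,a)\coloneqq p(y_{\tau+1}=s'|y_\tau=s,a_\tau=a)$ and the initial state law from $p(y_0)$, then check that the induced transducer is valid through \autoref{lemma:transducer_as_world_model}.

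The step I expect to be most delicate is the bookkeeping of time indices and the initial step, rather than any substantial computation. In particular, the appearance of $a_t$ (not $a_{t+1}$) in $p(y_{t+1}|y_t,a_t)$ is precisely what forces the output to be a deterministic readout of the current state, so that the action only drives the \emph{next} transition; getting this alignment right, together with treating the base case $y_0$ (whose law comes from the initial state distribution and must be reconciled with anticipation-freeness), is where care is needed. I would also verify at each construction that the proposed kernels are genuine Markov kernels and that the state-Markov validity condition of \autoref{lemma:transducer_as_world_model} is met, which is immediate for the singleton state space and a short check for the output-copying state space.
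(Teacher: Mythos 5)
Your proof is correct, and for the memoryless half it is essentially the paper's own argument: the paper likewise observes that with a singleton state space \autoref{eq:transducer} collapses to $p(\bm y_{:t}|\bm a_:)=\prod_{\tau=0}^t \kappa_\tau(y_\tau,0|a_\tau,0)$, so the product factorisation holds iff $S_t=0$ yields a transducer presentation. For the fully observable half your route differs in execution: the paper disposes of it in one line by applying condition (1) of \autoref{lemma:transducer_faces} with $S_t=Y_t$, reading off $p(y_\tau,y_{\tau+1}|\bm y_{:\tau},\bm a_{:\tau})=p(y_{\tau+1}|y_\tau,a_\tau)$ as exactly the transducer condition, whereas you make the kernel structure explicit, noting that $\kappa_\tau(y,s'|a,s)=\delta_{y,s}P_\tau(s'|s,a)$ is the general form of any transducer with $S_t=Y_t$, derive the telescoping product $p(\bm y_{:t}|\bm a_:)=p(y_0)\prod_{\tau=1}^t P_{\tau-1}(y_\tau|y_{\tau-1},a_{\tau-1})$, and recover the conditional by taking ratios, verifying the converse construction against \autoref{lemma:transducer_as_world_model}. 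The two arguments are logically equivalent --- condition (1) of \autoref{lemma:transducer_faces} is precisely what your direct computation re-derives in this special case --- but yours is self-contained (it bypasses the appendix characterisation) at the cost of more bookkeeping, and it makes explicit the copy-kernel structure that the paper leaves implicit. One detail is worth recording: in the converse direction the hypothesis $p(y_{t+1}|\bm y_{:t},\bm a_:)=p(y_{t+1}|y_t,a_t)$ constrains only $y_{t+1}$ for $t\geq 0$, while a transducer's fixed initial distribution forces $p(y_0|\bm a_:)=p(y_0)$; this base-case requirement is not delivered by the stated condition and must be assumed (it does not even follow from anticipation-freeness, which only gives $p(y_0|\bm a_:)=p(y_0|a_0)$). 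You explicitly flag this as the delicate point, whereas the paper's one-line proof passes over it silently, so this is a gap in the statement as phrased shared by both proofs rather than a defect specific to yours.
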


Note that if $p(\bm y_{:t}|\bm a_:) = p(\bm y_{:t})$, corresponding to `contemplative' or passive agents that do not act but only sense, then transducers reduce to hidden Markov models~\citep{ephraim2002hidden}.

\subsection{General classes of transducers}
\label{sec:transducer_types}
Transducers use their kernels $\kappa_\tau$ to generate $(s_{t+1},y_t)$ jointly from $(s_t,a_t)$, corresponding to what has been  described as a `Mealy' machines~\citep{virgo2023unifilar,bonchi2024effectful}. 
Simpler computational structures can be obtained by imposing constraints in the kernel as follows: 
\begin{itemize}
    \item \textbf{\textit{Input-Moore transducers}} generate outputs ignoring the current input, corresponding to kernels of the form
        $\kappa_\tau(y,\tilde{s}| a, s) = \mu_\tau(y | s) \nu_\tau(\tilde{s}| y,a, s)$.
    \item \textbf{\textit{Output-Moore transducers}}
    update their state without considering the current output, corresponding to kernels of the form
        $\kappa(y,\tilde{s}| a, s) = \mu_\tau(y | a, s) \nu_\tau(\tilde{s}| a, s)$. 
    \item 
    \textbf{\textit{I-O Moore transducers}} satisfy both previous conditions, corresponding to kernels of the form
        $\kappa(y,\tilde{s}| a, s) = \mu_\tau(y | s) \nu_\tau(\tilde{s}| a, s)$.
\end{itemize}
Based on ideas from automata theory~\citep{lee2017introduction}, 
input-Moore transducers serve as models for interfaces that satisfy the stronger anticipation-free condition $p(\bm y_{:t}|\bm a_{:}) = p(\bm y_{:t}|\bm a_{:t-1})$, corresponding to scenarios where $Y_t$ takes place before $A_t$, thus reflecting a time-indexing convention. 
In contrast, output-Moore transducers build on the hidden Markov models literature~\citep{riechers2016exact}, and are used to represent (non-quantum) physical processes whose evolution is not affected by the observations made by the agent --- as opposed to epistemic processes such as belief MPDs, in which the opposite happens (see~\autoref{sec:epistemic_worlds}). 
POMDPs can be shown to be examples of either output-Moore or I-O Moore transducers depending on the specific definition, as explained in~\autoref{app:POMDPs}.

\section{Minimal world models}
\label{sec:reduction}

After setting the formal foundations of world models, and establishing transducers as a natural way to construct them, we now investigate how to build \emph{minimal} world models.

\subsection{Reducing world models}
\label{sec:minimality}

We begin by showing that all interfaces have at least one transducer presentation, and hence one can focus on transducers without loss of generality (see the proof in \autoref{app:proof_funes}).

\begin{lemma}\label{lemma:funes}
The world model $S_t = \bm H_{:t-1}$ yields a transducer that generates the interface $\mathcal{I}(\bm Y|\bm A)$.
\end{lemma}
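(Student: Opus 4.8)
The plan is to exhibit the ``remember-everything'' transducer explicitly and then verify each requirement in turn. I would take the state space to be the set of all finite joint histories $\mathcal{S} = \bigcup_{n\geq 0}(\mathcal{A}\times\mathcal{Y})^n$ (a countable set, which \autoref{def:transducer} permits), with initial distribution $\dist{p}$ concentrated on the empty history $\varnothing$. The defining feature is that the state update is deterministic concatenation: reading $a_\tau$ and emitting $y_\tau$ from state $s_\tau=\bm h_{:\tau-1}$ produces the next state $s_{\tau+1}=(\bm h_{:\tau-1},a_\tau,y_\tau)=\bm h_{:\tau}$. Accordingly I would set
\begin{equation*}
\kappa_\tau(y,\tilde{s}\mid a,s) = p(y\mid s,a)\,\mathbf{1}[\tilde{s}=(s,a,y)],
\end{equation*}
where for a length-$\tau$ state $s=\bm h_{:\tau-1}$ the emission law is the interface conditional $p(y\mid \bm h_{:\tau-1},a)$ (states of other lengths are unreachable at time $\tau$, so their kernels may be set arbitrarily).

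The first task is to confirm this emission law is well-defined, since the interface specifies only $\{p(\bm y_{:t}\mid \bm a_:)\}$. I would write $p(y_\tau\mid \bm y_{:\tau-1},\bm a_{:\tau})$ as the Bayes ratio $p(\bm y_{:\tau}\mid \bm a_{:\tau})/p(\bm y_{:\tau-1}\mid \bm a_{:\tau})$ and use anticipation-freeness to note that conditioning on $\bm a_{:\tau}$ rather than $\bm a_{:\tau-1}$ leaves the marginal of $\bm y_{:\tau-1}$ unchanged, so the ratio is unambiguous and the kernel is a genuine element of $\Delta(\mathcal{Y}\times\mathcal{S})$.

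Next I would verify the three structural properties. The transducer Markov condition of \autoref{lemma:transducer_as_world_model} is immediate: since $s_{\tau+1}=(s_\tau,h_\tau)$ is a deterministic function of $(s_\tau,h_\tau)$, we have $p(s_{\tau+1}\mid \bm s_{:\tau},\bm h_{:\tau})=\mathbf{1}[s_{\tau+1}=(s_\tau,h_\tau)]=p(s_{\tau+1}\mid s_\tau,h_\tau)$. For world-model condition~(1) I would observe that $s_t=\bm h_{:t-1}$ already encodes the entire past while $s_0,\dots,s_{t-1}$ are its prefixes, so conditioning on $(\bm s_{:t},\bm a_{:t},\bm y_{:t-1})$ collapses to conditioning on $(\bm h_{:t-1},a_t)$, giving $p(y_t\mid \bm s_{:t},\bm a_{:t},\bm y_{:t-1})=p(y_t\mid s_t,a_t)$ by construction of $\kappa$. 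Condition~(2) is trivial because $\bm s_{:t}$ is a deterministic function of $\bm h_{:t-1}$, so both sides equal the same indicator regardless of $\bm a_{t:}$.

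Finally I would check that the transducer generates the original interface. Substituting the kernel into the transducer factorisation (\autoref{eq:transducer}) and marginalising over states, the deterministic transitions pin down a single admissible trajectory and the state sum collapses, leaving $p(\bm y_{:t}\mid \bm a_:)=\prod_{\tau=0}^{t}p(y_\tau\mid \bm y_{:\tau-1},\bm a_{:\tau})$, which by the chain rule together with anticipation-freeness equals $p(\bm y_{:t}\mid \bm a_{:t})=p(\bm y_{:t}\mid \bm a_:)$. The construction is conceptually straightforward --- the Funesian state that forgets nothing --- so the only genuine care is bookkeeping around the anticipation-free hypothesis, ensuring that conditioning on present and future actions never contaminates the law of past outcomes; I would isolate this as the short preliminary observation of the second paragraph before the routine verifications.
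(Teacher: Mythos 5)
Your proposal is correct, and it rests on the same two pillars as the paper's own argument: the deterministic concatenation update $s_{t+1}=(s_t,h_t)$, and the anticipation-freeness step identifying the emission law as the Bayes ratio $p(\bm y_{:t}\mid\bm a_{:t})/p(\bm y_{:t-1}\mid\bm a_{:t})$, so that $p(y_t\mid\bm h_{:t-1},\bm a_{t:})=p(y_t\mid s_t,a_t)$. The packaging, however, differs. The paper verifies the single joint criterion of condition (1) in \autoref{lemma:transducer_faces}, namely $p(s_{t+1},y_t\mid\bm s_{:t},\bm h_{:t-1},\bm a_{t:})=p(s_{t+1},y_t\mid s_t,a_t)$, which by that lemma delivers the world-model properties, Markovianity, and generation of the interface all at once; you instead exhibit the automaton explicitly (countable state space of finite histories, point mass on the empty history, kernel $\kappa_\tau(y,\tilde s\mid a,s)=p(y\mid s,a)\,\mathbf{1}[\tilde s=(s,a,y)]$), check the conditions of \autoref{def:world_model} and the Markov condition of \autoref{lemma:transducer_as_world_model} separately, and confirm generation by directly telescoping \autoref{eq:transducer}, where the deterministic transitions collapse the state sum. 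Your route buys self-containedness and makes explicit two points the paper leaves implicit: the concrete state space with its initial distribution, and the well-definedness of the kernel from interface data alone, where anticipation-freeness guarantees the Bayes ratio is unambiguous --- this is exactly the paper's manipulation $p(y_t\mid\bm h_{:t-1},\bm a_{t:})=p(\bm y_{:t}\mid\bm a_{:t})/p(\bm y_{:t-1}\mid\bm a_{:t})$, which you correctly isolate as the one place the hypothesis is load-bearing. The paper's route buys brevity, since conditioning on future actions $\bm a_{t:}$ is absorbed once into the criterion of \autoref{lemma:transducer_faces} rather than handled across your separate verifications of conditions (1), (2), and the final generation check; note that your condition (2) check, trivial by determinism of $\bm S_{:t}$ given $\bm H_{:t-1}$, is precisely what covers the future-action conditioning that your condition (1) verification alone does not address, so no gap remains.
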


This world model is far from parsimonious, resembling Borges' character \textit{Funes the memorious} in its inability to forget. 
This suggests the importance of `reducing' world models. 
To formalise this, we extend the notion of MDP homomorphism~\citep{ravindran2003smdp} to transducers as follows.

\begin{definition}
\label{def:minimal_world}
A \textbf{homomorphism} between transducers $(S_t, A_t, Y_t)$ and $(S'_t, A'_t, Y'_t)$ is given by the mappings $\langle \phi:\mathcal{S}\to\mathcal{S}', f:\mathcal{Y}\to\mathcal{Y}', g:\mathcal{A}\to\mathcal{A}' \rangle$ satisfying two compatibility conditions:
\begin{enumerate}
    \item[(i)] $\Pr\!\big(Y'_t=y'| S'_t=\phi(s),A'_t=g(a)\big) = \sum_{y\in f^{-1}(y')}\Pr\!\big(Y_t=y| S_t=s,A_t=a\big) $.
    \item[(ii)] $\Pr\!\big(S'_{t+1}=s'| S'_t=\phi(s),H'_t=(f(y),g(a))\big) 
    \!=\! 
    \sum_{s\in \phi^{-1}(s')}\Pr\!\big(S_{t+1}\!=s| S_t=\tilde{s},H_t=(y,a)\big)$ and 
    $\Pr\!\big(S'_{0}=s'\big) 
    = \sum_{s\in \phi^{-1}(s')}\Pr\!\big(S_{0}=s\big)$.
\end{enumerate}
    A \textbf{reduction} of a world model $S_t$ into a world model $S'_t$ is a homomorphism $\langle \phi, f, g \rangle$ between the transducers $(S_t, A_t, Y_t)$ and $(S'_t, A_t, Y_t)$, where $f:\mathcal{Y}\to\mathcal{Y}$ and $g:\mathcal{A}\to\mathcal{A}$ are identity mappings and $\phi$ is a surjective map $\phi: S_t \to S'_t$.
    Two transducers are \textbf{isomorphic} if they are reductions of each other, and a transducer is \textbf{minimal} if all its reductions are isomorphic to itself.
\end{definition}

A world reduction can be informally described as a coarse-graining $\phi$ between the memory states of two transducers of the same interface. 
Condition (i) above ensures that outcomes are generated with the same statistics, and (ii) that the resulting world model is still Markovian --- as can be confirmed by relating it with the notion of `lumpability' of Markov chains~\citep{tian2006lumpability}.  These properties guarantee that reductions do not distort the corresponding interface (proof in~\autoref{app:proof_homo_generate}).

\begin{lemma}
    \label{lemma:homo_generate}
    A world model reduction yields a transducer presentation of the original transducer.
\end{lemma}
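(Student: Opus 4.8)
The plan is to show that running the reduced transducer $(S'_t, A_t, Y_t)$ reproduces exactly the finite-horizon interface conditionals $p(\bm y_{:t}\mid \bm a_{:t})$ generated by the original transducer $(S_t, A_t, Y_t)$ through \autoref{eq:transducer}. Since both transducers present anticipation-free interfaces by \autoref{lemma:transducer_as_world_model}, matching these conditionals for all $t$ suffices to conclude that the reduced transducer is a presentation of the same interface, which is the claim. The natural device is the forward ``filtering'' recursion obtained by marginalising \autoref{eq:transducer} over intermediate states: writing $\alpha_t(s)\coloneqq p(\bm y_{:t-1}, s_t=s \mid \bm a_{:t-1})$ for the sub-distribution over the current state consistent with the realised prefix, one has $\alpha_0=\dist{p}$, the update $\alpha_{t+1}(s_{t+1})=\sum_{s_t}\alpha_t(s_t)\,\kappa_t(y_t,s_{t+1}\mid a_t,s_t)$, and the readout $p(\bm y_{:t}\mid \bm a_{:t})=\sum_{s}\alpha_{t+1}(s)$. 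I define the analogous $\alpha'_t$ for the reduced transducer using $\kappa'$ and $\dist{p}'$.

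First I would prove by induction on $t$ the pushforward identity $\alpha'_t(s')=\sum_{s\in\phi^{-1}(s')}\alpha_t(s)$ for every $s'\in\mathcal{S}'$ and every realised prefix $\bm y_{:t-1},\bm a_{:t-1}$. The base case $t=0$ is exactly the initial-distribution clause of homomorphism condition (ii), namely $\dist{p}'(s')=\sum_{s\in\phi^{-1}(s')}\dist{p}(s)$. For the inductive step I factor both kernels into their output and transition parts, $\kappa_t(y,\tilde s\mid a,s)=\Pr(Y_t=y\mid S_t=s,A_t=a)\,\Pr(S_{t+1}=\tilde s\mid S_t=s,H_t=(y,a))$ and likewise for $\kappa'$, and aggregate the transition over a fibre $\phi^{-1}(\sigma')$: conditions (i) and (ii) together then yield the key identity $\sum_{\sigma\in\phi^{-1}(\sigma')}\kappa_t(y_t,\sigma\mid a_t,s)=\kappa'_t(y_t,\sigma'\mid a_t,\phi(s))$. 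Substituting this into $\sum_{\sigma\in\phi^{-1}(\sigma')}\alpha_{t+1}(\sigma)$, regrouping the outer sum over $s$ by the value $\phi(s)=\rho'$ (legitimate since the summand depends on $s$ only through $\phi(s)$), and applying the inductive hypothesis to each fibre sum $\sum_{s\in\phi^{-1}(\rho')}\alpha_t(s)=\alpha'_t(\rho')$ reproduces precisely the reduced update $\alpha'_{t+1}(\sigma')=\sum_{\rho'}\alpha'_t(\rho')\,\kappa'_t(y_t,\sigma'\mid a_t,\rho')$.

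Finally I would conclude by summing the pushforward identity over $s'$: because $\phi$ is surjective, the fibres $\{\phi^{-1}(s')\}_{s'\in\mathcal{S}'}$ partition $\mathcal{S}$, so $\sum_{s'}\alpha'_{t+1}(s')=\sum_{s}\alpha_{t+1}(s)$, i.e. $p'(\bm y_{:t}\mid\bm a_{:t})=p(\bm y_{:t}\mid\bm a_{:t})$ for all $t$. Anticipation-freeness of both interfaces then promotes this equality of finite-horizon conditionals to equality of the full interfaces.

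I expect the main obstacle to be the kernel-aggregation step, where conditions (i) and (ii)---which separately constrain the output marginal and the state-transition conditional---must be recombined into a single identity for the joint Mealy kernel. This requires the conditional $\Pr(S_{t+1}\mid S_t,Y_t,A_t)$ to be well defined, so the argument should be restricted to prefixes $\bm y_{:t},\bm a_{:t}$ of non-zero probability (outside which both sides of the target identity vanish and nothing need be verified), consistent with the excerpt's standing convention on conditioning. Moreover, the fact that condition (ii) expresses the fibre-aggregated original transition as a quantity depending on $s$ only through $\phi(s)$ is precisely the ``lumpability'' property the definition encodes, and flagging this connection is what ties the argument back to the remark following the definition.
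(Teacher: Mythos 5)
Your proof is correct, and it rests on the same underlying mechanism as the paper's proof in \autoref{app:proof_homo_generate}: aggregating the Mealy kernel over fibres of $\phi$, using condition (i) of \autoref{def:minimal_world} for the output factor and condition (ii) (including its initial-distribution clause) for the transition factor. The difference is in the decomposition. The paper performs a single path-space computation: it writes $p(\bm y_{:t},\bm s'_{:t+1}|\bm a_:)$ as a sum of the original joint over all fibres simultaneously, telescopes the product from $\tau=t$ down to $\tau=0$ using (i) and (ii) at each stage, and concludes via \autoref{lemma:world_model} that the coarse-grained process has the factorised form of \autoref{eq:world_model}. You instead isolate the one-step kernel identity $\sum_{\sigma\in\phi^{-1}(\sigma')}\kappa_t(y,\sigma|a,s)=\kappa'_t(y,\sigma'|a,\phi(s))$ as a standalone step and run an induction on the filtering sub-distributions $\alpha_t$ --- the classical lumpability argument. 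Your organisation is easier to verify (the paper's multi-index fibre sums are notationally delicate, reusing $\tau$ as both product index and outer summation index), and you make explicit the zero-probability caveat in factoring the joint kernel into output marginal and transition conditional, which the paper leaves to its standing convention on conditioning. What you give up is a slightly stronger by-product of the paper's computation: it establishes that the joint law of $(\bm Y_{:t},\phi(\bm S_{:t+1}))$ coincides with that generated by the reduced transducer --- i.e.\ the pushforward $\phi(S_t)$ \emph{is} the reduced state process --- whereas your induction tracks only the interface marginals $p(\bm y_{:t}|\bm a_{:t})$. Since the lemma asserts only that the reduction yields a presentation of the same interface, the marginal equality suffices and your proof is complete as stated.
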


The next two sections study different approaches to look for minimal world models.\footnote{Minimality can also be studied via the entropy of the world's dynamics. Interestingly, minimal entropy models may not coincide with the models with fewer states --- although the two coincide for predictive models~\citep{loomis2019strong}.}

\subsection{Reduction via bisimulation}
\label{sec:bisimulation}

A natural way to reduce a world model is via the notion of bisimulation, which is a way of merging states that behave equivalently~\citep{givan2003equivalence}. 
Here we leverage previous work on bisimulations for hidden Markov models \citep{jansen2012belief} to define bisimulations of transducers. 

\begin{definition}
\label{def:bisimulation}
For a given transducer with world model $S_t$ and kernel $\kappa_t$, a \textbf{bisimulation} is an equivalence relation $\mathcal{B}_t \subseteq \mathcal{S} \times \mathcal{S}$ such that $s,s'\in\mathcal{S}$ are equivalent if they satisfy two conditions:
\begin{enumerate}
    \item[(i)] $p_t(y|s,a) = p_t(y|s',a)$, where $p_t(y|s,a) = \sum_{s''\in\mathcal{S}} \kappa_t(y,s''|s,a)$.
    \item[(ii)] $p_t(C|s,a) = p_t(C|s',a)$ for all equivalence classes $C\subseteq\mathcal{S}$, where $p_t(C|s,a) = \sum_{y\in\mathcal{Y}}\sum_{s''\in C}\kappa_t(y,s''|s,a)$.
\end{enumerate}
\end{definition}

There is a direct correspondence between world model reductions (\autoref{def:minimal_world}) and bisimulations, as shown next by adapting \cite[Theorem 3]{taylor2008bounding} to our setup (proof in \autoref{app:proof_lemma_bisim_homo}).

\begin{proposition}
\label{lemma:bisim_homo}
    $\phi: S_t \to S'_t$ is a reduction of world models if and only if the equivalence relation it induces, with equivalence classes given by $\phi^{-1}(s')=\{s\in\mathcal{S}:\phi(s)=s'\}$, is a bisimulation.
\end{proposition}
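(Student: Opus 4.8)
The plan is to prove both implications by rewriting the reduction conditions of \autoref{def:minimal_world} and the bisimulation conditions of \autoref{def:bisimulation} as statements about a single object---the kernel $\kappa_t$---and then matching them. First I would record the dictionary that makes this possible. Since $f$ and $g$ are identities in a reduction, the homomorphism conditions collapse to constraints on $\phi$ alone. Writing $p_t(y|s,a)=\sum_{s''\in\mathcal{S}}\kappa_t(y,s''|a,s)$ for the output marginal and noting that $\Pr(S_{t+1}=s''|S_t=s,H_t=(y,a))=\kappa_t(y,s''|a,s)/p_t(y|s,a)$ whenever $p_t(y|s,a)>0$, reduction condition (i) says that the output marginal of the reduced transducer at $\phi(s)$ equals $p_t(y|s,a)$, while reduction condition (ii) becomes an aggregation identity for the output-conditioned transition summed over the fibre $\phi^{-1}(s')$.

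For the \emph{forward} direction, assume $\phi$ is a reduction and let $\mathcal{B}$ denote the equivalence relation whose classes are the fibres $\phi^{-1}(s')$. For any $s,\tilde{s}$ with $\phi(s)=\phi(\tilde{s})$, condition (i) ties both $p_t(\cdot|s,a)$ and $p_t(\cdot|\tilde{s},a)$ to the common reduced-transducer marginal at $\phi(s)$, which is exactly bisimulation condition (i). Weighting the reduction's condition (ii) by $p_t(y|s,a)$ and summing over $y$ collapses the output-conditioned transition back into the joint kernel and yields $p_t(C|s,a)=p_t(C|\tilde{s},a)$ for every class $C$, which is bisimulation condition (ii). Hence $\mathcal{B}$ is a bisimulation.

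For the \emph{converse}, assume $\mathcal{B}$ (with classes the fibres of $\phi$) is a bisimulation and construct the quotient transducer on $\mathcal{S}'=\mathcal{S}/\mathcal{B}$ with kernel $\kappa'_t(y,C'|a,C)=\sum_{\hat{s}\in C'}\kappa_t(y,\hat{s}|a,s)$ for a representative $s\in C$, together with the pushforward of $\dist{p}$ under $\phi$ as initial distribution. The content of the bisimulation conditions is precisely that this aggregate does not depend on the chosen representative $s\in C$, so $\kappa'$ is well defined; conditions (i) and (ii) of \autoref{def:minimal_world} and the initial-distribution condition then follow by direct substitution, and \autoref{lemma:homo_generate} certifies that the quotient is a genuine presentation of the original interface. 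Thus $\phi$ is a reduction.

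I expect the main obstacle to be the well-definedness step in the converse: showing that aggregating $\kappa_t$ over a target class gives the same value for every representative of the source class. Bisimulation condition (i) constrains only the output marginal and condition (ii) only the class-transition marginalised over $y$, whereas the homomorphism's condition (ii) concerns the transition conditioned on the realised output. Reconciling the two requires using condition (i) to control the normalisation $p_t(y|s,a)$ and exploiting the factorisation available in the cases of interest (e.g. output-Moore transducers for POMDPs), so that the per-output and per-class constraints combine into the per-output, per-class equality that well-definedness demands. The remaining identities are routine bookkeeping with the kernel.
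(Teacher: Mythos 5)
Your forward direction is correct and follows the paper's route: condition (i) of \autoref{def:minimal_world} transports output marginals across each fibre, and your step of weighting the per-output transition condition (ii) by $p_t(y|s,a)$ (legitimate because condition (i) makes these weights agree across representatives) and summing over $y$ recovers the $y$-marginalised class-transition equality of \autoref{def:bisimulation}. If anything you are more careful here than the paper, whose forward argument stops at the per-output class-transition equality and declares \autoref{def:bisimulation} satisfied without performing the marginalisation.

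The converse is where your proposal has a genuine gap --- and you have put your finger on exactly the right spot, but your proposed repair does not close it. As you observe, condition (ii) of \autoref{def:bisimulation} constrains only $\sum_{y}\sum_{s''\in C}\kappa_t(y,s''|s,a)$, whereas well-definedness of the quotient kernel (equivalently, homomorphism condition (ii), which conditions on $H_t=(y,a)$) requires the per-output quantity $\sum_{s''\in C}\kappa_t(y,s''|s,a)$ to be representative-independent. The marginalised conditions genuinely fail to imply this for Mealy kernels: take $\mathcal{Y}=\{0,1\}$, singleton classes $\{c_1\},\{c_2\}$ where $c_1$ emits $0$ forever and $c_2$ emits $1$ forever, and two states $s,s'$ each emitting $0$ and $1$ with probability $1/2$, with $s$ sending output $0$ to $c_1$ and output $1$ to $c_2$ while $s'$ does the opposite. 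Both conditions of \autoref{def:bisimulation} hold for $s\sim s'$, yet starting from $s$ the pair $(y_0,y_1)=(0,0)$ has probability $1/2$ while from $s'$ it has probability $0$, so $s$ and $s'$ generate different interfaces and no valid reduction can lump them. Your fix --- invoking the output-Moore factorisation --- only salvages the claim for the factorised subclass of \autoref{sec:transducer_types}, whereas \autoref{lemma:bisim_homo} quantifies over all transducers; so as written your converse does not prove the stated proposition. The correct repair is definitional rather than structural: condition (ii) must be read in the per-output form $\sum_{s''\in C}\kappa_t(y,s''|s,a)=\sum_{s''\in C}\kappa_t(y,s''|s',a)$ for every $y$ and every class $C$ (the labelled-chain lumpability condition), which is precisely what the paper's own converse silently uses when it asserts that $\sum_{s''\in[\tilde{s}]}\Pr(S_{t+1}=s''\mid S_t=s,H_t=(y,a))$ agrees for bisimilar $s,s'$. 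Under that reading your obstacle dissolves, the quotient kernel is well defined for arbitrary kernels, and the remainder of your converse (pushforward initial distribution, substitution into \autoref{def:minimal_world}, and \autoref{lemma:homo_generate} for faithfulness) goes through verbatim.
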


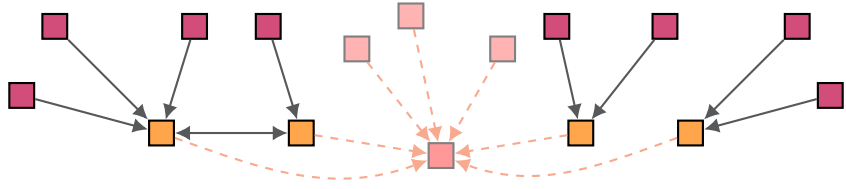
\begin{figure}
\centering
\begin{tikzpicture}[
    node distance=1.5cm,
    every node/.style={minimum size=0.33cm, thick,
        square/.style={rectangle, draw=black, fill=purple!70},
        sink/.style={rectangle, draw=black, fill=orange!70},
        sink/.style={rectangle, draw=black, fill=orange!70},
        square_neg/.style={rectangle, draw=black!50, fill=red!30},
        sink_neg/.style={rectangle, draw=black!50, fill=red!40}},
        normal_arrow/.style={draw=Black!80,thick},
        arrow_red/.style={draw=Red!40, dashed, thick},
]
    \node[sink] (center1) {};
    \node[sink, right=of center1] (center2) {};
    \node[square, left=of center1, yshift=0.5cm] (sq1) {};
    \node[square, above left=of center1] (sq2) {};
    \node[square, above left=of center2] (sq3) {};
    \node[square, above right=of center1] (sq4) {};
    
    \draw[<->,normal_arrow] (center1) to (center2);
    \draw[->,normal_arrow] (sq1) to (center1);
    \draw[->,normal_arrow] (sq2) to (center1);
    \draw[->,normal_arrow] (sq3) to (center1);
    \draw[->,normal_arrow] (sq4) to (center2);

    \node[sink_neg, right=of center2, yshift=-0.3cm] (center_neg_1) {};
    \node[square_neg, above right=of center_neg_1, xshift=-0.6cm] (cc1) {};
    \node[square_neg, above=of center_neg_1, xshift=-0.6cm, xshift=0.2cm] (cc2) {};
    \node[square_neg, above left=of center_neg_1, xshift=0.3cm] (cc3) {};
    
    \draw[->,arrow_red] (cc1) to (center_neg_1);
    \draw[->,arrow_red] (cc2) to (center_neg_1);
    \draw[->,arrow_red] (cc3) to (center_neg_1);
    \draw[->,arrow_red] (center1) to [out=-20,in=-160] (center_neg_1);
    \draw[->,arrow_red] (center2) to (center_neg_1);
    
    \node[sink, right=of center_neg_1, yshift=0.3cm] (center3) {};
    \node[sink, right=of center3, xshift=-0.4cm] (center4) {};
    
    \node[square, above left=of center3,xshift=1.1cm] (sq6) {};
    \node[square, above right=of center3,xshift=-0.3cm] (sq7) {};

    \draw[->,normal_arrow] (sq6) to (center3);
    \draw[->,normal_arrow] (sq7) to (center3);

    \node[square, above right=of center4] (sq8) {};
    \node[square, right=of center4, yshift=0.5cm] (sq9) {};
    
    \draw[->,normal_arrow] (sq8) to (center4);
    \draw[->,normal_arrow] (sq9) to (center4);

    \draw[->,arrow_red] (center4) to [out=-160,in=-20] (center_neg_1);
    \draw[->,arrow_red] (center3) to (center_neg_1);
    
\end{tikzpicture}
\caption{\centering \small{Illustration of the minimisation of world models. Purple boxes represent reducible models and orange boxes represent minimal ones, and arrows correspond to reductions. Red boxes are generalised models following quasi-probabilities, which (if allowed) establish global minima.}}
\end{figure}

Together with \autoref{lemma:homo_generate}, this result confirms that the bisimulation of a transducer yields another transducer presentation for the same interface. 
This has a simple and yet powerful implication: a full reduction of a given transducer can be attained by coarse-graining all bisimilar states.

There may be cases where bisimulations do not produce the most efficient world model that generates a given interface, since reducing a particular transducer usually does not lead to a global minimum.  
To investigate this claim, we consider a world model with $|\mathcal{S}|=n$ states and build vectors $\bm w(\bm h_{:t})\in\mathbb{R}^n$ containing the probabilities of generating $\bm y_{:t}$ given $\bm a_{:t}$ when starting from different world states, so that its $k$-th coordinate is 
$[\bm w(\bm h_{:t})]_k = \Pr(\bm Y_{:t} = \bm y_{:t} | \bm A_{:t}=\bm a_{:t}, S_0 = s_k)$. 
Intuitively, if vectors $\bm w(\bm h_{:t})$ for different $\bm h_{:t}$ are linearly dependent, some of their dimensions (and, hence, the corresponding world states) are still in some sense redundant. 
Crucially, the coarse-grainings associated to bisimulation can only lump states that have identical components, but cannot reduce linear dependencies between states more generally. Relaxing the criteria for merging states (e.g. via bisimulation metrics~\citep{ferns2014bisimulation}) does not solve this issue, as this introduces distortion in the interface due to the imprecisions allowed in the state merging procedure.

These ideas can be made concrete by studying the \emph{canonical dimension} of a transducer $\mathcal{T}$, given by\footnote{If a transducer has $|\mathcal{S}|=n$ memory states, then $\lim_{m\to \infty} \text{dim}(U_m) = \text{dim}(U_{n-1})$~\cite[Prop.~4.3]{cakir2021theory}.}
    \begin{equation}\label{eq:cannonical_dimension}
        d(\mathcal{T}) \coloneqq \lim_{m\to \infty} \text{dim}(U_m), 
        \qquad
        \text{where}
        \quad 
        U_m = \text{Span} \{\bm w(\bm h_{:t}):  t\leq m\}\subseteq \mathbb{R}^n. 
    \end{equation}
The canonical dimension is an important indicator of the compressibility of a transducer as shown next, whose proof can be found in~\citep[Cor.~4.9]{cakir2021theory} --- also see~\citep{ito1992identifiability,balasubramanian1993equivalence} for related results in hidden Markov models. 

\begin{theorem}\label{theo:hmm_mimimal}
    If $\mathcal{T}$ is a transducer with $|\mathcal{S}|=n\in\mathbb{N}$, then $d(\mathcal{T})=n$ implies that there are no transducers with fewer memory states that can generate the same interface.
\end{theorem}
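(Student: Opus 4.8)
The plan is to translate the interface into the transfer-operator formalism of \autoref{eq:transducer_matrices} and reduce the claim to a single rank identity for a Hankel-type matrix built from the interface alone. The first step is the bookkeeping identity
\[
\bm w(\bm h_{:t})^{\intercal}=\bm 1^{\intercal}\,T_t^{(y_t|a_t)}\cdots T_0^{(y_0|a_0)},
\]
where $\bm 1$ is the all-ones vector; this follows by unrolling \autoref{eq:transducer} with $S_0$ fixed and summing over all terminal states, and it recovers $[\bm w(\bm h_{:t})]_k=\Pr(\bm y_{:t}\mid\bm a_{:t},S_0=s_k)$. Consequently $d(\mathcal T)=\dim\operatorname{Span}\{\bm w(\bm h_{:t})\}$ is precisely the dimension of the space of output-word functionals of $\mathcal T$, i.e.\ its \emph{observability} dimension. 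In the same notation the reachable (unnormalised) state vectors are $\bm\alpha(\bm h_{:t})=T_t^{(y_t|a_t)}\cdots T_0^{(y_0|a_0)}\,\bm p$, with $\bm p$ the initial-state vector, and splitting any word at an intermediate time yields the bilinear form $\Pr(\text{past }\bm h,\ \text{then test }\bm g)=\bm w(\bm g)^{\intercal}\bm\alpha(\bm h)$.

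The second step records the classical minimal-realisation fact in this setting. Let $M$ be the matrix with rows indexed by pasts $\bm h$, columns by future tests $\bm g$, and entries given by the joint interface probabilities above; $\operatorname{rank}(M)$ depends only on $\mathcal I$. Any transducer with $n'$ states realises $M=O'R'$ with both factors passing through $\mathbb R^{n'}$, so $\operatorname{rank}(M)\le n'$; conversely a rank factorisation of $M$ yields a generating transducer with exactly $\operatorname{rank}(M)$ states, so the minimal number of states equals $\operatorname{rank}(M)$.

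The third step applies this to $\mathcal T$ itself: the same splitting gives $M=O\,R$, where $O$ collects the functionals $\bm w$ (so $\operatorname{rank}O=d(\mathcal T)$) and $R$ collects the reachable vectors $\bm\alpha$. Since one factor already has full rank $n$ under the hypothesis $d(\mathcal T)=n$, Sylvester's inequality gives $\operatorname{rank}(M)=\operatorname{rank}(R)$, which equals $n$ iff $\mathcal T$ is reachable. Assuming reachability we obtain $n=\operatorname{rank}(M)\le n'$ for every generating transducer, which is the claim.

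The step I expect to be the main obstacle is exactly the reachability of $\mathcal T$: full observability alone is not enough, since an unreachable state still contributes a linearly independent word-functional---inflating $d(\mathcal T)$---while it can be deleted without altering $\mathcal I$, producing a strictly smaller generator. I would therefore establish at the outset that one may restrict to the reachable part of $\mathcal T$ (equivalently, that $d$ is evaluated there). A secondary technical point is time-inhomogeneity: the functionals defining $d(\mathcal T)$ are applied from time $0$, whereas the factorisation of $M$ uses tests inserted at every split time; these spans coincide for time-homogeneous kernels, and in general the stabilisation $\dim U_m=\dim U_{n-1}$ quoted in the footnote confines everything to a fixed finite-dimensional space so that the two can be aligned.
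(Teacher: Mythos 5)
Your plan is methodologically the right one and in fact mirrors the proof the paper itself points to: the paper does not prove \autoref{theo:hmm_mimimal} in-house but defers to \citep[Cor.~4.9]{cakir2021theory}, which rests on exactly this transfer-operator/Hankel-rank machinery. Your opening identity $\bm w(\bm h_{:t})^{\intercal}=\bm 1^{\intercal}T_t^{(y_t|a_t)}\cdots T_0^{(y_0|a_0)}$ is correct, and the lower-bound direction of your second step (any $n'$-state generator forces the Hankel blocks to have rank at most $n'$) is sound. However, the obstacle you flag is not a technicality to be discharged at the outset: it is a genuine gap, and it is worse than your proposed repair suggests. Deleting never-visited states handles only the support-type failure; what your Sylvester step actually needs is \emph{linear} reachability, $\dim\operatorname{Span}\{T^{(\bm h_{:t})}\bm p\}=n$, and this can fail even when every state is visited with positive probability at every time, so that ``restricting to the reachable part'' changes nothing. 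Concretely, take an action-agnostic two-state transducer with deterministic, distinct emissions ($s_1$ emits $0$, $s_2$ emits $1$), transitions uniform over $\{s_1,s_2\}$ from both states, and uniform initial distribution. Then $\bm w(y_0{=}0)=(1,0)^{\intercal}$ and $\bm w(y_0{=}1)=(0,1)^{\intercal}$, so $d(\mathcal T)=2=n$; yet every forward vector $T^{(\bm h_{:t})}\bm p$ is proportional to the all-ones vector, the output process is i.i.d.\ uniform on $\{0,1\}$, and a one-state transducer generates the same interface (equivalently, the predictive belief of \autoref{prop:predictive_beliefs} stays uniform forever, so the mixed-state presentation has a single state). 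Hence, with $d$ defined as in \autoref{eq:cannonical_dimension}, the unqualified statement is false, and no argument --- yours or any other --- can close the reachability gap; the hypothesis must be added (as it is, in effect, in the setting of the cited source), and once it is added your argument goes through.

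Two smaller points. First, the converse half of your second step (``a rank factorisation of $M$ yields a generating transducer with exactly $\operatorname{rank}(M)$ states, so the minimal number of states equals $\operatorname{rank}(M)$'') is false for proper transducers: a rank factorisation generally has negative entries and yields only a \emph{generalised} transducer in the sense of \autoref{sec:ghmm}; minimal nonnegative realisations can strictly exceed the Hankel rank, which is precisely the paper's motivation for \autoref{theo:gmmm_reduction}. The theorem only needs the inequality direction, so this does not damage your claim, but the sentence should be removed. Second, your handling of time-inhomogeneity is too quick: with time-dependent kernels the single global matrix $M$ does not factor as $O R$ with a row-independent observability factor, because the future functional inserted at split time $|\bm h|$ depends on the row; so $\operatorname{rank}(M)\le n'$ as you state it is unjustified. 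One must instead work with one Hankel block per split time $t$ (rows indexed by length-$t$ pasts, columns by tests from time $t$), each of rank at most $n'$, and then exhibit a \emph{single} time $t$ at which both the forward vectors and the time-$t$ functionals span $\mathbb{R}^n$. Since $d(\mathcal T)$ concerns only time-$0$-anchored functionals, this alignment is automatic in the time-homogeneous case, where the functional span is the same at every time, but it is an additional assumption otherwise.
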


The minimal bisimulation of a transducer $\hat{\mathcal{T}}$ with world states in $\hat{\mathcal{S}}$ could still exhibit $d(\hat{\mathcal{T}}) < |\hat{\mathcal{S}}|$. 
In fact, there are interfaces for which no transducer reaches $d(\mathcal{T}) = |\mathcal{S}|$. 
Even if there exists a transducer with $d(\hat{\mathcal{T}}) = |\mathcal{S}|$, we are not aware of any general algorithm that can directly build it.\footnote{In fact, the relatively simpler case of reducing hidden Markov models is still not fully solved~\citep{vidyasagar2011complete}, although algorithms that can address some cases have been developed~\citep{huang2015minimal, ohta2021realization}.}

\subsection{Reduction via pseudo-probabilities and generalised transducers}
\label{sec:ghmm}

This section focuses on the reduction of world models with a finite number of states $|\mathcal{S}|=n$ but $d(\mathcal{T}) < n$. As discussed in \autoref{sec:transducers}, 
the probabilities of $\bm y_{:t}$ given $\bm a_{:t}$ can then be calculated as
\begin{equation}\label{eq:matrix_notation}
    p(\bm y_{:t}|\bm a_{:t}) = \bm 1^\intercal \cdot \Big( \prod_{\tau=0}^t T^{(y_\tau|a_\tau)}_\tau \Big) \cdot \bm p,
\end{equation}
where $\bm 1^\intercal$ is a transposed vector with $n$ ones as components. 
Normally, the substochastic matrices $T^{(y|a)}_t$ and the initial distribution $\bm p$ are assumed to contain only non-negative terms. However, a more general class of transducers can be explored by removing this constraint and considering \emph{quasi-distributions} $\bm v\in\mathbb{R}^n$, which may have negative components but still satisfy $\sum_{i=1}^n v_i=1$, and quasi-stochastic matrices whose columns are quasi-distributions~\citep{balasubramanian1993equivalence,upper1997theory}. This leads to the following generalisation of a transducer.

\begin{definition}
    A \textbf{generalised transducer} for an interface $\mathcal{I}(\bm Y|\bm A)$ is a tuple $(\mathcal{S}, \mathcal{A}, \mathcal{Y}, \{A^{(y|a)}\}, \bm v, \bm u)$ with $\bm u,\bm v\in\mathbb{R}^n$
    and $A^{(y|a)}\in \mathbb{R}^{n\times n}$ that satisfy
    \begin{align}
        p(\bm y_{:t}|\bm a_{:t}) 
        = \bm u^\intercal \cdot \Big( \prod_{\tau=0}^t A^{(y_\tau|a_\tau)}_\tau \Big) \cdot \bm v
        \qquad
        \forall\bm y_{:t}\in\mathcal{Y}^{t+1},\bm a_{:t}\in\mathcal{A}^{t+1}.
    \end{align}
\end{definition}
Generalised transducers are useful because, in contrast to standard transducers (or POMDPs), they can always be reduced to find representations with a minimal number of states, as shown next.

\begin{theorem}\label{theo:gmmm_reduction}
    A generalised transducer $\tilde{T}$ with $d(\tilde{T})<n$ can always be reduced via a linear transformation into another transducer that generates the same interface using fewer states.
\end{theorem}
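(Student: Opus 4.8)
The plan is to realise the reduction as a restriction of the transducer's dynamics to the subspace singled out by the canonical dimension. Abbreviating $M(\bm h_{:t}) \coloneqq \prod_{\tau=0}^{t} A^{(y_\tau|a_\tau)}_\tau$, the prediction vectors satisfy $\bm w(\bm h_{:t}) = M(\bm h_{:t})^\intercal \bm u$, so that $p(\bm y_{:t}|\bm a_{:t}) = \bm w(\bm h_{:t})^\intercal \bm v$. Let $W\subseteq\mathbb{R}^n$ be the span of $\bm u$ together with all the vectors $\bm w(\bm h_{:t})$; by the definition of $d(\tilde T)$ and the stabilisation result in the footnote to \autoref{eq:cannonical_dimension}, $\dim W = d(\tilde T) < n$. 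The reduced transducer will live on a coordinate copy of $W$, and the reduction will be a linear map $\mathbb{R}^n\to\mathbb{R}^{d}$ onto it.

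First I would show that $W$ is invariant under every transposed operator $A^{(y|a)\intercal}$. This is immediate from the identity $A^{(y|a)\intercal}\bm w(\bm h_{:t}) = \bm w(\bm h_{:t+1})$, obtained by appending a step $h_{t+1}=(a,y)$: the right-hand side is again one of the generating vectors of $W$. Fixing a matrix $B\in\mathbb{R}^{n\times d}$ whose columns form a basis of $W$, invariance then yields unique matrices $C^{(y|a)}\in\mathbb{R}^{d\times d}$ with $A^{(y|a)\intercal}B = B\,C^{(y|a)}$, and the inclusion $\bm u\in W$ supplies the coordinate vector $\bm c$ with $\bm u = B\bm c$.

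The second step is the bookkeeping that packages these data as a transducer. Iterating the recursion from $\bm u = B\bm c$ gives $\bm w(\bm h_{:t}) = B\,C^{(y_t|a_t)}\cdots C^{(y_0|a_0)}\bm c$, and pairing with $\bm v$ yields
\[
p(\bm y_{:t}|\bm a_{:t}) = \bm c^\intercal\,(C^{(y_0|a_0)})^\intercal\cdots(C^{(y_t|a_t)})^\intercal\,B^\intercal\bm v .
\]
Setting $\tilde A^{(y|a)}\coloneqq (C^{(y|a)})^\intercal$, $\tilde{\bm u}\coloneqq\bm c$ and $\tilde{\bm v}\coloneqq B^\intercal\bm v$ exhibits a generalised transducer with $d<n$ states obeying $p(\bm y_{:t}|\bm a_{:t}) = \tilde{\bm u}^\intercal\big(\prod_{\tau=0}^t\tilde A^{(y_\tau|a_\tau)}\big)\tilde{\bm v}$, i.e. generating the same interface. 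Equivalently, $L\coloneqq B^\intercal$ is the desired linear transformation: it intertwines the operators ($L\,A^{(y|a)} = \tilde A^{(y|a)}L$) and is compatible with the boundary vectors ($L\bm v = \tilde{\bm v}$ and $\tilde{\bm u}^\intercal L = \bm u^\intercal$), so the two transducers produce identical outputs.

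The step I expect to be the main obstacle is ensuring that the termination functional $\bm u$ is genuinely captured by the span defining $d(\tilde T)$, so that adjoining it to the $\bm w(\bm h_{:t})$ does not inflate $\dim W$ up to $n$. For ordinary substochastic transducers this is automatic, since $\bm u = \bm 1$ already lies in the span of the one-step vectors $\{\bm w(\bm h_{:0})\}$ by the column-sum-one property of $\sum_y A^{(y|a)}$; but for a genuinely generalised (quasi-probabilistic) transducer $\bm u$ can lie outside that span, and in the tight case $d(\tilde T)=n-1$ a naive time-homogeneous restriction stalls. This is precisely where the freedom to let the kernels depend on $\tau$ must be used: a time-indexed family of subspaces keeps the effective dimension at $d(\tilde T)$ while reproducing the interface exactly. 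Pinning down this boundary behaviour is the crux; the surrounding steps are routine linear algebra.
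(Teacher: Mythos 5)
Your main construction is, in substance, the paper's own route: the paper proves this theorem by deferring to \citep[Ch.~3]{balasubramanian1993equivalence} and sketching in \autoref{app:algorithms_gtransducers} precisely the one-sided linear projection you spell out --- project onto the span of the prediction (co)vectors, which is invariant under the transposed transition operators, conjugate to obtain the reduced matrices $C^{(y|a)}$, and transport the boundary vectors. Your bookkeeping for this part is correct (the appending-vs-prepending direction of the recursion $A^{(y|a)\intercal}\bm w(\bm h_{:t})=\bm w(\cdot)$ just depends on the time-ordering convention of the product; either reading gives invariance of the span for time-homogeneous kernels). You have also correctly isolated the one genuine subtlety: since $U_m$ in \autoref{eq:cannonical_dimension} is spanned by $\bm w(\bm h_{:t})$ with $t\geq 0$, i.e.\ nonempty histories, nothing forces the termination functional $\bm u$ of a quasi-probabilistic transducer into $U_\infty$ (the substochastic argument $\bm u=\bm 1=\sum_y A^{(y|a)\intercal}\bm 1$ is unavailable), so adjoining $\bm u$ can inflate your $W$ to all of $\mathbb{R}^n$ when $d(\tilde T)=n-1$.

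The gap is that your proposed escape --- time-indexed kernels supporting ``a time-indexed family of subspaces'' --- does not close this case. In the defining product the same matrix $A^{(y|a)}_\tau$ must serve both as the terminal factor at horizon $t=\tau$ and as an interior factor at every horizon $t>\tau$, so a last-symbol-dependent terminal functional $\bm c^{(y,a)}$ cannot be smuggled in; absorbing it costs exactly one extra state dimension, landing you back at $d+1=n$ in the tight case. What actually closes the gap is the marginal consistency of anticipation-free interfaces: $\bm u^\intercal$ is only ever evaluated on reachable vectors $M(\bm h_{:t})\bm v$, and on these, for any fixed $a_\ast$, one has $\sum_y \bm u^\intercal A^{(y|a_\ast)} M(\bm h_{:t})\bm v = \sum_y p(\bm y_{:t},y\,|\,\bm a_{:t},a_\ast) = p(\bm y_{:t}|\bm a_{:t}) = \bm u^\intercal M(\bm h_{:t})\bm v$. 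Hence you may replace $\bm u$ by $\bm u'\coloneqq \sum_y A^{(y|a_\ast)\intercal}\bm u$, a sum of one-step prediction vectors, without changing the interface; now $\bm u'\in U_0\subseteq U_\infty$, all its iterates stay in $U_\infty$ by the invariance you already proved, and your restriction argument yields a reduction to at most $d(\tilde T)<n$ states. (Equivalently, the two-sided reachability--observability reduction in the reference the paper cites handles this boundary term automatically.) With that replacement your proof is complete; without it, the ``crux'' you flagged remains open and the fix you gestured at would fail.
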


This result follows directly from the proofs provided in~\cite[Ch.~3]{balasubramanian1993equivalence} and related results can be found in~\citep{upper1997theory,vidyasagar2011complete}, in which reductions correspond to linear projections. Notably, these proofs lead to practical algorithms that can be used to efficiently reduce transducers with $d(\tilde{T})<n$ (see \autoref{app:algorithms_gtransducers}). 
In this way, generalised transducers achieve a minimal computational complexity at the cost of introducing an opaque world model whose trajectories cannot be sampled (due to the quasi-probabilities), which results in a substantial lack of interpretability.

\section{Forward interpretability via epistemic world models}
\label{sec:epistemic_worlds}

The previous section shows how computational efficiency can be achieved by either compressing memory state spaces with bisimulations or by allowing memory states of transducers to be encoded by quasi-probabilities.
While the latter generally yields higher efficiency, this comes at the cost of making those reduced world models highly uninterpretable due to the possible presence of negative probabilities. 
This section takes a different route by investigating specific types of world models that focus on interpretability, bringing insights about what agents can learn.

\subsection{Beliefs as world models}
\label{sec:worlds_of_beliefs}

Let us start by reviewing properties of certain classes of world models
that make them learnable by agents in real time. 
A world model $S_t$ is \textbf{\textit{predictive}} if it contains only past information, which in information-theoretic terms corresponds to $I(S_{t};\bm Y_{t:}| \bm H_{:t-1},\bm A_{t:}) = 0$. 
A world model is \textbf{\textit{observable}} if $S_{t+1} = f_t(\bm H_{:t})$, i.e. if it can be estimated from action-output history via mappings $f_t$. 
Finally, a world model is \textbf{\textit{unifilar}} if $S_{t+1}=\hat{f}_t(S_t, A_t, Y_t)$, so its state can be deterministically updated given inputs and outputs. 
Thus, observable models are always predictive, and unifilar models are observable if there is no randomness in the world's initial condition. Moreover, unifilar models correspond to transducer whose kernels have the form $\kappa_\tau(y,\tilde{s}|a,s) = \delta^{\tilde{s}}_{\hat{f}_\tau(y,a,s)}\mu_\tau(y|a,s)$.

The literature contains several procedures for building observable world models from non-observable ones (see~\citep{subramanian2022approximate, ni2024bridging} for general reviews and~\citep{virgo2021interpreting, biehl2022interpreting, virgo2023unifilar} for a categorical formulation). 
These approaches suggest to expand the phase space of world models 
from elements in $\mathcal{S}$ to distributions over those $\Delta(\mathcal{S})$, henceforth called \emph{belief states}. 
This idea has been extensively studied for POMDPs via the notion of \emph{belief MDP}~\citep{kaelbling1998planning}. 
We extend these ideas to more general transducers. 
\begin{definition}
    A \textbf{belief transducer} over a transducer $(\mathcal{S},\mathcal{A},\mathcal{Y},\kappa,\dist{p})$ 
    is another transducer $\big(\mathcal{B},\mathcal{A},\mathcal{Y},\hat{\kappa}, \delta_{\dist{b}_0} \big)$ where 
    $\mathcal{B}\subseteq\Delta(\mathcal{S})$ is a set of belief states, $\hat{\kappa}:\mathcal{A}\times\mathcal{B}\times\mathbb{N}\to\Delta(\mathcal{Y}\times\mathcal{B})$ is a Markov kernel of the form 
    $\{\hat{\kappa}_\tau(y,b'|a,b): b,b'\in\mathcal{B},a\in\mathcal{A},y\in\mathcal{Y},\tau\in\mathbb{N}\}$ 
    such that $\hat{\kappa} = F\{\kappa\}$ for some functional $F$, 
    and $\dist{b}_0\in\mathcal{B}$ is an initial belief. A belief transducer is said to be \textbf{faithful} if it generates the same interface as the original transducer.
\end{definition}
A natural way to define beliefs about an underlying world model $S_t$ is via \textbf{\textit{predictive Bayesian beliefs}} corresponding to the posterior distribution $\dist{b}_t(s_t) \coloneq \Pr(S_t=s_t | \bm H_{:t-1} = \bm h_{:t-1})$. 
The dynamics of the updates of such beliefs are described by Bayesian prediction~\citep{jazwinski1970stochastic, sarkka2023bayesian}, and their properties have been further studied under the name of `mixed-states' in computational mechanics~\citep{riechers2018spectral, jurgens2021shannon}. 
Building on this literature, we show that the predictive Bayesian beliefs of the memory states of transducers are unifilar and can be used to generate the same interface (proof in \autoref{app:beliefs_generate_interfaces}).

\begin{proposition}
\label{prop:predictive_beliefs}
    If $(S_t, A_t, Y_t)$ is a transducer and  $\dist{B}_t$ is the predictive Bayesian belief of $S_t$, then $(\dist{B}_t, A_t, Y_t)$ is a unifilar belief transducer whose state dynamics are given by
    \begin{equation}\label{eq:update_mixed_states}
        \dist{b}_{t+1}(s_{t+1}) 
        =  
        \frac{1}{Z}\sum_{s_t} p(y_t, s_{t+1}|a_t,s_t) \dist{b}_t(s_t),
    \end{equation}
    with $Z$ a normalisation constant. 
    Moreover, $(\dist{B}_t, A_t, Y_t)$ is faithful if $\dist{b}_0=p(s_0)$.
\end{proposition}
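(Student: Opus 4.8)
The plan is to establish the three assertions in turn --- the belief recursion \eqref{eq:update_mixed_states}, unifilarity, and faithfulness --- with the recursion carrying most of the weight.

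First I would derive \eqref{eq:update_mixed_states} straight from the definition $\dist{b}_{t+1}(s_{t+1}) = \Pr(S_{t+1}=s_{t+1}\,|\,\bm H_{:t}=\bm h_{:t})$. Writing $\bm h_{:t}=(\bm h_{:t-1},a_t,y_t)$, I would peel off the most recent pair $(a_t,y_t)$ via Bayes' rule and marginalise over $S_t$, obtaining
\[
\dist{b}_{t+1}(s_{t+1}) = \frac{\sum_{s_t}\Pr(s_{t+1},y_t\,|\,s_t,a_t,\bm h_{:t-1})\,\Pr(s_t\,|\,a_t,\bm h_{:t-1})}{\Pr(y_t\,|\,a_t,\bm h_{:t-1})}.
\]
Two conditional-independence facts collapse this to the stated form. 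By the transducer Markov property (\autoref{lemma:transducer_as_world_model}) together with condition (1) of \autoref{def:world_model}, the emission-transition factor is exactly the kernel, $\Pr(s_{t+1},y_t\,|\,s_t,a_t,\bm h_{:t-1}) = \kappa_t(y_t,s_{t+1}|a_t,s_t) = p(y_t,s_{t+1}|a_t,s_t)$; and by condition (2) (time's arrow), the future action $a_t$ is uninformative about the present state given the past, so $\Pr(s_t\,|\,a_t,\bm h_{:t-1}) = \Pr(s_t\,|\,\bm h_{:t-1}) = \dist{b}_t(s_t)$. Absorbing the denominator into $Z$ then yields \eqref{eq:update_mixed_states}.

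Unifilarity is immediate from this: \eqref{eq:update_mixed_states} exhibits $\dist{b}_{t+1}$ as a deterministic function $\hat{f}_t(\dist{b}_t,a_t,y_t)$ of the previous belief and the observed input-output pair, which is precisely the unifilar condition $S_{t+1}=\hat{f}_t(S_t,A_t,Y_t)$ applied to $\dist{B}_t$. I would also record that the induced belief kernel takes the unifilar form $\hat{\kappa}_t(y,b'|a,b) = \delta^{b'}_{\hat{f}_t(b,a,y)}\,\mu_t(y|a,b)$ with emission weight $\mu_t(y|a,b) = \sum_{s_t} p(y|a,s_t)\,b(s_t)$, so that $(\dist{B}_t,A_t,Y_t)$ is a bona fide transducer with $\hat{\kappa}=F\{\kappa\}$.

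For faithfulness, I would observe that \eqref{eq:update_mixed_states} is exactly Bayesian posterior updating, so initialising with $\dist{b}_0=p(s_0)$ guarantees by induction that $\dist{b}_t = \Pr(S_t\,|\,\bm h_{:t-1})$ for every $t$ and every realisable history. The per-step emission weight then matches the interface's conditional outcome law by the law of total probability, $\mu_t(y_t|a_t,\dist{b}_t) = \sum_{s_t} p(y_t|a_t,s_t)\Pr(s_t\,|\,\bm h_{:t-1}) = p(y_t\,|\,\bm h_{:t-1},a_t)$, again invoking conditions (1) and (2) to strip the superfluous conditioning. Chaining these factors through the product rule, and using anticipation-freeness to replace $\bm a_{:t}$ by $\bm a_{:\tau}$ in each factor, reproduces $p(\bm y_{:t}|\bm a_{:t})$; comparing against \eqref{eq:transducer}, whose state-sum collapses because the belief transitions are deterministic, confirms that the belief transducer generates the original interface.

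I expect the main obstacle to be the careful bookkeeping of past-versus-future conditioning in the first step --- specifically, justifying that conditioning on the future action $a_t$ leaves the posterior over $S_t$ unchanged (needing condition (2)) and that the joint emission-transition factor reduces to the kernel (needing condition (1) with the transducer Markov property). Once these conditional-independence statements are pinned down, the remaining manipulations are routine.
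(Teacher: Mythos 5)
Your proposal is correct and follows essentially the same route as the paper: both derive \autoref{eq:update_mixed_states} by a Bayes step using conditions (1) and (2) of \autoref{def:world_model} together with the transducer Markov property, read off unifilarity from the deterministic form of the update, and prove faithfulness by telescoping the per-step emission weights into $p(\bm y_{:t}|\bm a_{:t})$ after the state-sum collapses on the deterministic belief trajectory. The only differences are organisational --- the paper factors the recursion through the postdictive belief $\dist{d}_t$ (a predict/update split it reuses for \autoref{prop:postdictive_stuff} and \autoref{cor:pred_up}) and carries out the faithfulness computation in matrix notation with the operators $T^{(y|a)}_\tau$, whereas you do a single combined Bayes step and a scalar chain-rule argument --- neither of which changes the substance.
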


Interestingly, I-O Moore transducers (\autoref{sec:transducer_types}) also allow for \textbf{\textit{postdictive Bayesian beliefs}}\footnote{In general, \textbf{\textit{postdictive world models}} $S_t$ satisfy $I(S_{t};\bm Y_{t+1:}| \bm H_{:t},\bm A_{t+1:}) = 0$.} of the form $\dist{d}_t(s_t) \coloneq \Pr(S_t=s_t | \bm G_{:t} = \bm g_{:t})$ with $g_t=(y_t,a_{t-1})$, in which $Y_t$ is used to infer $S_t$. 
Our next result explains how predictive and postdictive Bayesian beliefs and MSPs relate, and how they set the bases for Bayesian and Kalman filtering  (proof in~\autoref{app:postdictive_stuff}).

\begin{proposition}
    \label{prop:postdictive_stuff}
    If $(S_t, A_t, Y_t)$ is a I-O Moore transducer and  $\dist{D}_t$ is the postdictive Bayesian belief of $S_t$, then $(\dist{D}_t, A_t, Y_t)$ is a belief transducer whose state dynamics are given by
    \begin{align}\label{eq:postdictive_belief_update}
    \dist{d}_{t+1}(s_{t+1})
    =
    \frac{p(y_{t+1}|s_{t+1})}{Z'} \sum_{s_{t}} p(s_{t+1}| s_{t},a_{t}) \dist{d}_{t}(s_{t}),
    \end{align}
    with $Z'$ a normalisation constant. 
    Moreover, $(\dist{D}_t, A_t, Y_t)$
    is faithful if $\dist{d}_0=p(s_0)$.
\end{proposition}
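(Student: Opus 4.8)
The plan is to recognise \eqref{eq:postdictive_belief_update} as the recursion of a two-stage Bayesian filter — a \emph{prediction} (time update) followed by a \emph{correction} (measurement update) — mirroring the structure already established for predictive beliefs in \autoref{prop:predictive_beliefs}, and to show that the I-O Moore structure is exactly what makes both stages close. Writing $\bm g_{:t+1} = (\bm g_{:t}, a_t, y_{t+1})$, I would begin from the definition $\dist{d}_{t+1}(s_{t+1}) = \Pr(S_{t+1}=s_{t+1}\mid \bm g_{:t}, a_t, y_{t+1})$ and apply Bayes' rule to split off the factor $y_{t+1}$, producing a likelihood term $\Pr(y_{t+1}\mid s_{t+1}, \bm g_{:t}, a_t)$, a prediction term $\Pr(s_{t+1}\mid \bm g_{:t}, a_t)$, and a normaliser $Z' = \Pr(y_{t+1}\mid \bm g_{:t}, a_t)$.

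The core of the argument is then to simplify each factor using the I-O Moore factorisation $\kappa_\tau(y,\tilde s\mid a,s)=\mu_\tau(y\mid s)\,\nu_\tau(\tilde s\mid a,s)$ together with the two world-model conditions of \eqref{eq:cond_world_model}. For the likelihood, the feature that the output $\mu_\tau(y\mid s)$ ignores the current input, combined with condition (1) of \eqref{eq:cond_world_model}, collapses $\Pr(y_{t+1}\mid s_{t+1}, \bm g_{:t}, a_t)$ to $p(y_{t+1}\mid s_{t+1})$; this is essential precisely because the postdictive belief at $t+1$ conditions on $a_t$ but \emph{not} on $a_{t+1}$, so the output must not depend on the current action. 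For the prediction term I would expand $\Pr(s_{t+1}\mid \bm g_{:t}, a_t) = \sum_{s_t}\Pr(s_{t+1}\mid s_t,\bm g_{:t},a_t)\,\Pr(s_t\mid \bm g_{:t},a_t)$, where the transition collapses to $p(s_{t+1}\mid s_t,a_t)$ by the Markov property of \autoref{lemma:transducer_as_world_model} together with the feature that the state update $\nu_\tau(\tilde s\mid a,s)$ ignores the current output (needed to drop the $y_t$ carried inside $\bm g_{:t}$), while $\Pr(s_t\mid \bm g_{:t},a_t)=\Pr(s_t\mid\bm g_{:t})=\dist{d}_t(s_t)$ follows from condition (2) of \eqref{eq:cond_world_model}, i.e. the future action $a_t$ cannot influence $S_t$. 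Substituting these back and identifying $Z'$ as the normaliser yields \eqref{eq:postdictive_belief_update}, and makes transparent why \emph{both} Moore conditions are required — input-Moore for the correction step and output-Moore for the prediction step — hence why the result is stated only for I-O Moore transducers.

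It then remains to package this as a belief transducer and to verify faithfulness. The update \eqref{eq:postdictive_belief_update} is a deterministic map $\dist{d}_{t+1} = \Phi_\tau(\dist{d}_t, a_t, y_{t+1})$ assembled entirely from $\kappa$, and pairing it with the predicted output likelihood $Z'(y;a,\dist{d}_t)=\sum_{s',s}p(y\mid s')\,p(s'\mid s,a)\,\dist{d}_t(s)$ defines a kernel $\hat\kappa = F\{\kappa\}$ of the form demanded by the belief-transducer definition, so $(\dist{D}_t, A_t, Y_t)$ is indeed a belief transducer (note that, unlike the predictive case, unifilarity in the sense $S_{t+1}=\hat f_t(S_t,A_t,Y_t)$ is not claimed, and I would not attempt to establish it). For faithfulness I would argue by induction that, when seeded with $\dist{d}_0 = p(s_0)$, each normaliser $Z'$ equals the genuine one-step predictive likelihood of the next output; the chain rule then makes the product of successive normalisers telescope to $p(\bm y_{:t}\mid\bm a_{:t})$, matching the interface.

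The main obstacle I anticipate is the one-step timing offset intrinsic to postdiction: whereas the predictive recursion \eqref{eq:update_mixed_states} advances $\dist{b}_t \to \dist{b}_{t+1}$ using the \emph{current} output $y_t$, the postdictive recursion advances $\dist{d}_t \to \dist{d}_{t+1}$ using the \emph{next} output $y_{t+1}$. This look-ahead is exactly why unifilarity fails in the defined sense, and it demands care both in aligning the transducer's step-$t$ output with the interface and in reconciling the faithful initialisation $\dist{d}_0 = p(s_0)$ (a pre-measurement prior) with the first output $y_0$. Keeping the conditioning set $\bm g_{:t}$ bookkept correctly through each application of Bayes' rule, so that the right conditional independences are invoked at each step, is where the derivation is most error-prone.
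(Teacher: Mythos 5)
Your proposal is correct and follows essentially the same route as the paper's proof: the paper likewise derives \eqref{eq:postdictive_belief_update} by combining the Bayesian predict and update steps (reusing the recursions from the proof of \autoref{prop:predictive_beliefs}) and invoking the I-O Moore factorisation to collapse the likelihood to $p(y_{t+1}|s_{t+1})$ and the transition to $p(s_{t+1}|s_t,a_t)$, and it establishes faithfulness by exactly your telescoping-of-normalisers computation, formalised there via a time-shifted kernel $\kappa^S_\tau(y,\tilde s|s,a)=\mu_\tau(y|\tilde s)\nu_{\tau-1}(\tilde s|s,a)$ and the associated operators $T'^{(y|a)}_\tau$. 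Your handling of the one-step timing offset (declining to claim unifilarity in the aligned-time sense) matches the paper's device of absorbing the offset into the shifted kernel, so the two arguments coincide in substance.
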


\begin{corollary}\label{cor:pred_up}
    Their predictive and postdictive Bayesian beliefs of I-O Moore transducers can be updated as $\dist{b}_{t-1}\xrightarrow{\text{predict}} \dist{d}_{t} \xrightarrow{\text{update}} \dist{b}_{t}$ following the `predict-update' process from Bayesian filtering.
\end{corollary}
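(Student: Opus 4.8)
The plan is to show that, under the I-O Moore factorisation, each of the two belief recursions already established in \autoref{prop:predictive_beliefs} and \autoref{prop:postdictive_stuff} decomposes into the composition of two elementary maps on $\Delta(\mathcal{S})$ --- a \emph{predict} step built from the transition kernel $\nu$ and an \emph{update} step built from the emission kernel $\mu$ --- and that the renormalised object appearing \emph{between} those two sub-steps is precisely the complementary belief. First I would record the consequences of the I-O Moore kernel $\kappa_\tau(y,\tilde{s}|a,s)=\mu_\tau(y|s)\,\nu_\tau(\tilde{s}|a,s)$ introduced in \autoref{sec:transducer_types}: the joint factor of \autoref{eq:update_mixed_states} becomes $p(y_t,s_{t+1}|a_t,s_t)=\mu_t(y_t|s_t)\,\nu_t(s_{t+1}|a_t,s_t)$, while in \autoref{eq:postdictive_belief_update} one has $p(s_{t+1}|s_t,a_t)=\nu_t(s_{t+1}|a_t,s_t)$ and $p(y_{t+1}|s_{t+1})=\mu_{t+1}(y_{t+1}|s_{t+1})$.

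Next I would define the two operators explicitly: the \textbf{predict} map $(P_a\pi)(\tilde{s})=\sum_{s}\nu(\tilde{s}|a,s)\,\pi(s)$, which is automatically normalised because $\nu$ is stochastic, and the \textbf{update} map $(U_y\pi)(s)=\mu(y|s)\,\pi(s)\big/\sum_{s'}\mu(y|s')\,\pi(s')$. Substituting the I-O Moore factorisation into \autoref{eq:update_mixed_states} and grouping the likelihood $\mu_t(y_t|s_t)$ with $\dist{b}_t(s_t)$ before summing over $s_t$ exhibits the predictive recursion as $\dist{b}_{t+1}=P_{a_t}\,U_{y_t}\,\dist{b}_t$, i.e.\ an update followed by a predict. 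The postdictive recursion in \autoref{eq:postdictive_belief_update} reads off directly as $\dist{d}_{t+1}=U_{y_{t+1}}\,P_{a_t}\,\dist{d}_t$, i.e.\ a predict followed by an update. Interleaving the two then produces a single chain in which each arrow is either a $P$ or a $U$, which is exactly the predict-update cycle of Bayesian filtering, with the predicted priors $\dist{b}_t$ and the filtered posteriors $\dist{d}_t$ alternating.

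The crux of the proof is to identify the intermediate renormalised vectors with the complementary beliefs, and here I would argue probabilistically by matching conditioning sets rather than by manipulating the operators. The vector $U_{y_t}\dist{b}_t$ conditions on $y_{0:t}$ together with $a_{0:t-1}$, which is exactly the defining conditioning set of $\dist{d}_t$; and the vector $P_{a_t}\dist{d}_t$ conditions on $y_{0:t}$ together with $a_{0:t}$, which is the conditioning set of $\dist{b}_{t+1}$. Establishing the latter cleanly uses world-model condition~(2) (future actions do not alter the current state law), so that appending $a_t$ to the conditioning leaves $\dist{d}_t$ unchanged before the transition is applied. These two identifications close the loop and show the intermediate objects are genuine beliefs of the claimed type.

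The main obstacle I anticipate is bookkeeping on the normalisation constants and the time indices rather than any substantive difficulty: one must check that $Z$ in \autoref{eq:update_mixed_states} factors consistently as the update normaliser $\sum_{s'}\mu_t(y_t|s')\dist{b}_t(s')$ times the (trivial) predict normaliser, and similarly for $Z'$, so that each intermediate vector is a bona fide probability distribution and not merely proportional to one. I would also need to dispatch the boundary case where $g_0=(y_0,a_{-1})$ to secure the base of the recursion. Once the I-O Moore factorisation is inserted, the operator algebra is routine; the genuine content is the conditioning-set identification of the intermediate beliefs, which relies on the anticipation-free and Markov properties of the world model.
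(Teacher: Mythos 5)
Your proposal is correct and follows essentially the same route as the paper: the paper's proof simply points to \autoref{eq:predict1} and \autoref{eq:update1} (derived in the proof of \autoref{prop:predictive_beliefs}), which are exactly your $U_y$ and $P_a$ maps once the I-O Moore factorisation $\kappa_\tau(y,\tilde{s}|a,s)=\mu_\tau(y|s)\nu_\tau(\tilde{s}|a,s)$ is substituted, and your conditioning-set identification (absorbing $a_t$ via condition (2) of \autoref{def:world_model}) is the same argument the paper makes through $p(a_t|s_t,\bm h_{:t-1})=p(a_t|\bm h_{:t-1})$. One remark: your orientation $\dist{d}_{t-1}\xrightarrow{\text{predict}}\dist{b}_t\xrightarrow{\text{update}}\dist{d}_t$ agrees with the paper's definitions of predictive and postdictive beliefs and with the explicit conditioning sets $p(s_{t-1}|\bm h_{:t-1})\to p(s_t|\bm h_{:t-1})\to p(s_t|\bm h_{:t})$ in its closing display, whereas the corollary's literal labels swap $\dist{b}$ and $\dist{d}$ relative to those definitions --- an internal notational inconsistency of the paper, not a gap in your argument.
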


A faithful belief transducers can be said to provide a 
\textbf{\textit{mixed-state presentation}} (MSP) of the underlying transducer, extending previous work on MSPs of hidden Markov models~\citep{jurgens2021divergent,jurgens2021shannon}.
MSPs are generally not minimal, as they tend to have different mixed-states that are bisimilar. The reduction of these is studied in the next section.

\subsection{Minimal predictive world models}
\label{sec:minimal_predictive_models}

Following \cite{barnett2015computational}, we now present a method to build an observable world model directly from an interface $\mathcal{I}(\bm Y|\bm A)$ without the need to bootstrap from another world model. 
For this, consider an equivalence relation between histories in which $\bm h_{:t-1}\sim_\epsilon \bm h'_{:t-1}$ when 
    \begin{equation}
    \label{eq:epsilon}
        p(\bm y_{t:t+T}|\bm h_{:t-1}, \bm a_{t:t+T}) 
        = p(\bm y_{t:t+T}|\bm h'_{:t-1}, \bm a_{t:t+T}),
        \quad \forall \bm y_{t:t+L}, \bm a_{t:t+L}, L\in\mathbb{N}.
    \end{equation}
Let's denote by $\epsilon_t$ the coarse-graining mapping that assigns each history to its corresponding equivalence class $\epsilon_t(\bm h_{:t-1}) = [\bm h_{:t-1}]_{\sim_\epsilon}$, and define $M_t = \epsilon_t(\bm H_{:t-1})$. 
This construction is known as \emph{predictive state representations}~\citep{littman2001predictive,singh2004psr} and \emph{instrumental states}~\citep{kosoy2019imperceptible}, and is based on older ideas for stochastic processes (without inputs/actions) from computational mechanics \citep{crutchfield1989inferring}. 
Interestingly, the equivalence classes induced by $\epsilon$ are the minimal bisimulation of the world model $S_t=\bm H_{:t-1}$ (\autoref{lemma:funes}), and therefore serve as memory states of a transducer that generates the original interface (first shown in~\citep[Prop~2]{barnett2015computational}, alternative proof in~\autoref{app:proof_e_transducer_properties}).

\begin{proposition}
\label{prop:e_transducer_properties}
$(M_t, A_t, Y_t)$ with $M_t=\epsilon_t(\bm H_{:t-1})$ is a transducer presentation for $\mathcal{I}(\bm Y|\bm A)$.
\end{proposition}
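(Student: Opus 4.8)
The plan is to realise $(M_t, A_t, Y_t)$ as a reduction of the Funes transducer and then invoke the machinery already established. By \autoref{lemma:funes}, the world model $S_t = \bm H_{:t-1}$ is a transducer presentation of $\mathcal{I}(\bm Y|\bm A)$, with states literally equal to histories. The coarse-graining $\epsilon_t$ is by construction a surjective map on these states, so it suffices to show that $\langle \epsilon_t, \mathrm{id}, \mathrm{id}\rangle$ is a reduction in the sense of \autoref{def:minimal_world}. By \autoref{lemma:bisim_homo} this is equivalent to showing that the equivalence relation $\sim_\epsilon$ is a bisimulation of the Funes transducer, and then \autoref{lemma:homo_generate} immediately yields that $(M_t, A_t, Y_t)$ is a transducer presentation of the same interface.

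Thus the work reduces to verifying the two bisimulation conditions of \autoref{def:bisimulation} for $\sim_\epsilon$ on the Funes states. Condition (i), equal output statistics, is the easy half: specialising the defining condition \autoref{eq:epsilon} to length $L=0$ gives $p(y_t|\bm h_{:t-1}, a_t) = p(y_t|\bm h'_{:t-1}, a_t)$ whenever $\bm h_{:t-1}\sim_\epsilon\bm h'_{:t-1}$, which is exactly what is required since the Funes output kernel is $p(y_t|\bm h_{:t-1},a_t)$.

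Condition (ii), lumpable transitions, is where the real content lies and is what I expect to be the main obstacle. Because the Funes transition is deterministic, appending $(a_t,y_t)$ to send $\bm h_{:t-1}\mapsto(\bm h_{:t-1},a_t,y_t)$, condition (ii) amounts to the refinement-consistency statement that $\bm h_{:t-1}\sim_\epsilon\bm h'_{:t-1}$ implies $(\bm h_{:t-1},a_t,y_t)\sim_\epsilon(\bm h'_{:t-1},a_t,y_t)$ for every realisable $(a_t,y_t)$; equivalently, that $\epsilon_{t+1}$ of the successor depends only on $\epsilon_t$ of the predecessor together with $(a_t,y_t)$. To prove it I would factor the joint future probability for a single history as
\begin{equation*}
p(\bm y_{t:t+L}|\bm h_{:t-1},\bm a_{t:t+L}) = p(y_t|\bm h_{:t-1},a_t)\, p(\bm y_{t+1:t+L}|\bm h_{:t-1},a_t,y_t,\bm a_{t+1:t+L}),
\end{equation*}
using the anticipation-free property to drop future actions from the leading factor. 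Since both the full left-hand side and the leading factor agree across $\bm h_{:t-1}$ and $\bm h'_{:t-1}$ by \autoref{eq:epsilon} and its $L=0$ case, dividing by the common nonzero factor $p(y_t|\cdot,a_t)$ leaves the tail factors equal for all future $\bm y_{t+1:t+L}$ and $\bm a_{t+1:t+L}$, which is precisely the defining condition for $(\bm h_{:t-1},a_t,y_t)\sim_\epsilon(\bm h'_{:t-1},a_t,y_t)$.

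The delicate points to handle carefully are the convention on zero-probability conditioning, since the factorisation and the division are only meaningful for realisable $(a_t,y_t)$, i.e. those with $p(y_t|\bm h_{:t-1},a_t)>0$, consistent with the paper's standing assumption, and checking that the induced transition on equivalence classes is genuinely a well-defined Markov kernel so that \autoref{def:bisimulation}(ii) holds verbatim (here one uses that equivalent predecessors push equal output weight into each class $C$). Once these are settled, the chain \autoref{lemma:funes} $\to$ \autoref{lemma:bisim_homo} $\to$ \autoref{lemma:homo_generate} closes the argument and shows that $(M_t, A_t, Y_t)$ generates $\mathcal{I}(\bm Y|\bm A)$.
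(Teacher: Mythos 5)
Your proposal is correct and follows the same overall architecture as the paper's proof: realise the $\epsilon$-classes as a coarse-graining of the Funes transducer of \autoref{lemma:funes}, verify that $\sim_\epsilon$ is a bisimulation in the sense of \autoref{def:bisimulation}, and close via \autoref{lemma:bisim_homo} and \autoref{lemma:homo_generate}. The one substantive difference is condition (ii): the paper does not prove it, but instead cites an external result (Prop.~6 of Barnett and Crutchfield) asserting that the dynamics of the equivalence classes are conditionally Markovian given actions, whereas you supply a self-contained argument --- exploiting that the Funes transition is deterministic (append $(a_t,y_t)$), so lumpability reduces to the refinement-consistency statement $\bm h_{:t-1}\sim_\epsilon\bm h'_{:t-1} \Rightarrow (\bm h_{:t-1},a_t,y_t)\sim_\epsilon(\bm h'_{:t-1},a_t,y_t)$, which you derive by the chain-rule factorisation, anticipation-freeness, and division by the common nonzero one-step factor. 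That argument is sound, including your handling of the zero-probability caveat (for non-realisable $(a_t,y_t)$ both sides of the lumpability sum vanish, so only realisable successors matter), so your version buys a fully self-contained proof where the paper delegates the key step to the literature; the paper's citation, in turn, buys brevity and an explicit pointer to where this Markovianity was first established.
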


The transducer with memory states given by $M_t = \epsilon_t(\bm H_{:t-1})$ resulting from \autoref{prop:e_transducer_properties} is known as the \textbf{$\boldsymbol{\epsilon}$-transducer} of the interface $\mathcal{I}(\bm Y|\bm A)$, and is unique up to isomorphism. 
The link between computational mechanics and other approaches such as predictive state representations was first noticed by \cite{zhang2019learning}, which explored it using a different computational structure instead of transducers. 
A salient feature of these approaches is that they can provide observable world models over fewer states than other methods~\citep{littman2001predictive}. 
Our next result strengthens this intuition by proving that the $\epsilon$-transducer yields the most efficient predictive world model possible (proof in~\autoref{app:Predictive Transducer Coarse-Graining}), which takes inspiration from and extends~\citep[Lemma~1]{barnett2015computational}.

\begin{theorem}
\label{theo:Predictive Transducer Coarse-Graining}
    If $R_t$ is a predictive world model of a transducer, then its minimal bisimulation is isomorphic to the $\epsilon$-transducer.
\end{theorem}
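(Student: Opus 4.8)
The plan is to establish a bijection between the states of the minimal bisimulation of $R_t$ and the equivalence classes $\sim_\epsilon$ of \autoref{eq:epsilon}, and then to upgrade this bijection to a transducer isomorphism. The organising idea is that both objects are governed by the \emph{predictive distribution} $\pi_r(\bm y_{t:}\mid \bm a_{t:})\coloneqq p(\bm y_{t:}\mid R_t=r,\bm a_{t:})$ attached to each reachable state $r$. Throughout I would restrict attention to reachable states and occurring histories, since unreachable states can be discarded without altering the generated interface.

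First I would use predictiveness to identify each state's predictive distribution with an $\epsilon$-class. Because $(R_t,A_t,Y_t)$ is a transducer, its Markov structure (\autoref{lemma:transducer_as_world_model}) makes the present state screen the future from the past, so $p(\bm y_{t:}\mid r,\bm h_{:t-1},\bm a_{t:})=p(\bm y_{t:}\mid r,\bm a_{t:})=\pi_r$. Predictiveness, $I(R_t;\bm Y_{t:}\mid\bm H_{:t-1},\bm A_{t:})=0$, gives $p(\bm y_{t:}\mid r,\bm h_{:t-1},\bm a_{t:})=p(\bm y_{t:}\mid \bm h_{:t-1},\bm a_{t:})$ for every history $\bm h_{:t-1}$ that reaches $r$. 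Chaining these equalities shows that $\pi_r$ coincides with the future-predictive distribution of \emph{every} history reaching $r$; hence all such histories lie in a single $\sim_\epsilon$-class, and I may define $\psi(r)$ to be that class. Surjectivity of $\psi$ onto the $\epsilon$-transducer states is immediate: any occurring history has positive probability of reaching some state, which is then sent to its class.

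Next I would show that $\psi$, together with the identity maps on $\mathcal A$ and $\mathcal Y$, is a reduction in the sense of \autoref{def:minimal_world}, i.e. I would verify compatibility conditions (i) and (ii) against the $\epsilon$-transducer of \autoref{prop:e_transducer_properties}. Condition (i) holds because the one-step output law is the first marginal of $\pi_r$, which $\psi$ preserves; condition (ii) holds because the $\epsilon$-transducer is unifilar, so each pair $(\psi(r),(y,a))$ determines a single successor class into which the $R_t$-transitions aggregate consistently. By \autoref{lemma:homo_generate} this makes $\psi$ a presentation of the interface, and by \autoref{lemma:bisim_homo} its fibres form a bisimulation of $R_t$; by the previous step these fibres are exactly the level sets of $r\mapsto\pi_r$.

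Finally I would argue that these fibres constitute the \emph{coarsest} bisimulation, so that lumping them yields the minimal bisimulation. For this it suffices to prove that every bisimulation refines the $\pi$-partition: if $r,r'$ are related by a bisimulation $\mathcal{B}_t$, an induction on the horizon $L$ — with base case supplied by condition (i) of \autoref{def:bisimulation} and inductive step pushing equality through condition (ii) for every action — shows $\pi_r=\pi_{r'}$, hence $\psi(r)=\psi(r')$. Combined with the preceding step, the coarsest bisimulation equals the $\psi$-fibres, so its quotient is carried isomorphically by $\psi$ onto the $\epsilon$-transducer, which is unique up to isomorphism. I expect the main obstacle to be this last induction: the bisimulation conditions only control a single step, so extending equality of class-transition probabilities to equality of the full action-conditioned predictive distribution $p(\bm y_{t:t+L}\mid\cdot,\bm a_{t:t+L})$ for all $L$ and all action sequences requires carefully summing over intermediate classes while holding the arbitrary future actions fixed; verifying condition (ii) for $\psi$ when $R_t$ is not itself unifilar is the second delicate point.
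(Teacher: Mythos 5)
Your proposal is correct, but it reaches the theorem by a genuinely different route than the paper's proof in \autoref{app:Predictive Transducer Coarse-Graining}. The paper first isolates exactly your opening chain of equalities as \autoref{lemma:cool_pred} (for a predictive transducer, $p(\bm y_{t:t'}|\bm a_{t:t'},s_t)=p(\bm y_{t:t'}|\bm a_{t:t'},s_t,\bm h_{:t-1})=p(\bm y_{t:t'}|\bm a_{t:t'},\bm h_{:t-1})$), but then concludes via a strict-concavity-of-entropy argument adapted from Barnett and Crutchfield: writing the history-conditional future distribution as the convex mixture $\sum_{s_t} p(s_t|\bm h_{:t-1})\,p(\bm y_{t:t'}|\bm a_{t:t'},s_t,\bm h_{:t-1})$, strict concavity forces the mixture weights to concentrate, whence the coarse-graining $\epsilon'$ of states by equality of predictive distributions satisfies $\epsilon'_t(S_t)=\epsilon_t(\bm H_{:t-1})=M_t$. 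You instead chain the two conditional independences directly to conclude that $\pi_r$ equals the predictive distribution of every history reaching $r$ --- which, as your argument shows, makes the entropy detour unnecessary --- and then you do explicitly what the paper leaves implicit: you verify that $\psi$ is a reduction onto the $\epsilon$-transducer (using its unifilarity for condition (ii)), invoke \autoref{lemma:bisim_homo} to conclude the $\psi$-fibres form a bisimulation, and prove by horizon induction that every bisimulation refines the $\pi$-partition, so those fibres are the coarsest partition and the quotient is the minimal bisimulation. What your route buys is a more elementary, self-contained argument and an explicit justification of the words ``minimal bisimulation'' in the statement, which the paper's proof never directly addresses; what the paper's route buys is brevity and a tight link to the rival-partition lemma of Barnett and Crutchfield.

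One caution concerning the obstacle you flagged yourself: your horizon induction needs the \emph{joint} one-step law $p(y,C|r,a)$ to agree across bisimilar states, whereas \autoref{def:bisimulation}(ii) as displayed only constrains the $y$-marginalised quantity $p_t(C|s,a)=\sum_{y}\sum_{s''\in C}\kappa_t(y,s''|s,a)$. For Mealy kernels the two marginal conditions (i)--(ii) do not determine the joint: two states can correlate outputs with successor classes in opposite ways while matching both marginals, in which case their two-step predictive distributions differ and the induction fails. The reading that makes your step go through --- and that the paper itself uses, since the proof of \autoref{lemma:bisim_homo} manifestly works with $\Pr\big(S_{t+1}\in C\,|\,S_t=s,H_t=(y,a)\big)$, matching homomorphism condition (ii) of \autoref{def:minimal_world} --- is the class-transition condition taken conditionally on the emitted output. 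With that (evidently intended) reading of the definition, your argument is sound as proposed.
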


\begin{corollary}\label{corollary:e_transducer}
    The $\epsilon$-transducer is the minimal predictive model that generates a given interface.
\end{corollary}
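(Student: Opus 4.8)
The plan is to obtain the corollary as a direct consequence of \autoref{theo:Predictive Transducer Coarse-Graining}, combined with the correspondence between reductions and bisimulations established in \autoref{lemma:bisim_homo}. The argument proceeds in three short steps: confirming that the $\epsilon$-transducer is a predictive model, establishing that it has the fewest states among such models, and verifying that it is itself irreducible.

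First I would verify that the $\epsilon$-transducer belongs to the class of predictive world models. By \autoref{prop:e_transducer_properties}, the pair $(M_t, A_t, Y_t)$ with $M_t = \epsilon_t(\bm H_{:t-1})$ is a transducer presentation of $\mathcal{I}(\bm Y|\bm A)$. Since $M_t$ is a deterministic function of the past history $\bm H_{:t-1}$, it is observable, and observable world models are predictive (as noted in \autoref{sec:worlds_of_beliefs}). Hence the $\epsilon$-transducer is a legitimate element of the class over which minimality is being claimed.

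Next I would establish minimality in state count. Let $R_t$ be any predictive world model of the interface, with memory states $\mathcal{R}$. By \autoref{theo:Predictive Transducer Coarse-Graining}, the minimal bisimulation of $R_t$ is isomorphic to the $\epsilon$-transducer. By \autoref{lemma:bisim_homo} this bisimulation induces a reduction, which by \autoref{def:minimal_world} is built from a surjective map $\phi$ and therefore cannot increase the number of states. Consequently $|\mathcal{M}| \leq |\mathcal{R}|$ for every predictive world model $R_t$, so no predictive model generates the interface using fewer states than the $\epsilon$-transducer.

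Finally I would confirm irreducibility in the sense of \autoref{def:minimal_world}. Applying \autoref{theo:Predictive Transducer Coarse-Graining} to the $\epsilon$-transducer itself shows that its own minimal bisimulation is isomorphic to it; since every reduction of the $\epsilon$-transducer corresponds via \autoref{lemma:bisim_homo} to a bisimulation lying between the $\epsilon$-transducer and its minimal bisimulation, all such reductions must be isomorphic to the $\epsilon$-transducer, which is exactly the definition of a minimal transducer. Uniqueness up to isomorphism then follows from the uniqueness of the $\epsilon$-transducer already recorded after \autoref{prop:e_transducer_properties}. The main obstacle here is definitional rather than analytic: one must fix what \emph{minimal predictive model} means and check that the two natural readings --- having the fewest memory states, and being irreducible under reduction --- coincide, while confirming that the $\epsilon$-transducer genuinely lies inside the class being minimised, so that the minimum over state counts is actually attained rather than merely approached.
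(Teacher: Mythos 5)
Your proposal is correct and follows essentially the same route as the paper, which states the corollary as an immediate consequence of \autoref{theo:Predictive Transducer Coarse-Graining} without a separate proof. Your three steps --- membership of the $\epsilon$-transducer in the predictive class via observability, the state-count bound from the surjectivity of the reduction map in \autoref{def:minimal_world} together with \autoref{lemma:bisim_homo}, and irreducibility by applying the theorem to the $\epsilon$-transducer itself --- are exactly the bookkeeping the paper leaves implicit.
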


These results reveal, for instance, that the predictive beliefs on all world models converge via bisimulating into the memory states of the $\epsilon$-transducer. 
In effect, while bisimulations of arbitrary transducers may not fully reduce world models (\autoref{sec:bisimulation}), bisimulations of predictive transducers necessarily do so (see~\autoref{app:general_vs_predictive}). An analogous result can be derived for postdictive beliefs and world models, which are reduced into a `time-shifted' $\epsilon$-transducer. This will be developed in a future publication.

\section{Reverse interpretability via retrodictive world models}
\label{sec:backwards_interp}

The results of the last section show that the $\epsilon$-transducer is a universal construction that distils the information that is relevant for predicting future events, which can be used to evaluate the extent to which agents can learn through a given interface. 
However, prediction alone does not exhaust the possible knowledge-driven activities that can involve an agent. 
This section investigates reversible and retrodictive world models, exploring new opportunities for agent interpretability.

\subsection{Reversible transducers}
\label{sec:reversible_transducers}

The kernel of a transducer is usually used to update a world model $S_t$ from $s_\tau$ to $s_{\tau+1}$. 
Interestingly, some transducers can be used to run things `backwards', so that the world state can be updated from $s_{\tau+1}$ to $s_\tau$ while generating the same interface.
This is formalised by the next definition.\footnote{This definition differs importantly from thermodynamically reversible transducers \citep{jurgens2020functional}.}

\begin{definition}
\label{def:rev_transducer}
A \textbf{reversible transducer} is a transducer  $\big(\mathcal{S}, \mathcal{A}, \mathcal{Y}, \kappa, \dist{p} \big)$ together with an additional Markov kernel $\kappa^R$ of the form $\{\kappa^R_t(y,s'|a,s): a \in \mathcal{A}, y \in \mathcal{Y},s,s' \in \mathcal{S}, t\in \mathbb{N}\}$ such that
\begin{align}\label{eq:def_reversible_transducer}
    p(\bm y_{:t},\bm s_{:t+1}| \bm a_{:})
    = p(s_{0})\prod_{\tau=0}^t \kappa_\tau(y_\tau,s_{\tau+1}|s_{\tau},a_{t})
    = p(s_{t+1}|\bm a_{:t})\prod_{\tau=0}^t \kappa_\tau^\textnormal{\text{R}}(y_\tau,s_\tau|a_{\tau},s_{\tau+1}).
\end{align}
\end{definition}

A reversible transducer can be run in reverse to produce the same interface. 
This can be used to analyse prior events that resulted in an undesirable world state $s^*_{t+1}$ after an agent executed actions $\bm a_{:t}$. This can be investigated via the distribution
\begin{equation}
    p(\bm s_{:t},\bm y_{:t}|\bm a_{:t},s^*_{t+1})
    =
    \frac{p(\bm y_{:t},\bm s_{:t},s^*_{t}| \bm a_{:})}{p(s^*_{t+1}|\bm a_{:t})}
    = 
    \kappa_\tau^\text{R}(y_t,s_t|a_{t},s^*_{t+1})
    \prod_{\tau=0}^{t-1} \kappa_\tau^\text{R}(y_\tau,s_\tau|a_{\tau},s_{\tau+1}),
\end{equation}
which allows to study how outputs lead to actions and identify tipping points in the world dynamics.

Unfortunately, not all transducers are reversible, as swapping past and future could break the condition of anticipation-free --- which is needed for a world model to yield a transducer (see~\autoref{app:non-reversible_transducers}). 
A necessary and sufficient condition for transducers to be reversed is provided next (proof in \autoref{app: Reversing Processes}).

\begin{theorem}\label{teo_reversing}
    A transducer is reversible for $\tau\leq T$ if and only if the dynamics of its memory state satisfy 
    $p(s_{\tau}|s_{\tau+1},\bm a_{:T}) = p(s_{\tau}|s_{\tau+1},a_{\tau})$ for all $\tau\in\{0,\dots,T\}$, with a reverse kernel given by
    \begin{equation}
\label{eq:reversed_kernel}
    \kappa^\textnormal{\text{R}}_\tau(y,s|a,\tilde{s}) 
    = 
    \frac{\Pr(S_\tau=s|A_\tau=a)}{\Pr(S_{\tau+1}=\tilde{s}|A_\tau=a)}
    \kappa_\tau(y,\tilde{s}|a,s).
\end{equation}
\end{theorem}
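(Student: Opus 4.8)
The plan is to prove both implications by reading the reverse kernel in \eqref{eq:reversed_kernel} as the backward one-step conditional $\kappa^R_\tau(y,s|a,\tilde s) = p(y_\tau=y,\,s_\tau=s \mid s_{\tau+1}=\tilde s,\,a_\tau=a)$, which a single application of Bayes' rule to $\kappa_\tau$ shows equals exactly the stated ratio of state marginals times $\kappa_\tau$. Throughout I would work relative to a fixed full-support input process, so that the marginals $\Pr(S_\tau=s\mid A_\tau=a)$ and every conditional below are well defined; anticipation-freeness then guarantees these are insensitive to future actions, so that conditioning on $\bm a_{:T}$ and on $\bm a_:$ coincide for the quantities at hand. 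The backbone of both directions is the elementary fact that reversing the factorisation of a time-inhomogeneous Markov chain again yields a Markov chain, specialised here to the state chain $S_0\to\cdots\to S_{T+1}$ that the transducer induces once an action sequence is fixed.

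For sufficiency, I would start from the forward presentation guaranteed by \autoref{lemma:world_model} and \autoref{lemma:transducer_as_world_model}, splitting each factor as $\kappa_\tau(y_\tau,s_{\tau+1}|s_\tau,a_\tau) = p(y_\tau|s_\tau,s_{\tau+1},a_\tau)\,p(s_{\tau+1}|s_\tau,a_\tau)$. Collecting the transition factors exhibits the outputs as conditionally independent given the full state-action trajectory, so the joint splits into a pure state chain times $\prod_\tau p(y_\tau|s_\tau,s_{\tau+1},a_\tau)$. I would then reverse the state chain: its exact backward factorisation reads $p(\bm s_{:T+1}|\bm a_:) = p(s_{T+1}|\bm a_:)\prod_\tau p(s_\tau|s_{\tau+1},\bm a_:)$, and the hypothesised locality condition collapses each backward transition to $p(s_\tau|s_{\tau+1},a_\tau)$. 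Multiplying the two products back together regroups the factors into $p(s_{T+1}|\bm a_{:T})\prod_\tau p(y_\tau,s_\tau|s_{\tau+1},a_\tau)$, and the Bayes identity from the first paragraph identifies each factor with $\kappa^R_\tau$ from \eqref{eq:reversed_kernel}, yielding the reverse presentation.

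For necessity, I would assume the reverse presentation holds and equate it with the forward one, whose product equals the joint $p(\bm y_{:T},\bm s_{:T+1}|\bm a_:)$. Dividing through by $p(s_{T+1}|\bm a_{:T})$ expresses $p(\bm y_{:T},\bm s_{:T}|s_{T+1},\bm a_:)$ as $\prod_\tau \kappa^R_\tau$. Summing out every output leaves the state-only identity $p(\bm s_{:T}|s_{T+1},\bm a_:)=\prod_\tau q_\tau(s_\tau|a_\tau,s_{\tau+1})$, where $q_\tau$ is the output-marginal of $\kappa^R_\tau$ and is a genuine stochastic kernel. This displays the backward state chain as Markov with local transition $q_\tau$, so reading off the one-step backward conditional gives $p(s_\tau|s_{\tau+1},\bm a_:)=q_\tau(s_\tau|a_\tau,s_{\tau+1})$, which depends on the action sequence only through $a_\tau$ — precisely the asserted condition. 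Reinstating the outputs by the same marginalisation pins down $\kappa^R_\tau(y_\tau,s_\tau|a_\tau,s_{\tau+1})=p(y_\tau,s_\tau|s_{\tau+1},a_\tau)$, and Bayes' rule rewrites this as \eqref{eq:reversed_kernel}.

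I expect the main obstacle to be bookkeeping the locality of the action dependence. The backward transition $p(s_\tau|s_{\tau+1},\cdot)$ is a priori a function of the whole past action history $\bm a_{:\tau}$, since $S_{\tau+1}$ is generated from $\bm a_{:\tau}$, and the entire force of the theorem is that reversibility is equivalent to this dependence shrinking to the single action $a_\tau$. Making this rigorous requires ensuring that the marginals appearing in \eqref{eq:reversed_kernel} are well defined and genuinely local, which is exactly why fixing a full-support reference input process and invoking anticipation-freeness to discard future-action conditioning is essential. A secondary, purely technical point is the finite-horizon telescoping when summing out states: it must be organised so that each normalised kernel integrates to one without disturbing its neighbours in the chain.
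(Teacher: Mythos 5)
Your proof is correct and shares the skeleton of the paper's argument --- an exact backward factorisation of the joint conditioned on the action sequence, the locality hypothesis collapsing the state-retrodiction factor, and a single Bayes step yielding \autoref{eq:reversed_kernel} --- but you organise it differently. The paper reverses the joint pair process $(S_\tau,Y_\tau)$ directly, computing $p(s_\tau,y_\tau\mid \bm s_{\tau+1:t+1},\bm y_{\tau+1:t},\bm a_{:t})$ as a ratio of forward products and cancelling kernel factors, which shows in one stroke that the backward pair conditional always equals $p(s_\tau\mid s_{\tau+1},\bm a_{:t})\,p(y_\tau\mid s_\tau,s_{\tau+1},a_\tau)$, with all potential nonlocality confined to the state term. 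You instead split each kernel as $\kappa_\tau = p(y_\tau\mid s_\tau,s_{\tau+1},a_\tau)\,p(s_{\tau+1}\mid s_\tau,a_\tau)$, reverse only the marginal state chain, and glue the conditionally independent output factors back on; this buys you two things. First, your necessity direction is more explicit than the paper's: marginalising outputs from the assumed reverse presentation and telescoping the normalised kernels $q_\tau$ pins down $p(s_\tau\mid s_{\tau+1},\bm a_:)=q_\tau(s_\tau\mid a_\tau,s_{\tau+1})$, whereas the paper leaves the only-if direction implicit in the fact that its general backward factorisation must match any posited local reverse kernel. Second, by identifying each backward factor directly with the single-action conditional $p(y_\tau,s_\tau\mid s_{\tau+1},a_\tau)$, you make the paper's separate consistency check --- verifying via the ratio identity $p(s_\tau\mid\bm a_{:t})/p(s_{\tau+1}\mid\bm a_{:t})=p(s_\tau\mid a_\tau)/p(s_{\tau+1}\mid a_\tau)$ that substituting \autoref{eq:reversed_kernel} into the reverse factorisation recovers the forward joint --- automatic rather than an extra step. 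Your stated caveats (fixing a full-support input process so the marginals in \autoref{eq:reversed_kernel} are well defined, and invoking anticipation-freeness to discard future-action conditioning, which the paper establishes at the start of its derivation as $p(\bm y_{:t},\bm s_{:t+1}\mid\bm a_:)=p(\bm y_{:t},\bm s_{:t+1}\mid\bm a_{:t})$) are exactly the hygiene the paper's computation relies on implicitly, so the two proofs rest on the same hypotheses.
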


Although \autoref{eq:reversed_kernel} always leads to a valid kernel due to Bayes rule, this may not generate the same interface --- in fact, \autoref{eq:def_reversible_transducer} only holds when the conditions in \autoref{teo_reversing} are met. 
Interestingly, those conditions can be attained in a variety of ways. 
For example, memoryless transducers (see~\autoref{lemma:interface_memory}) are always reversible as $p(s_{t}|s_{t+1},\bm a_{:t}) = p(s_{t}|s_{t+1},a_{t}) = p(s_{t})$.
Also, consistent with results by~\cite{ellison2011information}, \emph{action-agnostic} transducers (i.e. hidden Markov models) can be shown to be always reversible (see~\autoref{app:hmm_reverse}). 
Finally, if the transducer is \textbf{action-counifilar} (i.e. if there exists $f$ such that $S_{t}=f(S_{t+1},A_t)$ can be deterministically updated)\footnote{This is a special case of \textbf{\textit{counifilar}} transducers, in which $S_{t}=f_t(S_{t+1}, A_t, Y_t)$ holds.} is also sufficient for reversibility, as such transducer satisfies $p(s_{\tau}|s_{\tau+1},\bm a_{:T})=\delta^{s_\tau}_{f(s_{\tau+1},a_\tau)}=p(s_{\tau}|s_{\tau+1},a_\tau)$. Examples of these conditions are illustrated in~\autoref{fig:CoUnifilarTransducers}.

\begin{figure*}
\includegraphics[width=\columnwidth]{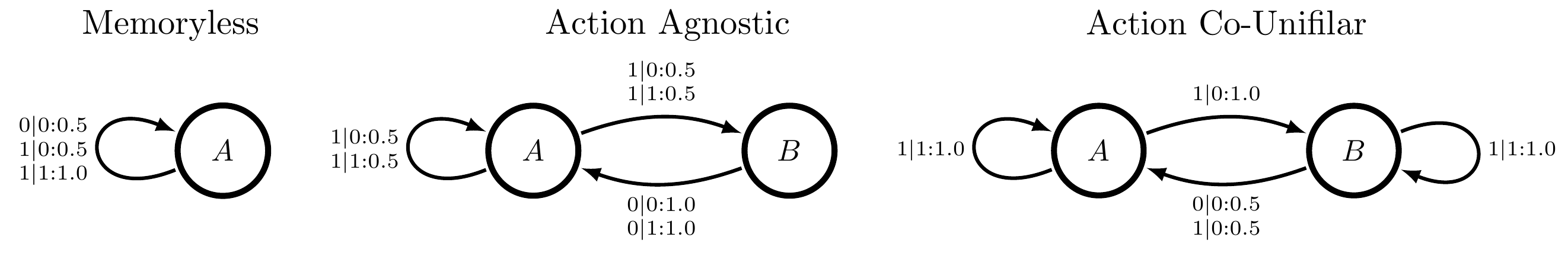}
\caption{
\centering
\small{Three examples of reversible transducers. Circles represent world states, and arrows represent transitions and their labels describe the associated actions and outputs. For instance, the label \texttt{$1|0\!\!:\!\!0.5$} on the edge from $s_0$ to $s_1$ indicates that $\Pr(S_{t+1}=s_1,Y_t=1|A_t=0,S_t=s_0)=0.5$.}}
\label{fig:CoUnifilarTransducers} 
\end{figure*}

\subsection{Retrodictive beliefs}
\label{sec:retro_beliefs}

The previous subsection showed how there are substantial restrictions on the reversibility of transducers. 
Even if an interface cannot be generated via a reversible transducer, there are still `retrodictive' constructions that can be used to investigate their dynamics. 
Retrodiction uses the future to learn about the past in the same way that prediction uses the past to learn about the future. 
Formal treatments of retrodiction include classic work in physics~\citep{watanabe1955symmetry} and filtering theory~\citep{jazwinski1970stochastic}, and in more recent years have been formalised in computational mechanics~\citep{ellison2009prediction} and category theory~\citep{parzygnat2023axioms, parzygnat2024reversing}.

Following these ideas, one can build \textbf{\textit{retrodictive Bayesian beliefs}} (or mixed states) of a world model $S_t$ as distributions over $\mathcal{S}$ given by 
$\dist{r}_{t}(s_0) \coloneq \Pr(S_0=s_0|\bm H_{:t}=\bm h_{:t})$. 
These beliefs provide an analogue of the backward pass of Bayesian smoothing~\citep{jazwinski1970stochastic}, in the same way that the predictive and postdictive beliefs of input-Moore transducers correspond to different steps of Bayesian filtering (\autoref{prop:postdictive_stuff}). 
However, in contrast with predictive Bayesian beliefs which can always be faithfull (\autoref{prop:predictive_beliefs}), retrodictive beliefs may not be able to generate the same interface.

In order to study the dynamics of retrodictive beliefs, we introduce the \textbf{\textit{bi-directional mixed-state matrix}} (BDMSM) of an action-outcome sequence $\rho(\bm y_{0:t},\bm a_{0:t})$ as the $|\mathcal{S}| \times |\mathcal{S}|$ matrix given by 
\begin{align}
    \rho(\bm y_{0:t},\bm a_{0:t}) 
    \coloneq
    \sum_{s_0,s_{t+1}\in\mathcal{S}}p(s_0,s_{t+1}|\bm y_{0:t},\bm a_{0:t})\bm e_{s_{t+1}} \bm e^\intercal_{s_0}~.
\end{align}
The BDMSM allows to calculate retrodictive beliefs and their dynamics (proof in~\autoref{app:proof_retro_mix}).
\begin{theorem}\label{teo:retro_MSP}
    Given a world model $S_t$, its BDMSM, predictive Bayesian beliefs $\dist{b}_\tau$ and retrodictive Bayesian beliefs $\dist{r}_\tau$ can be calculated as
    \begin{align}
        \rho(\bm y_{0:\tau},\bm a_{0:\tau})
        =
        \frac{T^{(\bm y_{0:\tau}|\bm a_{0:\tau})}  \rho_0 }
        { \bm 1^\intercal \cdot T^{(\bm y_{0:\tau}|\bm a_{0:\tau})}  \rho_0 \cdot\bm 1}, 
        \quad 
        \dist{b}_\tau =\rho(\bm y_{0:\tau},\bm a_{0:\tau}) \cdot \bm 1, 
        \quad\text{and}\quad 
        \dist{r}_{\tau} = \rho(\bm y_{0:\tau}, \bm a_{0:\tau})^\intercal \cdot \bm 1,
        \nonumber
    \end{align}
    where $T^{(y_{\tau:\tau'}|a_{\tau:\tau'})} = \prod_{j=\tau}^{} T_j^{(y_{j}|a_{j})}$
    and $\rho_t = \sum_{s_t} p(s_t)\bm e_{s_{t}} \bm e^\intercal_{s_t} $ 
    is a diagonal matrix. 
\end{theorem}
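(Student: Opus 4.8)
The plan is to prove everything from a single matrix identity for the BDMSM, after which the two belief formulas drop out as the row- and column-marginals of $\rho(\bm y_{0:\tau},\bm a_{0:\tau})$. So the bulk of the work is to verify that $\rho = T^{(\bm y_{0:\tau}|\bm a_{0:\tau})}\rho_0 / (\bm 1^\intercal T^{(\bm y_{0:\tau}|\bm a_{0:\tau})}\rho_0 \bm 1)$ has the claimed probabilistic entries, and then read off the marginals.

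First I would unpack the product $T^{(\bm y_{0:\tau}|\bm a_{0:\tau})} = T_\tau^{(y_\tau|a_\tau)}\cdots T_0^{(y_0|a_0)}$ entrywise. Using $[T_j^{(y|a)}]_{ik}=\kappa_j(y,s_i|a,s_k)$ and expanding the matrix product over the intermediate indices, the $(s_{\tau+1},s_0)$ entry becomes $\sum_{s_1,\dots,s_\tau}\prod_{j=0}^{\tau}\kappa_j(y_j,s_{j+1}|a_j,s_j)$, which by the transducer factorisation \autoref{eq:transducer}, after conditioning on $s_0$, equals $p(\bm y_{0:\tau},s_{\tau+1}|\bm a_{0:\tau},s_0)$. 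Next I would multiply on the right by the diagonal matrix $\rho_0$, whose $(s_0,s_0)$ entry is $p(s_0)$. Here the key point is that anticipation-freeness (condition (2) of \autoref{def:world_model}) gives $p(s_0|\bm a_{0:\tau})=p(s_0)$, so the product $T^{(\bm y_{0:\tau}|\bm a_{0:\tau})}\rho_0$ has $(s_{\tau+1},s_0)$ entry equal to the joint $p(s_0,s_{\tau+1},\bm y_{0:\tau}|\bm a_{0:\tau})$. Summing all entries via $\bm 1^\intercal(\cdot)\bm 1$ marginalises out both $s_0$ and $s_{\tau+1}$, yielding the normaliser $p(\bm y_{0:\tau}|\bm a_{0:\tau})$; dividing through and applying Bayes' rule then gives $p(s_0,s_{\tau+1}|\bm y_{0:\tau},\bm a_{0:\tau})$, which is exactly the $(s_{\tau+1},s_0)$ entry in the definition of the BDMSM. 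This settles the first identity.

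Finally, the two belief identities follow purely by marginalising this joint. Right-multiplying $\rho$ by $\bm 1$ sums over the column index $s_0$, leaving the row vector with entries $p(s_{\tau+1}|\bm y_{0:\tau},\bm a_{0:\tau})$, i.e.\ the predictive Bayesian belief over the latest state (cross-checked against the update rule \autoref{eq:update_mixed_states} of \autoref{prop:predictive_beliefs}). Transposing first and then right-multiplying by $\bm 1$ instead sums over the row index $s_{\tau+1}$, leaving $p(s_0|\bm y_{0:\tau},\bm a_{0:\tau})=\dist{r}_\tau$, the retrodictive belief.

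The main obstacle I expect is bookkeeping rather than conceptual: keeping the matrix conventions exactly straight (which of $s_0,s_{\tau+1}$ indexes rows versus columns, and the right-to-left ordering of the $T_j$ product so that $T_0$ acts first on the $s_0$-slot), and invoking anticipation-freeness at the right moment to decouple $p(s_0)$ from the conditioning action sequence $\bm a_{0:\tau}$. Without that decoupling step the joint would not factor cleanly and the Bayes normalisation would fail, so I would make sure to state it explicitly.
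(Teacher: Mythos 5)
Your proposal is correct and follows essentially the same route as the paper's proof in the appendix: compute the entries of $T^{(\bm y_{0:\tau}|\bm a_{0:\tau})}\rho_0$ as the joint $p(s_{\tau+1},s_0,\bm y_{0:\tau}|\bm a_{0:\tau})$ (using $p(s_0|\bm a_{0:\tau})=p(s_0)$, which the paper also invokes), normalise by $\bm 1^\intercal\, T^{(\bm y_{0:\tau}|\bm a_{0:\tau})}\rho_0\,\bm 1 = p(\bm y_{0:\tau}|\bm a_{0:\tau})$ via Bayes, and read off the predictive and retrodictive beliefs as the two marginals of the BDMSM. The only difference is presentational --- the paper writes the identical argument in Dirac bra--ket notation while you use index/matrix notation --- and your explicit flagging of the anticipation-freeness step matches the paper's own caveat that the construction requires the initial state to be uncorrelated with the action sequence.
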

\begin{corollary}
    \label{cor:smoothing}
    The forward-time update of the BDMSM is given by
    \begin{align}
        \rho(\bm y_{0:\tau+1},\bm a_{0:\tau+1})
        =
        \frac{T^{(y_{\tau+1}|a_{\tau+1})}}
        {\bm 1^\intercal \cdot T^{(y_{\tau+1}|a_{\tau+1})}\rho(\bm y_{0:\tau},\bm a_{0:\tau}) \cdot \bm 1}
        \cdot
        \rho(\bm y_{0:\tau},\bm a_{0:\tau}),
        \end{align}
        while the reverse-time update is 
        \begin{align}
        \rho(\bm y_{-1:\tau},\bm a_{-1:\tau})
        =
        \rho(\bm y_{0:\tau},\bm a_{0:\tau})
        \cdot
        \frac{\rho^{-1}_0T^{(y_{-1}|a_{-1})}\rho_{-1}}{\bm 1^\intercal \cdot\rho(\bm y_{0:\tau},\bm a_{0:\tau})\rho_0^{-1}T^{(y_{-1}|a_{-1})}\rho_{-1} \cdot \bm 1}.
    \end{align}
\end{corollary}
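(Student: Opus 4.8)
The plan is to derive both recursions directly from the closed form established in \autoref{teo:retro_MSP}, namely $\rho(\bm y_{0:\tau},\bm a_{0:\tau}) = T^{(\bm y_{0:\tau}|\bm a_{0:\tau})}\rho_0 \,/\, \big(\bm 1^\intercal T^{(\bm y_{0:\tau}|\bm a_{0:\tau})}\rho_0\, \bm 1\big)$, by peeling a single transition matrix off the transfer product and letting the scalar normalisers cancel. The two directions differ only in \emph{where} the extra factor attaches: since $T^{(\bm y_{0:\tau}|\bm a_{0:\tau})}$ acts on the final-state index from the left and on the initial-state index from the right, appending a future step multiplies on the left while prepending a past step multiplies on the right.

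For the forward update I would first use the semigroup structure of the transfer operator implied by the ordering convention of \autoref{eq:matrix_notation} (latest factor on the left), writing $T^{(\bm y_{0:\tau+1}|\bm a_{0:\tau+1})} = T_{\tau+1}^{(y_{\tau+1}|a_{\tau+1})}\, T^{(\bm y_{0:\tau}|\bm a_{0:\tau})}$. Substituting into the closed form of \autoref{teo:retro_MSP} and then recognising, again by that theorem, that $T^{(\bm y_{0:\tau}|\bm a_{0:\tau})}\rho_0 = p(\bm y_{0:\tau}|\bm a_{0:\tau})\,\rho(\bm y_{0:\tau},\bm a_{0:\tau})$, the common scalar $p(\bm y_{0:\tau}|\bm a_{0:\tau})$ appears in both numerator and denominator and cancels, leaving exactly the stated forward recursion. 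This step is pure bookkeeping.

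The reverse update uses the same peeling with the time-$(-1)$ factor attached on the right: applying \autoref{teo:retro_MSP} to the window extended to $-1$ (with initial diagonal $\rho_{-1}$ built from $p(s_{-1})$) gives $T^{(\bm y_{-1:\tau}|\bm a_{-1:\tau})} = T^{(\bm y_{0:\tau}|\bm a_{0:\tau})}\, T_{-1}^{(y_{-1}|a_{-1})}$. The key move, which I expect to be the main obstacle, is to re-express $T^{(\bm y_{0:\tau}|\bm a_{0:\tau})}$ itself through the current BDMSM so the recursion is closed in $\rho$ alone. Inverting $T^{(\bm y_{0:\tau}|\bm a_{0:\tau})}\rho_0 = p(\bm y_{0:\tau}|\bm a_{0:\tau})\,\rho(\bm y_{0:\tau},\bm a_{0:\tau})$ yields $T^{(\bm y_{0:\tau}|\bm a_{0:\tau})} = p(\bm y_{0:\tau}|\bm a_{0:\tau})\,\rho(\bm y_{0:\tau},\bm a_{0:\tau})\,\rho_0^{-1}$, which is legitimate because $\rho_0$ is diagonal and invertible whenever every state carries positive initial probability. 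Substituting this and cancelling the scalar $p(\bm y_{0:\tau}|\bm a_{0:\tau})$ leaves the right-multiplying factor $\rho_0^{-1} T_{-1}^{(y_{-1}|a_{-1})}\rho_{-1}$ intact, reproducing the stated reverse recursion.

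I would close by flagging the two genuine hypotheses hidden in the algebra: invertibility of $\rho_0$ (positivity of the initial distribution, otherwise one restricts to its support and uses a pseudo-inverse), and well-definedness of the backward extension to time $-1$, i.e. that the interface supplies a $p(s_{-1})$ from which $\rho_{-1}$ is formed. Neither affects the manipulations above, but both should be recorded so that the recursions are unambiguous.
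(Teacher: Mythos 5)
Your proposal is correct and follows essentially the same route as the paper: the paper's proof of \autoref{cor:smoothing} likewise derives both recursions directly from the closed form in \autoref{teo:retro_MSP}, peeling a single transition operator off the transfer product and absorbing the scalar normalisers, with the reverse step expressed via the modified operator $\rho_0^{-1}T^{(y_{-1}|a_{-1})}\rho_{-1}$ exactly as you construct it. The two hypotheses you flag---invertibility of $\rho_0$ and well-definedness of the extension to time $-1$---correspond to the paper's own closing caveat that the reverse-time operation is valid only ``if its initial state is assumed to be uncorrelated with future action sequences.''
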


Retrodictive beliefs can be used to infer the most likely past states of the world given a sequence of future actions and outcomes. 
This could lead, for instance, to identifying the origins of specific behavioural patterns exhibited by an AI agent, 
which can in turn be used to characterise favourable or dangerous initial conditions via counterfactual reasoning~\citep{karimi2021algorithmic}.

\section{Conclusion}


This paper investigated the fundamental limits that shape the usage of world models as tools to evaluate AI agents. 
This follows recent proposals to use world models not as tools for the agent (as in standard model-based reinforcement learning), but as tools for the scientist in charge of evaluating its safety and reliability~\citep{dalrymple2024towards}.   
By formalising these ideas via principles from computational mechanics, this approach led to a series of proposals for how to assess AI agents that require no assumptions about an agent's policy, architecture, or capabilities, being broadly applicable to systems regardless of how they were designed or trained. 
This framework revealed fundamental limits, challenges, and opportunities inherent to world modelling, leading to actionable guidelines that can inform core design choices instrumental for effective agent evaluation (see~\autoref{fig:diagram}).

Our framework revealed a fundamental trade-off between the efficiency and interpretability of world models. 
Generalised transducers were found to generate the most efficient implementations, but these come at the cost of inducing quasi-probabilities --- yielding opaque world models that cannot be sampled.~\footnote{This is reminiscent of the notion of Kantian noumena, which suggests that things-in-themselves are beyond knowledge.} 
Our results also revealed that the $\epsilon$-transducer, a generalisation of the geometric belief structure recently found in the residual stream of transformers~\citep{shai2025transformers}, yields the unique minimal world model that could be calculated by an agent in real time. 
The uniqueness of the $\epsilon$-transducer implies that the refinement of the beliefs of any optimal predictive agent must eventually reach this model, regardless of the world model the agent uses. Thus, the $\epsilon$-transducer can be seen as encapsulating all the predictive information that is available for agents, and hence establishes what is learnable about an environments through a particular interface.

We also introduced retrodictive world models as tools to investigate the origins of undesirable events or behaviours. 
These models allow retrospective analyses that could, for instance, identify `danger zones' that are likely to lead to undesirable future outcomes. 
This view complements standard interpretability approaches, which typically assess agents via their capabilities to predict and plan with respect to future events~\citep{nanda2023emergent, gurnee2023language, shai2025transformers}.

While this work focused on the fundamental limits of world modelling under the dictum of perfect reconstruction, future work may relax this constraint by employing notions such as approximate homomorphisms~\citep{taylor2008bounding} or bisimulation~\citep{girard2011approximate}, rate-distortion trade-offs~\citep{marzen2016predictive}, or other approaches~\citep{subramanian2022approximate}. 
Another promising direction to enable efficient modelling is to exploit the compositional structure of the world~\citep{lake2023human,elmoznino2024complexity,baek2025dreamweaver,fu2025evaluating}.

The approach taken here complements the substantial body of work that employs world models to improve the performance of agents in model-based reinforcement learning~\citep{ha2018recurrent, hafner2019dream, hafner2023mastering,hansen2023td}, and also on representations from the point of view of the agent (see~\citep{ni2024bridging} and references within). 
In fact, the formalism presented here provides a unified framework for reasoning about both (i) models that represent physical processes external to the agent and (ii) models that describe knowledge-gathering processes internal to the agent~\citep{kaelbling1998planning, biehl2022interpreting, virgo2023unifilar}. 
Furthermore, the relationship found between predictive and postdictive machines in I-O Moore transducers and Bayesian and Kalman filtering sheds new light into the mechanisms supporting these well-established procedures. 
Moreover, the formalism of belief transducers opens several interesting avenues for future work, including the investigation of more general belief update dynamics on, for example, curved statistical manifolds~\citep{morales2021generalization, aguilera2024explosive}.

Overall, the ideas put forward here establish new bridges between related subjects in reinforcement learning, control theory, and computational mechanics, which we hope may serve as a Rosetta stone for navigating across these literatures.  
These new insights also have interesting implications for cognitive and computational neuroscience~\citep{matsuo2022deep}, particularly pertaining the formal characterisation of the internal world (`umwelt') of an agent~\citep{von1909umwelt, ay2015umwelt, baltieri2025bayesian}, which will be explored in future work.

\appendix

\section*{Acknowledgments}

The authors thank Lionel Barnett, Martin Biehl, Chris Buckley, Matteo Capucci, James Crutchfield, Alexander Gietelink Oldenziel, Adam Goldstein, Alexandra Jurgens, Vanessa Kosoy, Sarah Marzen, Paul Riechers, Anil Seth, Adam Shai, and Lucas Teixera for inspiring discussions and useful feedback. 
The work of F.R. and A.B. has been supported by UK ARIA's Safeguarded AI programme. 
F.R. has also been supported by the PIBBSS Affiliateship programme. 
M.B. was supported by JST, Moonshot R\&D, Grant Number JPMJMS2012.

\bibliography{main}
\bibliographystyle{rlj}

\beginSupplementaryMaterials

\section{Example of an interface and multiple world models}
\label{app:example}

Let us present an example to illustrate the notions of interface and world models. This example showcases how a single interface can be generated by various world models with different properties.

Consider a scenario in which a robot is manipulating a deck of cards. 
This setting can be described via a world model that can adopt $|\mathcal{S}|=13!\approx 6\times 10^{10}$ possible states, corresponding to the possible arrangements of the deck. 
At every time point, the robot can take two possible actions: either it puts the front card in the back ($\alpha_1$), or it shuffles the deck ($\alpha_2$). Thus, the set of actions available to the robot is $\mathcal{A}=\{\alpha_1,\alpha_2\}$. 
Additionally, at every time point the robot can observe the card that is on top of the deck. However, we will assume that the sensory apparatus of the robot is not capable of reading the number or the suit of the card, but only its colour. Hence, the possible outcomes of this scenario for the robot are $\mathcal{Y}=\{\texttt{black},\texttt{red}\}$.

In this scenario, the interface of the robot is constituted by a collection of probability distributions of the form $p(\bm y_{:t}|\bm a_{:t})$ relating sequences of actions with sequences of outcomes. Furthermore, if one tracks the state of the deck at time $t$ via the variable $S_t$, this results in a transformer $(S_t, A_t, Y_t)$ (see \autoref{def:transducer}). If we wanted to implement this transducer on a computer, specifying its kernel would require a substantial amount of memory due to the large number of possible world states.

Following the considerations made in \autoref{sec:scenario_and_approach}, one could instead forget about the fact that there is an underlying deck of cards and focus just on the sequences of colours that the agent records.
By doing this, one may notice that, given a sequence of actions and outcomes $(\bm a_{:t},\bm y_{:t})$, the only information that is relevant to predict the next outcome $y_{t+1}$ is the number of red and black cards observed since the last time the agent took action $\alpha_2$ (shuffling the deck).
If one tracks this information at time $t$ via the variable $M_t$, then \autoref{prop:e_transducer_properties} guarantees that $(M_t,A_t,Y_t)$ provides an alternative transducer presentation of the original interface. 
Note that this is an `epistemic' world model that reflects the agent's state of knowledge (as described in \autoref{sec:epistemic_worlds}), contrasting with $S_t$ which reflects an objective physical process taking place `out there'. 
Interestingly, $M_t$ uses only roughly $13^2$ states instead of $13!$ states, requiring substantially fewer memory resources. 
Furthermore, $M_t$ reflects all the relevant information that an agent with this particular interface (i.e. limited to recognising colours) could ever want to take into account in order to make informed actions in this scenario.
Therefore, this new world model is not only more memory efficient, but also reveals what information an agent of this kind could and should learn.

Before concluding, let us add some considerations related to the ideas explored in the latest part of the manuscript. 
\autoref{sec:backwards_interp} studies world models that `run backwards' --- i.e. can be updated in reverse time. 
The interface chosen in this example does not allow for such a reversible presentation, as the combinations of shuffling ($\alpha_2$) and card flipping ($\alpha_1$) lead to world dynamics that violate the conditions outlined in \autoref{teo_reversing}.
One could attain a reversible interface if we considered a different set of actions, for example $\mathcal{A}'=\{\alpha_1,\alpha_3$\} with $\alpha_3$ corresponding to the robot taking the card in the back and putting that on top of the deck.
Indeed, the world dynamics resulting from such a set of actions  
do satisfy the sufficient condition for reversibility discussed at the end of \autoref{sec:reversible_transducers}.

\section{Sufficient statistics}
\label{app:sufficient_stats}

Given the importance of the notion of sufficient statistics in this work, we use this appendix to provide an account of its origins and significance.

Consider a random vector $\bm X = (X_1,\dots,X_n)\in\mathcal{X}^n$ that follows a distribution with parameter $\theta\in\Theta$, and a `statistic' $T(\cdot)$ (that is, a mapping $T:\mathcal{X}^n\to\mathbb{R}$). Following~\cite{fisher1922mathematical}, $Y=T(\bm X)$ is a \emph{classical/frequentist sufficient statistic} for $\bm X$ w.r.t. $\theta$ if the value of $\Pr_\theta(\bm X = \bm x | Y=y)$ is the same $\forall\theta\in \Theta$~\citep{casella2002statistical}. This means that the information given by $\bm X$ that is not in $Y$ is irrelevant to estimating the value of $\theta$. 

Another approach to statistical sufficiency due to ~\cite{kolmogorov1942determination}, which can be called \emph{strong bayesian statistical sufficiency}, states that $Y$ is sufficient for $\bm X$ w.r.t. $\theta$ if $\bm X$ is statistically independent of $\theta$ given $Y$ for any prior distribution over $\theta$. Strong Bayesian sufficiency can be shown to imply classical sufficiency, but the converse does not necessarily hold~\citep{blackwell1982bayes}.

A useful generalisation of the above condition, which we simply call \emph{(weak) Bayesian statistical sufficiency}, follows Kolmogorov's condition just for a given distribution of $\theta$~\citep{cover2012elements}.
In particular, given two random variables $X$ and $Y$, a statistic $T=f(X)$ is said to be a \emph{Bayesian sufficient statistic for $X$ w.r.t. $Y$} if $X$ is statistically independent of $Y$ given $T$.
In terms of Shannon's mutual information, this corresponds to the condition $\Pr(X=x|Y=y,T=t)=\Pr(X=x|T=t)$. 
This is equivalent to the information-theoretic condition $I(X;Y|T)=0$, which states that $X$ and $Y$ share no information that is not given by $T$~\citep{cover2012elements}.
This is the definition of sufficient statistics that we use in this work.

Another way to think about sufficient statistics is to notice that if $T=f(X)$ for some mapping $f$ then $T-X-Y$ is a Markov chain.
Then, thanks to the data processing inequality, $I(Y;X)\geq I(Y;T)$ as `processing' $X$ into $T$ cannot increase its information about $Y$~\citep{cover2012elements}.
Interestingly, the equality $I(Y;X) = I(Y;T)$ is attained if an only if $X-T-Y$ is also a Markov chain, which corresponds to when $T$ is a sufficient statistic. In summary, sufficient statistics are related to optimal (i.e. lossless) data processing~\citep{kullback1997information}.

Sufficient statistics always exist --- in particular, $X$ is always sufficient for itself. The search for optimal but also efficient statistics leads to the idea of minimal sufficiency: a sufficient statistic $S$ is minimal if for all other sufficient statistic $T$ exists a function $f(\cdot)$ such that $S=f(T)$~\citep{lehmann2012completeness}, or equivalently, the following Markov chain holds: $S-T-X-Y$. From an information-theoretic point of view, a minimal sufficient statistic is the sufficient statistic of minimal entropy, hence providing the most parsimonious representation of the relevant information. 
Minimal sufficient statistics exist for a wide range of settings~\cite[Sec. 1.6]{lehmann2006theory}, and are unique up to isomorphisms (i.e. re-labelling). Moreover, the minimal sufficient statistics of $X$ w.r.t. $Y$ can be built explicitly, built as the partition induced by the following equivalence relation~\cite[Def. 2]{asoodeh2014notes}: 
\begin{equation}
x\sim x' 
\quad \text{iff} \quad
\forall y\in\mathcal{Y}:~ p(y|x) = p(y|\bm x').
\end{equation}
Note the similarities between this way of building minimal sufficient statistics, bisimulation (\autoref{def:bisimulation}), and the construction of the $\epsilon$-transducer via the equivalence relation in~\autoref{eq:epsilon}.

\section{Proof of \autoref{lemma:world_model}}
\label{app:proof_of_factorisation_forward}

\begin{proof}
Let us first prove that if $S_t$ is a world model for the interface $\mathcal{I}(\bm Y|\bm A)$, then \autoref{eq:world_model} holds. 
Using property~(2) of world models, together with the fact that the interface is anticipation-free, one can show that
\begin{align}
    p(\bm y_{:\tau},\bm s_{:\tau+1}|\bm a_{:})
    &=
    p(\bm y_{:\tau}|\bm a_{:}) p(\bm s_{:\tau+1}|\bm y_{:\tau},\bm a_{:}) \nonumber \\
    &=
    p(\bm y_{:\tau}|\bm a_{:\tau}) p(\bm s_{:\tau+1}|\bm y_{:\tau},\bm a_{:\tau}) \nonumber \\
    &=
    p(\bm y_{:\tau},\bm s_{:\tau+1}|\bm a_{:\tau}),
\end{align}
which holds for all $\tau\in\mathbb{N}$. Then, one can use this equality recursively to derive the following:
\begin{align}
    p(\bm y_{:t},\bm s_{:t+1}|\bm a_{:})
    &= 
    p(\bm y_{:t},\bm s_{:t+1}|\bm a_{:t}) \nonumber \\
    &= 
    p(\bm y_{:t-1},\bm s_{:t}|\bm a_{:t}) p(y_{t},s_{t+1}|\bm y_{:t-1},\bm s_{:t},\bm a_{:t}) \nonumber \\
    &=
    p(\bm y_{:t-1},\bm s_{:t}|\bm a_{:t-1})
    p(y_{t},s_{t+1}|\bm y_{:t-1},\bm s_{:t},\bm a_{:t}) \nonumber \\
    &=
    p(\bm y_{:t-2},\bm s_{:t-1}|\bm a_{:t-1})
    \prod_{\tau=t-1}^t p(y_{\tau},s_{\tau+1}|\bm y_{:\tau-1},\bm s_{:\tau},\bm a_{:\tau}) \nonumber \\
    &= \ldots \nonumber \\
    &=
    p(s_0)\prod_{\tau=0}^t p(y_\tau,s_{\tau+1}|\bm y_{:\tau-1},\bm s_{:\tau},\bm a_{:\tau}).\label{eq:x}
\end{align}
Note that, in the last step, $p(s_0|a_0)=p(s_0)$ follows by applying property (2) for $t=0$. Now, using property (1) one can show that
\begin{align}
    p(y_\tau,s_{\tau+1}|\bm y_{:\tau-1},\bm s_{:\tau},\bm a_{:\tau}) 
    &=
    p(y_\tau|\bm y_{:\tau-1},\bm s_{:\tau},\bm a_{:\tau}) 
    p(s_{\tau+1}|\bm y_{:\tau},\bm s_{:\tau},\bm a_{:\tau}) \nonumber \\
    &=
    p(y_\tau|s_{\tau},a_{\tau}) 
    p(s_{\tau+1}|\bm y_{:\tau},\bm s_{:\tau},\bm a_{:\tau}).\label{eq:xx}
\end{align}
The desired result follows from putting together \autoref{eq:x} and \autoref{eq:xx}. 

For the converse, let's show that if \autoref{eq:world_model} holds, then $S_t$ satisfies the two properties of world models as given in \autoref{def:world_model}. 
The first property can be proven directly as follows:
\begin{align}
    p(y_{t}|\bm s_{:t},\bm a_{:t},\bm y_{:t-1})
    &=
    \frac{\sum_{s_{t+1}}
    p(\bm y_{:t},\bm s_{:t+1}|\bm a_{:t})}
    {\sum_{s_{t+1},y_t} p(\bm y_{:t},\bm s_{:t+1}|\bm a_{:t})} \nonumber \\
    &=
    \frac{\sum_{s_{t+1}}
    p(s_0)\prod_{\tau=}^{t}
    p(y_\tau|s_{\tau},a_{\tau}) p(s_{\tau+1}|\bm s_{:\tau},\bm h_{:\tau})}
    {\sum_{s_{t+1},y_t} 
    p(s_0)\prod_{\tau=0}^{t}
    p(y_\tau|s_{\tau},a_{\tau}) p(s_{\tau+1}|\bm s_{:\tau},\bm h_{:\tau})}
    \nonumber \\
    &=
    \frac{\sum_{s_{t+1}}
    p(y_t|s_{t},a_{t}) p(s_{t+1}|\bm s_{:t},\bm h_{:t})}
    {\sum_{s_{t+1},y_t} 
    p(y_t|s_{\tau},a_{t}) p(s_{t+1}|\bm s_{:t},\bm h_{:t})}
    \nonumber \\
    &=
    p(y_{t}|s_{t},a_{t}).
\end{align}
Similarly, the second property can be proven as follows:
\begin{align}
    p(\bm s_{:t}| \bm h_{:t-1},\bm a_{t:})
    &=
    \frac{p(\bm s_{:t},\bm y_{:t-1}|\bm a_{:})}
    {p(\bm y_{:t-1}| \bm a_{:})}
    \nonumber \\
    &= 
    \frac{p(s_0)\prod_{\tau=}^{t-1}
    p(y_\tau|s_{\tau},a_{\tau}) p(s_{\tau+1}|\bm s_{:\tau},\bm h_{:\tau})}
    {\sum_{\bm s_{:t}} p(s_0)\prod_{\tau=}^{t-1}
    p(y_\tau|s_{\tau},a_{\tau}) p(s_{\tau+1}|\bm s_{:\tau},\bm h_{:\tau})} \nonumber \\
    &\stackrel{(a)}{=}
    p(\bm s_{:t}| \bm h_{:t-1}).
\end{align}
Above, (a) follows from the fact that the variables $\bm a_{t:}$ do not appear in the previous expression.
\end{proof}

\section{Alternative characterisations of a transducer}
\label{app:proof_transducer_faces}

\autoref{lemma:transducer_as_world_model} characterises transducers as world models. However, this characterisation is not the most useful when investigating if a given process $S_t$ qualifies as the memory state of a transducer. Here we provide two alternative characterisations of a transducer that are better suited to those tasks.

\begin{lemma}
\label{lemma:transducer_faces}
     The process $S_t$ provides a memory state for a transducer presentation of an anticipation-free interface $\mathcal{I}(\bm Y|\bm A)$ if and only if one of the following conditions hold:
     \begin{enumerate}
     \item 
     $p(s_0|\bm a_:) = p(s_0)$ and $p(s_{t+1},y_t | \bm s_{:t}, \bm h_{:t-1}, \bm a_{t:})
     = 
     p(s_{t+1},y_t | s_{t}, a_{t} )$ for all $t\in\mathbb{N}$.
     \item 
$I(\bm S_{t_i+1:t},\bm Y_{t_i:t-1};\bm A_{t:}|\bm A_{t_i:t-1},S_{t_i})=I(\bm {S}_{t+1:},\bm Y_{t:};\bm{Y}_{:t-1},\bm{S}_{:t-1},\bm{A}_{:t-1}|\bm{A}_{t:},S_t)=0$ for all $t_i,t\in\mathbb{N}$ with $t_i\leq t$.
\end{enumerate}     
\end{lemma}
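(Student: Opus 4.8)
The plan is to prove the three-way equivalence by routing every condition through the transducer factorisation in \autoref{eq:transducer}, which by \autoref{lemma:world_model} and \autoref{lemma:transducer_as_world_model} is exactly what it means for $S_t$ to be the memory state of a transducer presentation of the anticipation-free interface. I would first establish that Condition~1 is equivalent to this factorisation, and then show that Condition~2 is merely the information-theoretic transcription of Condition~1.

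For the equivalence with Condition~1, the forward direction assumes the factorisation $p(\bm y_{:t},\bm s_{:t+1}|\bm a_{:}) = p(s_0)\prod_{\tau=0}^t \kappa_\tau(y_\tau,s_{\tau+1}|a_\tau,s_\tau)$ and extracts the one-step transition by taking the ratio of the factorisation at $t$ and at $t-1$. Since the conditioning set $(\bm s_{:t},\bm h_{:t-1},\bm a_{t:})$ carries all of $\bm a_:$, this ratio collapses to $\kappa_t(y_t,s_{t+1}|a_t,s_t)$, which depends only on $(s_t,a_t)$ and hence equals $p(s_{t+1},y_t|s_t,a_t)$, giving the second equality of Condition~1; marginalising the factorisation down to $s_0$ (using normalisation of the kernels) yields $p(s_0|\bm a_:)=p(s_0)$. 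For the converse I would apply the chain rule to $p(\bm y_{:t},\bm s_{:t+1}|\bm a_:)$, rewrite each conditioning set $(\bm s_{:\tau},\bm y_{:\tau-1},\bm a_:)$ as $(\bm s_{:\tau},\bm h_{:\tau-1},\bm a_{\tau:})$, and apply Condition~1 term by term to recover precisely the transducer factorisation, with $\kappa_\tau$ identified as $p(y_\tau,s_{\tau+1}|s_\tau,a_\tau)$.

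The equivalence with Condition~2 rests on reading each vanishing conditional mutual information as a conditional independence. The first term states $(\bm S_{t_i+1:t},\bm Y_{t_i:t-1}) \ind \bm A_{t:} \mid (\bm A_{t_i:t-1},S_{t_i})$, i.e. time's arrow: present and past states/outputs are unaffected by future actions. The second states $(\bm S_{t+1:},\bm Y_{t:}) \ind (\bm Y_{:t-1},\bm S_{:t-1},\bm A_{:t-1}) \mid (\bm A_{t:},S_t)$, i.e. the Markov screening of the whole future from the whole past by $S_t$. To deduce Condition~1 from these, I would instantiate the first with $t_i=t$ and outer index $t\mapsto t+1$ to obtain $(S_{t+1},Y_t)\ind \bm A_{t+1:}\mid(S_t,A_t)$, and marginalise the second down to $(S_{t+1},Y_t)$ to obtain $p(s_{t+1},y_t|\bm s_{:t},\bm h_{:t-1},\bm a_{t:}) = p(s_{t+1},y_t|s_t,\bm a_{t:})$; chaining the two removes the residual dependence on $\bm a_{t+1:}$ and delivers the second equality of Condition~1. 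Conversely, starting from the factorisation implied by Condition~1, the partial product $\prod_{\tau\ge t}\kappa_\tau$ depends on its conditioning only through $(s_t,\bm a_{t:})$, which yields the second independence, while the partial product $\prod_{\tau=t_i}^{t-1}\kappa_\tau$ never references $\bm a_{t:}$, which yields the first.

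The main obstacle I anticipate is the index bookkeeping in the Condition~2 direction: reconstructing the full block independences from the single-step statements, and conversely marginalising the blocks correctly, must be carried out uniformly over all $t_i\le t$, keeping careful track of which variables lie in the conditioning set versus in the independent blocks. A secondary delicate point is the initial-state clause $p(s_0|\bm a_:)=p(s_0)$: it drops out immediately from the factorisation, but recovering it purely from the conditional-independence statements of Condition~2 requires treating $t=0$ as a boundary case, where the past blocks are empty, and invoking the anticipation-free property of the interface to pin down the independence of $S_0$ from the action sequence.
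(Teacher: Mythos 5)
Your overall route is essentially the paper's. The equivalence of Condition~1 with the transducer factorisation matches the paper's Step~1 (the paper phrases the converse through the two properties of \autoref{def:world_model} plus Markovianity of the state dynamics, you go directly through the product formula of \autoref{eq:transducer} --- same computation), and your treatment of Condition~2 is a mild, valid reorganisation: you derive the single-step identity $p(s_{t+1},y_t|\bm s_{:t},\bm h_{:t-1},\bm a_{t:})=p(s_{t+1},y_t|s_t,a_t)$ directly, by marginalising the second independence down to $(S_{t+1},Y_t)$ and then stripping $\bm A_{t+1:}$ with the first independence instantiated at a single step, whereas the paper reconstructs the full factorisation $p(\bm s_{:t+1},\bm y_{:t}|\bm a_{:},s_0)=\prod_{\tau}p(s_{\tau+1},y_\tau|s_\tau,a_\tau)$ recursively by interleaving the two independences at staggered indices. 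Since you have already established Condition~1 $\Leftrightarrow$ transducer, your shortcut is legitimate and arguably cleaner than the paper's recursion.

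The genuine problem is the initial-state clause, exactly where you flagged a delicate point --- but your proposed fix fails. Anticipation-freeness constrains only $p(\bm y_{:t}|\bm a_{:})$; it says nothing about the correlation of $S_0$ with the action sequence, and the two vanishing informations in Condition~2 cannot supply it either, because every instantiation places $S_0$ inside a conditioning set (when $t_i=0$) or inside the past block (when $t\geq 1$), never in a position that constrains its marginal dependence on $\bm A_{:}$. Concretely: let the outputs be i.i.d.\ and independent of everything (so the interface is trivially anticipation-free), and set $S_0=A_5$ with $S_{t+1}=S_t$. Both conditional mutual informations in Condition~2 vanish (given $S_{t_i}$, respectively $S_t$, all later states are deterministic and the outputs are independent of every block), yet $p(s_0|\bm a_{:})=\delta_{s_0}^{a_5}\neq p(s_0)$, and marginalising \autoref{eq:transducer} over everything except $s_0$ shows such an $S_t$ cannot be the memory state of any transducer presentation. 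So $I(S_0;\bm A_{:})=0$ must be adjoined to Condition~2 as an explicit boundary statement; it does not follow from the stated independences plus anticipation-freeness. You should know the paper's own proof is silent on exactly this point --- its Step~2 only produces the factorisation \emph{conditional} on $s_0$ and then compares with \autoref{lemma:transducer_as_world_model} --- so your instinct about where the soft spot lies was correct, but the patch has to be an extra independence assumption, not an appeal to the anticipation-free property.
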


\begin{proof}

We prove these equivalences in two steps.

\subsection*{Step 1: Equivalence between condition (1) and \autoref{lemma:transducer_as_world_model}}

Let's first prove that \autoref{lemma:transducer_as_world_model} imply condition (1). 
By \autoref{lemma:world_model}, if $S_t$ is a world model then
    $p(s_{t+1},y_t | \bm s_{:t}, \bm h_{:t-1}, \bm a_{t:})
    =
    p(y_t | s_{t}, a_{t})
    p(s_{t+1} | \bm s_{:t}, \bm h_{:t})$.   
Combining this with 
$p(s_{t+1} | \bm s_{:t}, \bm h_{:t}) 
    = 
p(s_{t+1} | s_{t}, h_{t})$ (\autoref{lemma:transducer_as_world_model}), it is clear that 
$p(s_{t+1},y_t | \bm s_{:t}, \bm h_{:t-1}, \bm a_{t:})
=
p(s_{t+1},y_t | s_{t}, a_{t})$.

To prove the converse, let us now show that condition (1) guarantees that $S_t$ is a world model that satisfies $p(s_{t+1} | \bm s_{:t}, \bm h_{:t}) 
    = 
p(s_{t+1} | s_{t}, h_{t})$. 
Property (1) of world models can be proven as follows:
    \begin{align}
        p(y_{t}| \bm s_{:t},\bm a_{:t},\bm y_{:t-1})
        = 
        \sum_{s_{t+1}} p(s_{t+1}, y_{t}| \bm s_{:t},\bm h_{:t-1},a_{t})
        \stackrel{(a)}{=} 
        \sum_{s_{t+1}} p(s_{t+1}, y_{t}| s_{t},a_{t})
        = 
        p(y_{t}| s_{t},a_{t}),
    \end{align}
where (a) uses condition (1). 
Property (2) follows from
    \begin{align}
    p(\bm s_{:t}| \bm h_{:t-1},\bm a_{t:}) 
    &=
    \frac{p(\bm s_{:t}, \bm y_{:t-1} | \bm a_{:})}
    {p(\bm y_{:t-1} | \bm a_{:}) } \nonumber \\
    &=
    \frac{\prod_{\tau=0}^{t-1} p(s_{\tau+1}, y_{\tau} | \bm s_{:\tau}, \bm y_{:\tau-1}, \bm a_{:})}
    {p(\bm y_{:t-1} | \bm a_{:}) } \nonumber \\
    &\stackrel{(b)}{=}
    \frac{\prod_{\tau=0}^{t-1} p(s_{\tau+1}, y_{\tau} | \bm s_{:\tau}, \bm y_{:\tau-1}, \bm a_{:t-1})}
    {p(\bm y_{:t-1} | \bm a_{:t-1}) } \nonumber \\
    &=
    \frac{p(\bm s_{:t}, \bm y_{:t-1} | \bm a_{:t-1})}
    {p(\bm y_{:t-1} | \bm a_{:t-1})} \nonumber \\
    &= p(\bm s_{:t}| \bm h_{:t-1}),
    \end{align}
    where (b) is using condition (1) and the fact that the interface is anticipation free. 
    Finally, the Markovianity of state dynamics can be proven as follows:
    \begin{align}
        p(s_{t+1}|\bm s_{:t},\bm h_{:t}) 
        =
        \frac{ p(s_{t+1},y_t|\bm s_{:t}, a_t, \bm h_{:t-1})}
        {\sum_{s_{t+1}} p(s_{t+1},y_t|\bm s_{:t}, a_t,\bm h_{:t-1})}
        = 
        \frac{ p(s_{t+1},y_t|s_t, a_{t})}
        {\sum_{s_{t+1}} p(s_{t+1},y_t|s_t, a_{t})}
        =
        p(s_{t+1}|s_t,h_t).
    \end{align}

\subsection*{Part 2: Equivalence between conditions (1) and (2)}
\label{proof:Information Conditions}

Let's first show that condition (2) implies condition (1). 
For this, let's first note that in general if $I(A;B|C)=0$ holds for some variables $A,B$, and $C$, then $p(a|c)=p(a|b,c)$.
Thus, the condition 
$I(\bm S_{t_i+1:t},\bm Y_{t_i:t-1};\bm A_{t:}|\bm A_{t_i:t-1},S_{t_i})=0$ implies that 
\begin{align}
    p(\bm s_{t_i+1:\tau},\bm y_{t_i:\tau-1}|\bm a_{t_i:},s_{t_i})  
    &=
    p(\bm s_{t_i+1:\tau},\bm y_{t_i:\tau-1}|\bm a_{t_i:\tau-1},s_{t_i}),
\end{align}
holding for all $\tau\in\mathbb{N}$ with $t_i\leq\tau\leq t$. 
Similarly, $I(\bm {S}_{t+1:},\bm Y_{t:};\bm{Y}_{:t-1},\bm{S}_{:t-1},\bm{A}_{:t-1}|\bm{A}_{t:},S_t)=0$ implies that 
\begin{align}
    p(\bm s_{\tau+1:t+1},\bm y_{\tau:t}|\bm y_{:\tau-1},\bm s_{:\tau},\bm a_{:}) 
    &= 
    p(\bm s_{\tau+1:t+1},\bm y_{\tau:t}|\bm a_{\tau:},s_{\tau}),
\end{align}
holding for all $\tau\in\mathbb{N}$ with $t_i\leq\tau\leq t$.
Note that $S_{0}$ is an element of the past $\bm S_{:t-1}$, so we can multiply these together to obtain
\begin{align}
    p(\bm s_{t_i+1:\tau},\bm y_{t_i:\tau-1}|&\bm a_{t_i:\tau-1},s_{t_i})
    p( \bm s_{\tau+1:t+1},\bm y_{\tau:t}|\bm a_{\tau:},s_{\tau}) \nonumber \\
    &=
    p(\bm s_{t_i+1:\tau},\bm y_{t_i:\tau-1}|\bm a_{t_i:},s_{t_i})
    p(\bm s_{\tau+1:t+1},\bm y_{\tau:t}|\bm y_{t_i:\tau-1},\bm s_{t_i:\tau},\bm a_{t_i:}) \nonumber \\ 
    &= 
    p(\bm s_{t_i+1:t+1},\bm y_{t_i:t}|\bm a_{t_i:},s_{t_i}).
\end{align}
Using this relation recursively, one can find that 
\begin{align}
    p(\bm s_{:t+1},\bm y_{:t},|\bm a_{:},s_{0})
    &\stackrel{(c)}{=} 
    p(\bm s_{:1},y_0|a_0,s_0)
    p(\bm s_{2:t+1},\bm y_{1:t}|\bm a_{1:},s_{1}) \nonumber \\
    &\stackrel{(d)}{=}
    p(s_1,y_0|a_0,s_0)
    p(s_{2},y_{1}|s_{1},a_{1})
    p(\bm s_{3:t+1},\bm y_{2:t}|s_{2},\bm a_{2:}) \nonumber \\
    &=
    \ldots \nonumber \\
    &=
    \prod_{\tau=0}^t p(s_{\tau+1},y_{\tau}|s_\tau,a_{\tau}),
\end{align}
where (c) is obtained by using $t_i=0$ and $\tau=1$, (d) by using $t_i=1$ and $\tau=2$, and so on. 
By comparing with \autoref{lemma:transducer_as_world_model}, this means that condition (2) implies condition (1).

Let us now show that condition (1) implies condition (2). 
Condition (1) implies that 
\begin{align}
p(\bm s_{t_i+1:t_f},\bm y_{t_i:t_f-1},|\bm a_{t_i:},s_{t_i}) 
&= 
\prod_{\tau=t_i}^{t_f} p(s_{t+1},y_{t}|s_{t},a_{t}) \nonumber \\
&= 
\left(\prod_{j=t}^{t_f} p(s_{j+1},y_{j}|s_{j},a_{j})\right)
\left(\prod_{k=t_i}^{t} p(s_{k+1},y_{k}|s_{k},a_{k})\right) \nonumber \\
&=
p(\bm s_{t+1:t_f},\bm y_{t:t_f-1}|\bm a_{t:t_f-1},s_{t})
p(\bm s_{t_i+1:t},\bm y_{t_i:t-1}|\bm a_{t_i:t-1},s_{t_i}).
\label{eq:deriv}
\end{align}
This can be used to show that
\begin{align}
p(\bm s_{t_i+1:t},\bm y_{t_i:t-1}|\bm a_{t_i:t-1},s_{t_i})
&= 
\sum_{\substack{\bm s_{t+1:t_f}\\\bm y_{t:t_f-1}}}
p(\bm s_{t+1:t_f},\bm y_{t:t_f-1}|\bm a_{t:t_f-1},s_{t})
p(\bm s_{t_i+1:t},\bm y_{t_i:t-1}|\bm a_{t_i:t-1},s_{t_i})
\nonumber \nonumber \\
&=    
\sum_{\substack{\bm s_{t+1:t_f}\\\bm y_{t:t_f-1}}}
p(\bm s_{t_i+1:t_f},\bm y_{t_i:t_f-1},|\bm a_{t_i:},s_{t_i}) \nonumber \\
&=    
p(\bm s_{t_i+1:t},\bm y_{t_i:t-1},|\bm a_{t_i:},s_{t_i}),
\end{align}
which implies 
$I(\bm S_{t_i+1:t}, \bm Y_{t_i:t-1}; \bm A_{t:}| \bm A_{t_i:t-1},S_{t_i})=0$. 
To prove the second information equality, one can divide both sides of \autoref{eq:deriv} by $p(\bm s_{t_i+1:t},\bm y_{t_i:t-1}|\bm a_{t_i:t-1},s_{t_i})$ to obtain 
\begin{align}
p(\bm s_{t+1:t_f},\bm y_{t:t_f-1}|\bm a_{t:t_f-1},s_{t})
&=
\frac{p(\bm s_{t_i+1:t_f},\bm y_{t_i:t_f-1},|\bm a_{t_i:},s_{t_i})}
{p(\bm s_{t_i+1:t},\bm y_{t_i:t-1}|\bm a_{t_i:t-1},s_{t_i})} \nonumber \\
&= 
p(\bm s_{t+1:t_f},\bm y_{t:t_f-1},|\bm y_{t_i:t-1},\bm s_{t_i:t},a_{t_i:}).
\end{align}
Given that $t_i$ and $t_f$ are arbitrary, this implies that $I(\bm S_{t+1:},\bm Y_{t:}; \bm Y_{:t-1},\bm S_{:t},\bm A_{:t-1} |\bm A_{t:},S_t)=0$.
\end{proof}

\section{Proof of \autoref{lemma:interface_memory}}
\label{app:interface_memory}

\begin{lemma}
An interface is fully observable if and only if $p(y_{t+1}|\bm y_{:t},\bm a_:) = p(y_{t+1}| y_t,a_t)$, and is memoryless if and only if $p(\bm y_{:t}|\bm a_:) = \prod_{\tau=0}^t p(y_\tau|a_\tau)$. 
\end{lemma}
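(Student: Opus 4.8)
The plan is to treat the two equivalences separately, in each case translating the structural statement (``$S_t=Y_t$ yields a valid transducer'' or ``$S_t=0$ yields a valid transducer'') into the stated condition on the interface, using \autoref{lemma:world_model} and \autoref{lemma:transducer_as_world_model} together with the anticipation-free hypothesis.

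For full observability I would set $S_t=Y_t$ and invoke \autoref{lemma:transducer_as_world_model}, which characterises a transducer as a world model whose state dynamics are Markov, i.e. $p(s_{\tau+1}|\bm s_{:\tau},\bm h_{:\tau})=p(s_{\tau+1}|s_\tau,h_\tau)$. First I would observe that under the identification $s_\tau=y_\tau$ both world-model conditions of \autoref{def:world_model} are automatic: condition~(1), $p(y_t|\bm s_{:t},\bm a_{:t},\bm y_{:t-1})=p(y_t|s_t,a_t)$, is trivial because the outcome coincides with the state (both sides equal $1$), while condition~(2), $p(\bm s_{:t}|\bm h_{:t-1},\bm a_{t:})=p(\bm s_{:t}|\bm h_{:t-1})$, reduces (since $\bm h_{:t-1}$ already contains $y_0,\dots,y_{t-1}$) to $p(y_t|\bm y_{:t-1},\bm a_:)=p(y_t|\bm y_{:t-1},\bm a_{:t-1})$, which is a direct consequence of anticipation-freeness. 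The only substantive requirement is thus the Markov state-dynamics condition, which under $s_\tau=y_\tau$ and $h_\tau=(a_\tau,y_\tau)$ reads $p(y_{\tau+1}|\bm y_{:\tau},\bm a_{:\tau})=p(y_{\tau+1}|y_\tau,a_\tau)$. Applying anticipation-freeness once more to replace $\bm a_{:\tau}$ by the full $\bm a_:$ in the conditioning yields precisely $p(y_{t+1}|\bm y_{:t},\bm a_:)=p(y_{t+1}|y_t,a_t)$. Reading the chain of equivalences in reverse supplies the converse: the stated Markov identity lets one define $S_t=Y_t$ and verify that it is a valid transducer.

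For memorylessness I would take the single-state model $S_t=0$, so that $p(s_0)=1$ and every state-transition factor equals $1$. Substituting into the factorisation of \autoref{lemma:world_model} and marginalising over the trivial states, the surviving emission factors are $p(y_\tau|s_\tau,a_\tau)=p(y_\tau|a_\tau)$ because the state is constant, which gives $p(\bm y_{:t}|\bm a_:)=\prod_{\tau=0}^t p(y_\tau|a_\tau)$. Conversely, given this product form, I would exhibit the one-state transducer with emission kernel $\kappa_\tau(y,\tilde{s}|a,s)=p(y|a)$ and check via \autoref{lemma:world_model} that it generates the interface; the Markov state-dynamics condition holds vacuously since there is only one state.

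The main obstacle I anticipate is the bookkeeping around the degenerate identification $s_\tau=y_\tau$ and the repeated use of anticipation-freeness to shuttle between conditioning on $\bm a_{:\tau}$, $\bm a_{:\tau+1}$, and the full action sequence $\bm a_:$; care is needed to confirm that the two world-model conditions genuinely collapse to triviality (rather than concealing an extra constraint), so that the entire content of ``fully observable'' is carried by the single Markov identity. Once this reduction is made explicit, both equivalences follow by matching the transducer conditions directly against the claimed interface identities.
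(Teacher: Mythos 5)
Your memoryless half is correct and is essentially the paper's own argument: substitute the trivial state into the factorisation of \autoref{lemma:world_model} (equivalently \autoref{eq:transducer}) one way, and exhibit the one-state kernel $\kappa_\tau(y,\tilde s|a,s)=p(y_\tau\,|\,a_\tau)$ the other way.

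The fully-observable half, however, contains a step that fails: the claim that under $s_\tau=y_\tau$ condition~(2) of \autoref{def:world_model} ``is a direct consequence of anticipation-freeness.'' It is not. With $s_\tau=y_\tau$, condition~(2) reads $p(y_t|\bm y_{:t-1},\bm a_:)=p(y_t|\bm y_{:t-1},\bm a_{:t-1})$, i.e.\ $Y_t$ must be independent of $A_{t:}$ given the past. Anticipation-freeness only licenses cutting the conditioning at $\bm a_{:t}$, since $p(y_t|\bm y_{:t-1},\bm a_:)=p(\bm y_{:t}|\bm a_{:t})/p(\bm y_{:t-1}|\bm a_{:t-1})$, which may still depend on the \emph{current} action $a_t$. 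A concrete counterexample is the copy interface $y_\tau=a_\tau$: it is anticipation-free, yet $p(y_t|\bm y_{:t-1},\bm a_:)=\delta_{y_t}^{a_t}$ violates condition~(2) for $S_t=Y_t$ (and indeed this interface is not fully observable). The same slip recurs when you ``apply anticipation-freeness once more to replace $\bm a_{:\tau}$ by the full $\bm a_:$'' --- that replacement is licensed only up to $\bm a_{:\tau+1}$, leaving a possible dependence on $a_{\tau+1}$. So condition~(2) is substantive, not bookkeeping: it is precisely part of the content of full observability. The repair is straightforward but changes the logic of both directions. In the forward direction the lifting of the conditioning to $\bm a_:$ should be attributed to condition~(2) itself, which holds because $S_t=Y_t$ is assumed to be a transducer. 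In the converse direction, condition~(2) for $t\geq 1$ must be derived from the \emph{assumed identity} applied at index $t-1$ (its right-hand side $p(y_t|y_{t-1},a_{t-1})$ involves no action from time $t$ onward), not from anticipation-freeness; the remaining instance $t=0$, namely $p(y_0|\bm a_:)=p(y_0)$, is an edge case not implied by the stated identity, which --- to be fair --- the paper's own one-line proof also glosses over. Note that the paper sidesteps this entire bookkeeping by invoking condition~(1) of \autoref{lemma:transducer_faces}, the joint-kernel characterisation $p(s_{t+1},y_t\,|\,\bm s_{:t},\bm h_{:t-1},\bm a_{t:})=p(s_{t+1},y_t\,|\,s_t,a_t)$, whose conditioning already contains the full future action sequence and hence bundles Markovianity and the needed action-independence into a single equation; specialising $s=y$ then yields the stated identity immediately.
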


\begin{proof}
To prove the first part of the lemma, one can use condition (1) in \autoref{lemma:transducer_faces} which implies that 
$S_t=Y_t$ yields a transducer if and only if 
        $p(y_\tau,y_{\tau+1}|\bm y_{:\tau}, \bm a_{:\tau}) 
        = p(y_{\tau},y_{\tau+1}|y_{\tau}, a_\tau)
        = p(y_{\tau+1}|y_{\tau}, a_\tau)$. 
To prove the second part of the lemma, note that an interface satisfies $p(\bm y_{:t}|\bm a_:) = \prod_{\tau=0}^t p(y_\tau|a_\tau)$ if and only if $S_t=0$ yields a factorisation of $p(\bm y_{:t}|\bm a_:)$ as in~\autoref{eq:transducer}. This shows that $S_t=0$ is the state of a transducer presentation of $\mathcal{I}(\bm Y|\bm A)$ if and only if the interface is memoryless.
    
\end{proof}

\section{Relationship between transducers and POMDPs}
\label{app:POMDPs}

A POMDP is a tuple $(\mathcal{S}, \mathcal{A}, \mathcal{O}, \tau, \mu, \rho)$ in which $\mathcal{S}$ correspond to states of the world, $\mathcal{A}$ the action space, $\mathcal{O}$ the observation space, and the probability kernels $\tau : \mathcal{S} \times \mathcal{A} \to \Delta(\mathcal{S})$, $\mu: \mathcal{S} \to \Delta(\mathcal{O})$, and $\rho: \mathcal{S} \times \mathcal{A} \to \Delta(\mathbb{R})$ specify the world dynamics, observation map, and reward function~\citep{kaelbling1998planning}. 
Under a POMDP, the joint dynamics satisfy \autoref{eq:transducer}, which --- thanks to condition (1) in \autoref{lemma:transducer_faces} --- is sufficient to show that the POMDP induces a transducer. This, together with~\autoref{lemma:transducer_as_world_model}, implies that the process $S_t$ in a POMDP is a world model, in the sense that it satisfies the conditions in \autoref{def:world_model}. 

Note also that the standard presentaiton of POMDPs correspond to a transducer whose kernel allows the following factorisation:
\begin{equation}
    p(s_{t+1},y_t|s_t,a_t) 
    = \tau(s_{t+1}|s_t,a_t) \mu(o_t|s_t) \rho(r_t|s_t,a_t).
\end{equation}
This corresponds to a I-O Moore transducer, as defined in \autoref{sec:transducer_types}. The different types of transducers are illustrated in~\autoref{fig:Mealy_vs_Moore}.

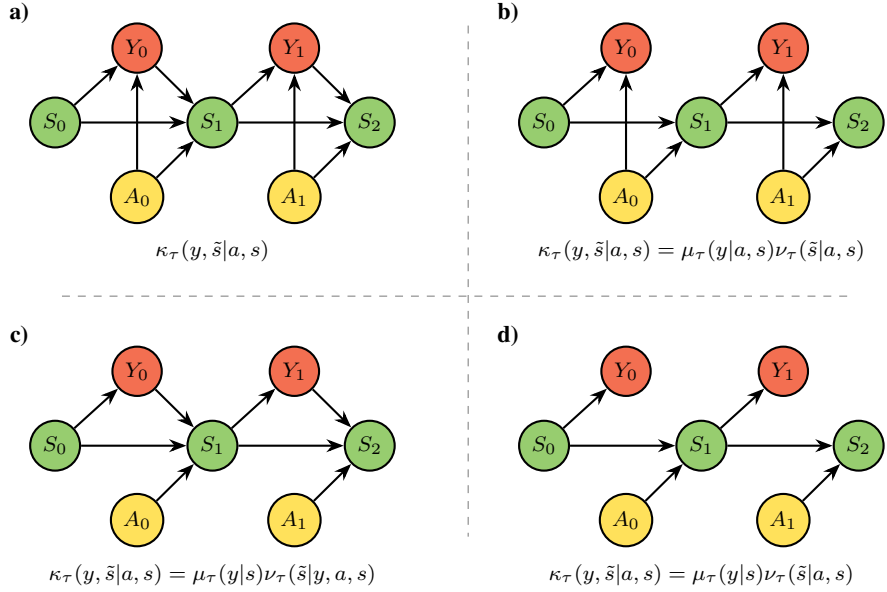
\begin{figure}[ht]
\centering
\begin{tikzpicture}[
    every node/.style={font=\footnotesize, minimum width=0.5em,thick, minimum size=0.3cm}, 
    state/.style={circle, draw, fill=YellowGreen},
    output/.style={circle, draw, fill=Red!70},
    input/.style={circle, draw, fill=Goldenrod!90},
    node distance=1.4cm,
    >=Stealth
    ]

\node (figA) at (0,0) { 
\begin{tikzpicture}
\node[state] (S0) {$S_0$};
\node[state, right=of S0] (S1) {$S_1$};
\node[state, right=of S1] (S2) {$S_2$};

\node[input,below right of=S0, xshift=0.1cm] (A0) {$A_0$};
\node[input,below right of=S1, xshift=0.1cm] (A1) {$A_1$};

\node[output,above right of=S0, xshift=0.1cm] (Y0) {$Y_0$};
\node[output,above right of=S1, xshift=0.1cm] (Y1) {$Y_1$};

\draw[->,line width=0.28mm] (S0) -- (Y0);
\draw[->,line width=0.28mm] (A0) -- (Y0);
\draw[->,line width=0.28mm] (A0) -- (S1);
\draw[->,line width=0.28mm] (S0) -- (S1);
\draw[->,line width=0.28mm] (Y0) -- (S1);

\draw[->,line width=0.28mm] (S1) -- (Y1);
\draw[->,line width=0.28mm] (A1) -- (Y1);
\draw[->,line width=0.28mm] (A1) -- (S2);
\draw[->,line width=0.28mm] (S1) -- (S2);
\draw[->,line width=0.28mm] (Y1) -- (S2);

\end{tikzpicture}
};
\node (A) at (figA.north west) {\small \textbf{a)}};
\node[below of=figA, yshift=-0.3cm] (kernelA) {$\kappa_\tau(y,\tilde{s}|a,s)$};

\node (figB) at (6.5,0) { 
\begin{tikzpicture}
\node[state] (S0) {$S_0$};
\node[state, right=of S0] (S1) {$S_1$};
\node[state, right=of S1] (S2) {$S_2$};

\node[input,below right of=S0, xshift=0.1cm] (A0) {$A_0$};
\node[input,below right of=S1, xshift=0.1cm] (A1) {$A_1$};

\node[output,above right of=S0, xshift=0.1cm] (Y0) {$Y_0$};
\node[output,above right of=S1, xshift=0.1cm] (Y1) {$Y_1$};

\draw[->,line width=0.28mm] (S0) -- (Y0);
\draw[->,line width=0.28mm] (A0) -- (Y0);
\draw[->,line width=0.28mm] (A0) -- (S1);
\draw[->,line width=0.28mm] (S0) -- (S1);

\draw[->,line width=0.28mm] (S1) -- (Y1);
\draw[->,line width=0.28mm] (A1) -- (Y1);
\draw[->,line width=0.28mm] (A1) -- (S2);
\draw[->,line width=0.28mm] (S1) -- (S2);

\end{tikzpicture}
};  
\node (B) at (figB.north west) {\small \textbf{b)}};
\node[below of=figB, yshift=-0.3cm] (kernelB) {$\kappa_\tau(y,\tilde{s}|a,s)=\mu_\tau(y|a,s)\nu_\tau(\tilde{s}|a,s)$};
\node (figC) at (0,-4.3) { 
\begin{tikzpicture}
\node[state] (S0) {$S_0$};
\node[state, right=of S0] (S1) {$S_1$};
\node[state, right=of S1] (S2) {$S_2$};

\node[input,below right of=S0, xshift=0.1cm] (A0) {$A_0$};
\node[input,below right of=S1, xshift=0.1cm] (A1) {$A_1$};

\node[output,above right of=S0, xshift=0.1cm] (Y0) {$Y_0$};
\node[output,above right of=S1, xshift=0.1cm] (Y1) {$Y_1$};

\draw[->,line width=0.28mm] (S0) -- (Y0);
\draw[->,line width=0.28mm] (A0) -- (S1);
\draw[->,line width=0.28mm] (S0) -- (S1);
\draw[->,line width=0.28mm] (Y0) -- (S1);

\draw[->,line width=0.28mm] (S1) -- (Y1);
\draw[->,line width=0.28mm] (A1) -- (S2);
\draw[->,line width=0.28mm] (S1) -- (S2);
\draw[->,line width=0.28mm] (Y1) -- (S2);

\end{tikzpicture}
};
\node (C) at (figC.north west) {\small \textbf{c)}};
\node[below of=figC, yshift=-0.3cm] (kernelC) {$\kappa_\tau(y,\tilde{s}|a,s)=\mu_\tau(y|s)\nu_\tau(\tilde{s}|y,a,s)$};

\node (figD) at (6.5,-4.3) { 
\begin{tikzpicture}
\node[state] (S0) {$S_0$};
\node[state, right=of S0] (S1) {$S_1$};
\node[state, right=of S1] (S2) {$S_2$};

\node[input,below right of=S0, xshift=0.1cm] (A0) {$A_0$};
\node[input,below right of=S1, xshift=0.1cm] (A1) {$A_1$};

\node[output,above right of=S0, xshift=0.1cm] (Y0) {$Y_0$};
\node[output,above right of=S1, xshift=0.1cm] (Y1) {$Y_1$};

\draw[->,line width=0.28mm] (S0) -- (Y0);
\draw[->,line width=0.28mm] (A0) -- (S1);
\draw[->,line width=0.28mm] (S0) -- (S1);

\draw[->,line width=0.28mm] (S1) -- (Y1);
\draw[->,line width=0.28mm] (A1) -- (S2);
\draw[->,line width=0.28mm] (S1) -- (S2);

\end{tikzpicture}
};  
\node (D) at (figD.north west) {\small \textbf{d)}};
\node[below of=figD, yshift=-0.3cm] (kernelD) {$\kappa_\tau(y,\tilde{s}|a,s)=\mu_\tau(y|s)\nu_\tau(\tilde{s}|a,s)$};

\draw[gray, dashed] (3.4,1.3) -- (3.4,-5.5);
\draw[gray, dashed] (-2,-2.3) -- (8.5,-2.3);

\end{tikzpicture}
\caption{Illustration of different types of transducers: Mealy transducers (a), output-Moore transducers (b), input-Moore transducer (c), and I-O Moore transducer.}
\label{fig:Mealy_vs_Moore} 
\end{figure}

\section{Proof of \autoref{lemma:funes}}
\label{app:proof_funes}

\begin{proof}
Let consider $S_t = \bm H_{t-1}$ and $S_0=0$. To prove that $S_t$ yields a transducer presentation of $\mathcal{I}(\bm Y|\bm A)$, we will use condition (1) from \autoref{lemma:transducer_faces}. 
For this, note that
\begin{align}\label{eq:00}
    p(s_{t+1},y_t | \bm s_{:t}, \bm h_{:t-1}, \bm a_{t:})
    = 
    p(s_{t+1},y_t | \bm h_{:t-1}, \bm a_{t:})
    =
    p(s_{t+1}| \bm h_{:t}, \bm a_{t+1:})
    p(y_t | \bm h_{:t-1}, \bm a_{t:}).
\end{align}
Let us develop each of those terms separately. First, one can find that
\begin{align}\label{eq:01}
    p(s_{t+1}|\bm h_{:t},\bm a_{t+1:})
    = p(s_{t+1}| s_{t}, h_{t},\bm a_{t+1:})
    = \delta_{s_{t+1}}^{(s_t,h_t)}
    = p(s_{t+1}| s_{t},h_{t}),
\end{align}
where $\delta_a^b$ is the Kroneker delta. Similarly, the second term can be developed as follows:
\begin{align}\label{eq:02}
    p(y_{t}| \bm h_{:t-1}, \bm a_{t:}) 
    = \frac{ p(\bm y_{:t}| \bm a_{:})}{ p(\bm y_{:t-1}|\bm a_{:})}
    \stackrel{(a)}{=} \frac{ p(\bm y_{:t}| \bm a_{:t})}{ p(\bm y_{:t-1}|\bm a_{:t})}
    = p(y_{t}| \bm y_{:t-1}, \bm a_{:t})
    = p(y_{t}| s_t, a_{t}),
\end{align}
where (a) uses the fact that the interface is anticipation-free. Finally, by combining~\autoref{eq:00}, ~\autoref{eq:01}, and ~\autoref{eq:02} one finds that
\begin{align}
    p(s_{t+1},y_t | \bm s_{:t}, \bm h_{:t-1}, \bm a_{t:})
    = 
    p(s_{t+1}| s_{t},h_{t})p(y_{t}| s_t, a_{t})
    = 
    p(s_{t+1},y_t| s_{t}, a_{t}),
\end{align}
which shows that condition (1) from \autoref{lemma:transducer_faces} is satisfied.
\end{proof}

\section{Proof of \autoref{lemma:homo_generate}}
\label{app:proof_homo_generate}

\begin{proof}
    Consider $S'_t = \phi(S_t)$ a reduction of the memory state $S_t$ of a transducer. Then
\begin{align}
    p(\bm y_{:t}\bm s'_{:t+1}|\bm a_{:}) 
    &= 
    \sum_{\tau=0}^t \!\!
    \sum_{\substack{ s_\tau\in\mathcal{S}\\ 
    \phi(s_\tau)=s'_\tau}} 
    \!\!
    p(\bm y_{:t}\bm s_{:t+1}|\bm a_{:}) \nonumber \\
    &\stackrel{(a)}{=} 
    \sum_{\tau=0}^t 
    \!
    \sum_{\substack{ s_\tau\in\mathcal{S}\\ 
    \phi(s_\tau)=s'_\tau}} 
    \!\!
    p(s_0) 
    \prod_{\tau=0}^{t} p(y_\tau|s_{\tau},a_\tau) p(s_{\tau+1}|s_{\tau},h_\tau) \nonumber \\
    &\stackrel{(b)}{=} 
    \sum_{\tau=0}^t 
    \!
    \sum_{\substack{ s_\tau\in\mathcal{S}\\ 
    \phi(s_\tau)=s'_\tau}} 
    \!\!
    p(s_0) \prod_{\tau=0}^{t} p(y_\tau|s'_{\tau},a_\tau) p(s_{\tau+1}|s_{\tau},h_\tau) \nonumber \\
    &=
    \sum_{\tau=0}^{t-1} 
    \!\!\!
    \sum_{\substack{ s_\tau\in\mathcal{S}\\ 
    \phi(s_\tau)=s'_\tau}} 
    \!\!\!\!
    p(s_0) 
    \prod_{\tau=0}^{t-1} p(y_\tau|s'_{\tau},a_\tau) p(s_{\tau+1}|s_{\tau},h_\tau) 
    p(y_t|s'_t,a_t)
    \!\!\!\!\!\!\!\!
    \sum_{\substack{s_{t+1}\in\mathcal{S}\\ \phi(s_{t+1})=s'_{t+1}}} 
    \!\!\!\!\!\!\!\!
    p(s_{t+1}|s_{t},h_\tau) 
    \nonumber \\
    &\stackrel{(c)}{=} 
    \sum_{\tau=0}^{t-1} 
    \!\!\!
    \sum_{\substack{ s_\tau\in\mathcal{S}\\ 
    \phi(s_\tau)=s'_\tau}} 
    \!\!\!\!
    p(s_0) 
    \prod_{\tau=0}^{t-1} p(y_\tau|s'_{\tau},a_\tau) p(s_{\tau+1}|s_{\tau},h_\tau) 
    p(y_t|s'_t,a_t)
    p(s'_{t+1}|s'_{t},h_\tau) 
    \nonumber \\
    &\stackrel{(d)}{=}  \ldots \nonumber \\
    &=
    \Bigg[\sum_{\substack{s_0\in\mathcal{S}\\\phi(s_0)=s'_0}} p(s_0) \Bigg]
    \prod_{\tau=0}^{t} p(y_\tau|s'_{\tau},a_\tau) 
    p(s'_{\tau+1}|s'_{\tau},h_\tau) \nonumber \\
    &\stackrel{(e)}{=} p(s'_0) 
    \prod_{\tau=0}^{t} p(y_\tau|s'_{\tau},a_\tau) 
    p(s'_{\tau+1}|s'_{\tau},h_\tau).\label{eq:end_proof}
\end{align}
Above, (a) uses that $S_t$ is the memory state of a transducer, (b) uses property (i) of homomorphisms (see~\autoref{def:minimal_world}), (c) and (e) uses property (ii) of homomorphisms, and (d) denotes that the same steps of previous equations are done iteratively. 
Finally, \autoref{eq:end_proof} together with \autoref{lemma:world_model} confirm that $S'_t$ yields a valid transducer for the same interface. 
\end{proof}

\section{Proof of~\autoref{lemma:bisim_homo}}
\label{app:proof_lemma_bisim_homo}

\begin{proof}
    Let's first assume that the mapping $\phi$ induces a reduction of the world model $S_t$ into $S'_t$, and define the equivalence relation $B$ such that $s\sim s'$ when $\phi(s)=\phi(s')$. In this setting, let's prove that $B$ is a bisimulation. For this, one can note that if $s\sim s'$ then one can use the first property of homomorphims to find that
    \begin{align}
        \Pr(Y_t=y|S_t=s, A_t=a) 
        &= 
        \Pr(Y_t=y|S'_t=\phi(s), A_t=a) \nonumber \\
        &= 
        \Pr(Y_t=y|S'_t=\phi(s'), A_t=a) \nonumber \\
        &=
        \Pr(Y_t=y|S_t=s', A_t=a).
    \end{align}
    Additionally, using the second property one finds that
    \begin{align}
        \sum_{s''\in[\tilde{s}]} 
        \Pr\big(S_{t+1}=s''| S_t=s,H_t=(y,a)\big)
        &=
        \Pr\big(S'_{t+1}=\tilde{s}| S'_t=\phi(s),H_t=(y,a)\big) \nonumber \\
        &=
        \Pr\big(S'_{t+1}=\tilde{s}| S'_t=\phi(s'),H_t=(y,a)\big) \nonumber \\
        &=
        \sum_{s''\in[\tilde{s}]} 
        \Pr\big(S_{t+1}=s''| S_t=s',H_t=(y,a)\big),
    \end{align}
    where $[\tilde{s}]=\{s\in\mathcal{S}:\phi(s)=\tilde{s}\}$.
    Together, these two results show that $B$ is a bisimulation in the sense of \autoref{def:bisimulation}.

    For proving the converse statement, let's assume that $B\subseteq \mathcal{S}\times\mathcal{S}$ is a bisimulation, and define $\phi(s)=[s]$ as a function that maps each state $s\in\mathcal{S}$ into its equivalence class according to $B$. 
    Let's prove that $S_t\xrightarrow{\phi} \phi(S_t) = [S_t]$ is a reduction. First, for $B$ being a bisimulation implies that $\Pr\big(Y_t=y| S_t=s, A_t=a\big) = \Pr\big(Y_t=y| S_t=s',A_t=a\big)$ for any $(s,s')\in B$, which in turn implies that
    \begin{align}
        \Pr\big(Y_t=y| \phi(S_t)=[s], A_t=a\big) = \Pr\big(Y_t=y| S_t=s,A_t=a\big),    
    \end{align}
    showing that $\phi$ satisfies the first property of homomorphisms. Furthermore, if $(s,s')\in B$ then 
\begin{align}
    \Pr\big(\phi(S_{t+1})=[\tilde{s}]| S_t=s,H_t=(y,a)\big) 
    &= \sum_{s''\in [\tilde{s}]}\Pr\big(S_{t+1}=s''| S_t=s,H_t=(y,a)\big) \nonumber \\
    &= \sum_{s''\in [\tilde{s}]}\Pr\big(S_{t+1}=s''| S_t=s',H_t=(y,a)\big) \nonumber \\
    &= \Pr\big(\phi(S_{t+1})=[\tilde{s}]| S_t=s',H_t=(y,a)\big) ,
\end{align}
which implies that 
\begin{equation}
    \Pr\big(\phi(S_{t+1})=[\tilde{s}]| S_t=s,H_t=(y,a)\big) = \Pr\big(\phi(S_{t+1})=[\tilde{s}]| \phi(S_t)=[s],H_t=(y,a)\big).
\end{equation}
Using this, one can finally show that
\begin{align}
    \Pr\big(\phi(S_{t+1})=[\tilde{s}]| \phi(S_t)=[s],H_t=(y,a)\big) 
    &= \sum_{s''\in [\tilde{s}]}\Pr\big(S_{t+1}=s''| \phi(S_t)=[s],H_t=(y,a)\big) \nonumber \\
    &= \sum_{s''\in [s]}\Pr\big(S_{t+1}=\tilde{s}| S_t=s,H_t=(y,a)\big)
\end{align}

\end{proof}

\section{Algorithms to reduce a transducer}
\label{app:algorithms_gtransducers}
For a given transducer with a finite number of possible memory states, one can reduce the corresponding world model as follows:
\begin{enumerate}
\item Compute a singular value decomposition $U_m = U \Lambda V^\intercal$, where $U\in\mathbb{R}^{m\times m}$ and $V\in\mathbb{R}^{n\times n}$ are unitary matrices of singular vectors and $\Lambda\in\mathcal{R}^{m\times n}$ is a diagonal matrix with $\text{Rank}(V_m)=r$ non-zero elements.
\item Collect the $r$ left singular vectors associated with non-zero singular values, and create the matrix $C = [\bm v_1,\dots, \bm v_n] \in\mathbb{R}^{n\times r}$.
\item Use $C$ as a transformation matrix to define the new world states, and calculate the resulting quasi-stochastic matrices.
\end{enumerate}
It can be shown that the resulting representation is minimal as in \autoref{def:minimal_world}. 
For more details on this procedure, see~\citep[Sec.~3]{balasubramanian1993equivalence} and also~\citep[Algorithm~1]{huang2015minimal}.

\section{Proof of \autoref{prop:predictive_beliefs}}
\label{app:beliefs_generate_interfaces}

\begin{proof}
To prove the first part of the proposition, let's consider the Bayesian beliefs of the memory state $S_t$ of a transducer $(S_t,Y_t,A_t)$ as given by $\dist{b}_{t}(s_{t})=p(s_{t}|\bm h_{:t-1})$. Let's also define the postdictive beliefs as $\dist{d}_{t}=p(s_{t}|\bm h_{:t})$. 
Then, their update can be calculated as follows:
\begin{align}\label{eq:predict1}
    \dist{b}_{t+1}(s_{t+1}) 
    = \sum_{s_{t}} p(s_{t+1}| s_{t},\bm h_{:t}) p(s_{t}|\bm h_{:t}) 
    \stackrel{(i)}{=} \sum_{s_{t}} p(s_{t+1}| s_{t},h_{t}) \dist{d}_{t}(s_{t}),
\end{align}
where (i) uses the Markovianity of the memory states following \autoref{lemma:transducer_as_world_model}. 
In a similar way, one can find that
\begin{align}\label{eq:update1}
    \dist{d}_{t}(s_t) 
    = \frac{p(s_{t}, \bm h_{:t-1},a_{t},y_{t})}{p(\bm h_{:t-1},a_{t},y_{t})}
    = \frac{p(y_{t}|s_{t}, \bm h_{:t-1},a_{t})p(s_{t}| \bm h_{:t-1},a_{t})}{p(y_{t}|\bm h_{:t-1},a_{t})}
    \stackrel{(ii)}{=} \frac{p(y_t|s_t,a_{t})}{Z'} \dist{b}_t(s_t)
\end{align}
with $Z'$ a normalising constant. Above, (ii) uses that $S_t$ is a world model together with the fact that that
\begin{align}
    p(s_{t}| \bm h_{:t-1},a_{t})
    = \frac{p(s_{t}, \bm h_{:t-1},a_{t})}{p(\bm h_{:t-1},a_{t})}
    = \frac{p(a_{t}|s_{t}, \bm h_{:t-1})p(s_{t}| \bm h_{:t-1})}{p(a_{t}|\bm h_{:t-1})}
    = p(s_{t}| \bm h_{:t-1}),
\end{align}
where the last equality holds due to the fact that actions depend on histories and not on states, and hence $p(a_{t}|s_{t},\bm h_{:t-1}) = p(a_{t}|\bm h_{:t-1})$. Then, combining \autoref{eq:predict1} and \autoref{eq:update1} one can find that
\begin{align}\label{eq:predictive_belief_update}
    \dist{b}_{t+1}(s_{t+1})
    =
    \hat{f}(\dist{b}_t,y_t,a_t)
    \coloneq
    \frac{1}{Z'}\sum_{s_{t}}  
    p(s_{t+1}| s_{t},h_{t}) p(y_{t}| s_{t},a_{t}) \dist{b}_{t}(s_{t}),
\end{align}
proving the first part of the proposition.

To prove the second part of the proposition, first note that 
\autoref{eq:predictive_belief_update} combined with condition (1) in \autoref{lemma:transducer_faces} imply that $(\dist{B}_t, A_t, Y_t)$ is a belief transducer with unifilar kernel given by
\begin{equation}
    \hat{\kappa}_\tau(y,\tilde{\dist{b}}|a,\dist{b})
    = 
    \delta_{\tilde{\dist{b}}}^{\hat{f}_\tau(\dist{b},y,a)} \sum_{\tilde{s},s\in\mathcal{S}}\kappa_\tau(y,\tilde{s}|a,s)\dist{b}(s)
    =
    \delta_{\tilde{\dist{b}}}^{\hat{f}_\tau(\dist{b},y,a)} 
    \Big( \bm 1^\intercal \cdot T^{(y|a)}_\tau \cdot \dist{b} \Big),
    \label{eq:xx1}
\end{equation}
where $T_\tau^{(y|a)}$ is the linear operator defined as in \autoref{eq:transducer_matrices}, and the second term corresponds to the probability of emitting $y$ given $\dist{b}$, i.e.
\begin{equation}\label{eq:emision}
    \bm 1^\intercal \cdot T^{(y_\tau|a_\tau)}_\tau \cdot \dist{b}_\tau = \sum_{s_\tau\in\mathcal{S}} p(y_\tau|s_\tau,a_\tau) \dist{b}_\tau(s_\tau) .
\end{equation}
Now, let's consider a transducer $(S_t, A_t, Y_t)$ and the belief transducer of predictive Bayesian beliefs $(\dist{B}_t, A_t, Y_t)$. 
Given that $(S_t, A_t, Y_t)$ is a transducer, then the update rule given by \autoref{eq:predictive_belief_update} can be re-written as 
\begin{equation}
    \hat{f}_\tau(\dist{b},y,a)
    =
    \frac{T^{(y|a)}_\tau \cdot \dist{b}}{\bm 1^\intercal \cdot T^{(y|a)}_\tau \cdot\dist{b}}.
\end{equation}
Moreover, for a given sequences $\bm y_{:\tau}, \bm a_{:\tau}$ and beliefs $\dist{b}_0,\dots,\dist{b}_\tau$ following this updating rule, this can be applied recursively yielding
\begin{align}
    \hat{f}_\tau(\dist{b}_\tau,y_\tau,a_\tau)
    &=
    \frac{T^{(y_\tau|a_\tau)}_\tau \cdot \dist{b}_\tau}{\bm 1^\intercal \cdot T^{(y_\tau|a_\tau)}_\tau \cdot\dist{b}_{\tau}} \nonumber \\
    &=
    \frac{T^{(\bm y_{\tau-1:\tau}|\bm a_{\tau-1:\tau})}_{\tau-1} \cdot \dist{b}_{\tau-1}}{\bm 1^\intercal \cdot T^{(\bm y_{\tau-1:\tau}|\bm a_{\tau-1:\tau})}_{\tau-1} \cdot\dist{b}_{\tau-1}} \nonumber \\
    &= \ldots \nonumber \\
    &=
    \frac{T^{(\bm y_{:\tau}|\bm a_{:\tau})}_{0} \cdot \dist{b}_{0}}{\bm 1^\intercal \cdot T^{(\bm y_{:\tau}|\bm a_{:\tau})}_{0} \cdot\dist{b}_{0}},
    \label{eq:xx2}
\end{align}
where we are using $T^{(\bm y_{t:t'}|\bm a_{t:t'})}_{t}\coloneq 
\prod_{\tau=t}^{t'} T^{(y_\tau|a_\tau)}_\tau$ as a shorthand notation. Note that, similarly as \autoref{eq:xx1}, the denominator in \autoref{eq:xx2} corresponds to
\begin{equation}
    1^\intercal \cdot T^{(\bm y_{:\tau}|\bm a_{:\tau})}_{0} \cdot\dist{b}_{0}
    =
    p(\bm y_{:\tau}|\bm a_{:\tau},s_0)\dist{b}_0(s_0).
    \label{eq:xx3}
\end{equation}
Finally, combining all these expressions one can directly calculate what is the result of successive applications of the kernel of a predictive Bayesian belief transducer:
\begin{align}
    \sum_{\dist{b}_{:t+1}} \prod_{\tau=0}^t \hat{\kappa}_\tau(y_\tau,\dist{b}_{\tau+1}|a_\tau,\dist{b}_\tau)
    \stackrel{(a)}{=}&
    \sum_{\dist{b}_{:t+1}} 
    \prod_{\tau=0}^t 
    \delta_{\dist{b}_{\tau+1}}^{\hat{f}(\dist{b}_\tau,y_\tau,a_\tau)} 
    \Big( \bm 1^\intercal \cdot  T^{(y_\tau|a_\tau)}_\tau \cdot \dist{b}_\tau \Big) \nonumber \\
    \stackrel{(b)}{=}&
    \prod_{\tau=0}^t 
    \bm 1^\intercal \cdot T^{(y_\tau|a_\tau)}_\tau \cdot 
    \frac{T^{(\bm y_{:\tau-1}|\bm a_{:\tau-1})}_{0} \cdot \dist{b}_{0}}{\bm 1^\intercal \cdot T^{(\bm y_{:\tau-1}|\bm a_{:\tau-1})}_{0} \cdot\dist{b}_{0}} \nonumber \\
    =&
    \bm 1^\intercal \cdot T^{(\bm y_{:t}|\bm a_{:t})}_{0} \cdot \dist{b}_{0} \nonumber \\
    \stackrel{(c)}{=}&
    \sum_{s_0} p(\bm y_{:\tau}|\bm a_{:\tau},s_0)\dist{b}_0(s_0).
\end{align}
Above, (a) uses~\autoref{eq:xx1}, (b) resolves the Dirac delta with the summation and uses~\autoref{eq:xx2}, and (c) uses~\autoref{eq:xx3}.  
To conclude, one can notice that if $\dist{b}_0(s_0 = \Pr(S_0=s_0)$, then this shows that $(S_t, A_t, Y_t)$ and $(\dist{B}_t, A_t, Y_t)$ generate the same interface.

\end{proof}

\section{Proof of \autoref{prop:postdictive_stuff}}
\label{app:postdictive_stuff}

Before the proof, let us note that the kernel of I-O Moore transducers can be re-organised as
\begin{align}
    p(\bm y_{:t},\bm s_{:t+1}|\bm a_{:t})
    &=
    p(s_0)
    \prod_{\tau=0}^t \kappa_\tau(y_{\tau},s_{\tau+1}|s_{\tau},a_{\tau}) \nonumber \\
    &=
    p(s_0)
    \prod_{\tau=0}^t \mu_\tau(y_{\tau}|s_\tau)\nu_\tau(s_{\tau+1}|s_\tau,a_{\tau}) \nonumber \\
    &=
    p(s_0)
    \prod_{\tau=0}^t \kappa^S_\tau(y_{\tau},s_{\tau}|s_{\tau-1},a_{\tau-1}),
\end{align}
where $\kappa^S_\tau(y_{t+1},s_{t+1}|s_t,a_{t})$ is a `time-shifted' kernel defined as
\begin{equation}
\kappa^S_\tau(y,\tilde{s}|s,a)
   \coloneq
\begin{cases}
    \mu_{\tau}(y|\tilde{s})\nu_{\tau-1}(\tilde{s}|s,a) \quad &\text{if }\tau\geq1, \nonumber \\
    \mu_0(y|s)\quad &\text{if }\tau=0.
\end{cases} 
\end{equation}
This means that I-O Moore transducers yield two associated kernels: the standard one $\kappa$ and the time-shifted one $\kappa^S$, and --- crucially --- both generate the interface. Following \autoref{eq:transducer_matrices}, let's defined the time-shifted linear operators
\begin{align}
\label{eq:transducer_matrices_shifted}
        T_\tau^{'(y|a)} \coloneq \sum_{i=1}^n\sum_{j=1}^n 
        \kappa^S_\tau(y,s_i|a,s_j) \bm e_{i} \bm e_j^\intercal~,
\end{align}
with the understanding that for $\tau=0$ then
\begin{align}
        T_0^{'(y|a)} \coloneq \sum_{i=1}^n\sum_{j=1}^n 
        \mu_0(y|s_j) \bm e_{i} \bm e_j^\intercal~.
\end{align}

\begin{proof}

To derive the update rule of postdictive beliefs, one can combine \autoref{eq:predict1} and \autoref{eq:update1} to find that
\begin{align}
    \dist{d}_{t+1}(s_{t+1}) 
    = 
    \frac{p(y_{t+1}|s_{t+1},a_{t+1})}{Z'} \sum_{s_{t}}p(s_{t+1}|s_t,h_{t})\dist{d}_t(s_t).
    \label{eq:postdictive_belief_update_proof}
\end{align}
Furthermore, if the transformer is I-O Moore, then 
\begin{align}
    p(y_{t+1}|s_{t+1},a_{t+1}) 
    = p(y_{t+1}|s_{t+1}) 
    \quad\text{and}\quad
    p(s_{t+1}|s_t,h_{t})
    =
    p(s_{t+1}|s_t,a_{t}).
\end{align}
Using these relationships in \autoref{eq:postdictive_belief_update_proof} directly yields the desired update rule.

To prove the second part of the proposition, let us first note for I-O Moore transducers then $p(y_{t+1}|s_{t+1}) = \mu_{t+1}(y_{t+1}|s_{t+1})$ and 
$p(s_{t+1}|s_t,a_{t})=\nu_t(s_{t+1}|s_t,a_{t})$. Using this, the update rule can be re-written as $\dist{d}_{\tau} = f'_\tau(y_{\tau},a_{\tau-1},\dist{d}_{\tau-1})$ with $f'$ given by
\begin{align}   
\Big[ f'_\tau(y,a,\dist{d})\Big](s)
\coloneq 
\frac{\mu_{\tau}(y|s)}{Z'} \sum_{s'} \nu_{\tau-1}(s|s',a)\dist{d}(s') 
=
\frac{1}{Z'}
\sum_{s'}\kappa_\tau^S(y,s|a,s')
\dist{d}(s').
\end{align}
Moreover, comparing this with \autoref{eq:predictive_belief_update}, one can find that
\begin{align}   
f'_\tau(y,a,\dist{d})
=
\frac{T^{'(y|a)}_\tau \cdot \dist{d}}{\bm 1^\intercal \cdot T^{'(y|a)}_\tau \cdot\dist{d}},
\end{align}
and following a similar derivation one finds that for sequences $\bm y_{:\tau}, \bm a_{:\tau}$ and beliefs $\dist{d}_0,\dots,\dist{d}_\tau$ %
\begin{align}
    f'_\tau(y_\tau,a_{\tau-1},\dist{d}_{\tau-1})
    &=
    \frac{T^{'(y_\tau|a_{\tau-1})}_\tau \cdot \dist{d}_{\tau-1}}{\bm 1^\intercal \cdot T^{'(y_\tau|a_{\tau-1})}_\tau \cdot\dist{d}_{\tau-1}} \nonumber \\
    &= \ldots \nonumber \\
    &=
    \frac{T^{(\bm y_{:\tau}|\bm a_{:\tau-1})}_{0} \cdot \dist{d}_{0}}{\bm 1^\intercal \cdot T^{(\bm y_{:\tau}|\bm a_{:\tau-1})}_{0} \cdot\dist{d}_{0}} ,
    \label{eq:yy2}
\end{align}
where we use $T^{'(\bm y_{t:t'}|\bm a_{t-1:t'-1})}_{t}\coloneq 
\prod_{\tau=t}^{t'} T^{'(y_\tau|a_{\tau-1})}_\tau$ as a shorthand notation. 

With these tools at hand, let's note that 
\autoref{eq:postdictive_belief_update} combined with condition (1) in \autoref{lemma:transducer_faces} imply that $(\dist{D}_t, A_t, Y_t)$ is a belief transducer with unifilar kernel given by
\begin{equation}
    \hat{\kappa}^S_\tau(y,\tilde{\dist{d}}|a,\dist{d})
    = 
    \delta_{\tilde{\dist{d}}}^{f'_\tau(y,a,\dist{d})} \sum_{s\in\mathcal{S}}
    \kappa_\tau^S(y,\tilde{s}|a,s)\dist{d}(s)
    =
\delta_{\tilde{\dist{d}}}^{f'_\tau(y,a,\dist{d})} 
    \Big( \bm 1^\intercal \cdot T^{'(y|a)}_\tau \cdot \dist{d} \Big),
    \label{eq:yy1}
\end{equation}
being analogous to the result found in \autoref{eq:xx1}. 
Then, combining all these expressions one can directly calculate what is the result of successive applications of the kernel of a predictive Bayesian belief transducer:
\begin{align}
    \sum_{\dist{d}_{:t}} \prod_{\tau=0}^t \hat{\kappa}^S_\tau(y_\tau,\dist{d}_{\tau}|a_{\tau-1},\dist{d}_{\tau-1})
    \stackrel{(a)}{=}&
    \sum_{\dist{d}_{:t}} 
    \prod_{\tau=0}^t 
    \delta_{\dist{d}_{\tau}}^{f'(y_\tau,a_{\tau-1},\dist{d}_{\tau-1})} 
    \Big( \bm 1^\intercal \cdot  T^{'(y_\tau|a_{\tau-1})}_\tau \cdot \dist{d}_{\tau-1} \Big) \nonumber \\
    \stackrel{(b)}{=}&
    \prod_{\tau=0}^t 
    \bm 1^\intercal \cdot T^{(y_\tau|a_{\tau-1})}_\tau \cdot 
    \frac{T^{(\bm y_{:\tau-1}|\bm a_{:\tau-2})}_{0} \cdot \dist{d}_{0}}{\bm 1^\intercal \cdot T^{(\bm y_{:\tau-1}|\bm a_{:\tau-2})}_{0} \cdot\dist{d}_{0}} \nonumber \\
    =&
    \bm 1^\intercal \cdot T^{(\bm y_{:t}|\bm a_{:t-1})}_{0} \cdot \dist{d}_{0} \nonumber \\
    =&
    \sum_{s_0}p(\bm y_{:\tau}|\bm a_{:\tau-1},s_0)\dist{d}_0(s_0).
\end{align}
Above, (a) uses~\autoref{eq:yy1}, (b) resolves the Dirac delta with the summation and uses~\autoref{eq:yy2}.  
To conclude, note that this shows that $(S_t, A_t, Y_t)$ and $(\dist{D}_t, A_t, Y_t)$ generate the same interface when $\dist{d}_0(s_0) = \Pr(S_0=s_0)$.

Before finishing, note that \autoref{cor:pred_up} follows directly from recognising \autoref{eq:predict1} and \autoref{eq:update1} as the equations related to `predict' and `update' steps in Kalman and Bayesian filtering~\citep{sarkka2023bayesian} corresponding to
\begin{equation}
    b_{t-1}=p(s_{t-1}|\bm h_{:t-1}) \xrightarrow{\text{predict}} d_{t}=p(s_{t}|\bm h_{:t-1}) \xrightarrow{\text{update}} b_{t}=p(s_{t}|\bm h_{:t}).
\end{equation}
\end{proof}

\section{Proof of~\autoref{prop:e_transducer_properties}}
\label{app:proof_e_transducer_properties}

This result was originally proven in~\cite[Prop.~2]{barnett2015computational}.  Here we provide an alternative proof that leverages the link established between transducer homomorphisms and bisimulations.

\begin{proof}
We first show that the equivalence class defined by \autoref{eq:epsilon} is a bisimulation of the world model $S_t=\bm H_{:t-1}$. 
For this, we first show that the coarse-graining defined by \autoref{eq:epsilon} is a bisimulation --- i.e., it it satisfies the two properties outlined in \autoref{def:bisimulation}. Condition (i) follows from \autoref{eq:epsilon} directly, since it only considers futures of length $L=1$. A proof that Condition (ii) follows from the fact that the dynamics of the equivalence classes are conditionally Markovian given the actions, which has been shown in \citep[Prop.~6]{barnett2015computational}. 

With this, the desired result follows directly from noting that $S_t=\bm H_{:t-1}$ is always a valid transducer of $\mathcal{I}(\bm Y|\bm A)$ (\autoref{lemma:funes}), and that the bisimulation of a transducer always yields a valid transducer (\autoref{lemma:bisim_homo}) that generates the same interface (\autoref{lemma:homo_generate}).
\end{proof}

\section{Proof of \autoref{theo:Predictive Transducer Coarse-Graining}}
\label{app:Predictive Transducer Coarse-Graining}

This proof is a direct extension of~\citep[Lemma~1]{barnett2015computational}, which focuses on `rival partitions' rather than predictive transducers. 
The core idea of the proof is that, for a given predictive transducer with memory $R_t\in\mathcal{R}$, one can build an equivalence relation in $\mathcal{R}\times\mathcal{R}$ induced by a coarse-graining map $\epsilon$ defined as
\begin{align}
    \epsilon'_t(r) & =\epsilon'_t(r')
    \quad\text{iff}\quad
    \Pr(\bm Y_{t:t'}|\bm A_{t:t'},R_t=r) 
    = 
    \Pr(\bm Y_{t:t'}|\bm A_{t:t'},R_t=r')
    \quad\forall t'\geq t.
\end{align}
Then, one can show that if $R_t$ is a predictive world model, then $\epsilon'_t(R_t)$ is isomorphic to the memory states of the $\epsilon$-transducer. The full proof of this is given next, after presenting the following lemma.

\begin{lemma}\label{lemma:cool_pred}
    A predictive transducer $(S_t, A_t, Y_t)$ satisfies
    \begin{align}
    p(\bm y_{t:t'}|\bm a_{t:t'},s_t)
    =
    p(\bm y_{t:t'}|\bm a_{t:t'},s_t,\bm h_{:t-1})
    =
    p(\bm y_{t:t'}|\bm a_{t:t'},\bm h_{:t-1})
    \quad\forall t'\geq t.
    \end{align}
\end{lemma}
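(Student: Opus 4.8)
The plan is to read the displayed chain of equalities as two independent conditional-independence statements and establish each from a separate structural property. The middle-to-right equality, $p(\bm y_{t:t'}\mid\bm a_{t:t'},s_t,\bm h_{:t-1}) = p(\bm y_{t:t'}\mid\bm a_{t:t'},\bm h_{:t-1})$, asserts $\bm Y_{t:t'} \ind S_t \mid \bm H_{:t-1}, \bm A_{t:t'}$, which is a finite-horizon marginal of the \emph{predictive} property $I(S_t; \bm Y_{t:} \mid \bm H_{:t-1},\bm A_{t:})=0$. The left-to-middle equality, $p(\bm y_{t:t'}\mid\bm a_{t:t'},s_t) = p(\bm y_{t:t'}\mid\bm a_{t:t'},s_t,\bm h_{:t-1})$, asserts instead $\bm Y_{t:t'} \ind \bm H_{:t-1} \mid S_t, \bm A_{t:t'}$, which I would extract from the generic transducer Markov structure encoded in condition (2) of \autoref{lemma:transducer_faces}, a property shared by \emph{all} transducers regardless of whether they are predictive.

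For the middle-to-right step I would start from the predictive condition, which immediately gives $p(\bm y_{t:}\mid s_t,\bm h_{:t-1},\bm a_{t:}) = p(\bm y_{t:}\mid \bm h_{:t-1},\bm a_{t:})$, then marginalise over the tail $\bm y_{t'+1:}$ to obtain the corresponding identity for $\bm y_{t:t'}$, and finally drop the conditioning on the future actions $\bm a_{t'+1:}$ using the anticipation-free property, since outputs up to time $t'$ cannot be affected by actions taken after $t'$.

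For the left-to-middle step I would invoke the second information equality inside condition (2) of \autoref{lemma:transducer_faces}, namely $I(\bm S_{t+1:},\bm Y_{t:};\bm Y_{:t-1},\bm S_{:t-1},\bm A_{:t-1}\mid \bm A_{t:},S_t)=0$, which states that the entire future $(\bm S_{t+1:},\bm Y_{t:})$ is conditionally independent of the entire past $(\bm Y_{:t-1},\bm S_{:t-1},\bm A_{:t-1})$ given $S_t$ and the future actions. Applying the decomposition property of conditional independence, I would discard $\bm S_{t+1:}$ from the future side and $\bm S_{:t-1}$ from the past side, which leaves $\bm Y_{t:} \ind (\bm Y_{:t-1},\bm A_{:t-1}) \mid S_t,\bm A_{t:}$, i.e. $\bm Y_{t:} \ind \bm H_{:t-1}\mid S_t,\bm A_{t:}$. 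Marginalising to horizon $t'$ and again trimming the future actions via anticipation-free then yields exactly the left-to-middle equality.

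The main obstacle is not conceptual but a matter of careful bookkeeping around the action conditioning: the structural lemmas are phrased with the full future action sequence $\bm A_{t:}$, whereas the statement conditions only on $\bm a_{t:t'}$. Passing between these requires a conditional form of anticipation-freeness, so I would take care to justify that the future actions $\bm a_{t'+1:}$ leave $\bm y_{t:t'}$ unchanged even under the extra conditioning on $s_t$ and $\bm h_{:t-1}$; this follows from world-model condition~(2), but it is the one place where a genuine argument is needed rather than a mechanical application of the conditional-independence calculus.
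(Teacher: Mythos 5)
Your proposal is correct, and its first half coincides with the paper's own proof: the middle-to-right equality is obtained there exactly as you describe, by reading the predictive condition $I(S_t;\bm Y_{t:}\mid \bm H_{:t-1},\bm A_{t:})=0$ as $p(\bm y_{t:t'}\mid \bm a_{t:},s_t,\bm h_{:t-1})=p(\bm y_{t:t'}\mid \bm a_{t:},\bm h_{:t-1})$ and trimming the future actions via non-anticipation (steps (i) and (ii) in \autoref{app:Predictive Transducer Coarse-Graining}). For the left-to-middle equality, however, the paper takes a different route from yours: rather than invoking the information-theoretic characterisation in condition (2) of \autoref{lemma:transducer_faces} and manipulating conditional independencies, it expands $p(\bm y_{t:t'}\mid\bm a_{t:t'},s_t,\bm h_{:t-1})$ as a sum over the hidden state path $\bm s_{t+1:t'+1}$, factorises the summand into per-step terms, and applies condition (1) of \autoref{lemma:transducer_faces} --- the kernel identity $p(y_\tau,s_{\tau+1}\mid \bm s_{:\tau},\bm h_{:\tau-1},\bm a_{\tau:})=p(y_\tau,s_{\tau+1}\mid s_\tau,a_\tau)$ --- to each factor, so that $\bm h_{:t-1}$ telescopes away. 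Since conditions (1) and (2) are proven equivalent in \autoref{app:proof_transducer_faces}, the two arguments carry the same content; yours buys a shorter, purely graphoid-style derivation (decomposition drops $\bm S_{t+1:}$ and $\bm S_{:t-1}$, leaving $\bm Y_{t:}\ind\bm H_{:t-1}\mid S_t,\bm A_{t:}$), while the paper's explicit path-sum keeps the computation self-contained at the level of kernels. Your flagged bookkeeping point is real but resolvable with tools already in the paper: the first information equality of condition (2), instantiated with $t_i=t$ and horizon $t'+1$, gives $I(\bm Y_{t:t'};\bm A_{t'+1:}\mid \bm A_{t:t'},S_t)=0$, and the chain rule $I(\bm Y_{t:t'};\bm H_{:t-1},\bm A_{t'+1:}\mid \bm A_{t:t'},S_t)=I(\bm Y_{t:t'};\bm A_{t'+1:}\mid \bm A_{t:t'},S_t)+I(\bm Y_{t:t'};\bm H_{:t-1}\mid \bm A_{t:},S_t)=0$ then justifies trimming the actions under the extra conditioning --- precisely the step the paper glosses by citing non-anticipation; you should also carry the paper's standing caveat that such equalities hold only on histories of non-zero probability, which the published proof notes explicitly.
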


\begin{proof}
By definition (see~\autoref{sec:worlds_of_beliefs}) a predictive transducer has memory states $S_t$ that satisfy the condition $I(\bm Y_{t:},S_t|\bm H_{:t-1}, \bm A_{t:})=0$, which implies that for all $t'\geq t$
\begin{align}
    p(\bm y_{t:t'}|\bm a_{t:t'},s_t,\bm h_{:t-1})
    &\stackrel{(i)}{=}
    p(\bm y_{t:t'}|\bm a_{t:},s_t,\bm h_{:t-1}) \nonumber \\
    &=
    p(\bm y_{t:t'}|\bm a_{t:},\bm h_{:t-1}) \nonumber \\
    &\stackrel{(ii)}{=}
    p(\bm y_{t:t'}|\bm a_{t:t'},\bm h_{:t-1}),
    \label{eq:001}
\end{align}
holding whenever $p(\bm y_{t:t'},s_t|\bm a_{t:t'},\bm h_{:t-1}) \neq 0$, which means that these events are compatible. Above, (i) and (ii) use the fact that the interface and world models are non anticipatory. 
Additionally, using the properties of transducers one can show that
\begin{align}
p(\bm y_{t:t'}|\bm a_{t:t'},s_t,\bm h_{:t-1})
&=
\sum_{\bm s_{t+1:t'+1}} p(\bm y_{t:t'},\bm s_{t+1:t'+1}|\bm a_{t:t'},s_t,\bm h_{:t-1}) \nonumber \\
&=
\sum_{\bm s_{t+1:t'+1}} 
\prod_{\tau=t}^{t'}
p(y_{\tau},s_{\tau+1}|\bm a_{t:t'},\bm s_{t:\tau},\bm y_{t:\tau-1},\bm h_{:t-1}) \nonumber \\
&\stackrel{(iii)}{=}
\sum_{\bm s_{t+1:t'+1}} 
\prod_{\tau=t}^{t'}
p(y_{\tau},s_{\tau+1}|\bm a_{t:t'},\bm s_{t:\tau},\bm y_{t:\tau-1}) \nonumber \\
&=
\sum_{\bm s_{t+1:t'+1}} p(\bm y_{t:t'},\bm s_{t+1:t'+1}|\bm a_{t:t'},s_t) \nonumber \\
&=
p(\bm y_{t:t'}|s_t,\bm a_{t:t'})
\label{eq:002}
\end{align}
for all $t'>t$, where (iii) uses condition (1) from \autoref{lemma:transducer_faces}. 
Finally, combining \autoref{eq:001} and \autoref{eq:002} gives the desired result.
\end{proof}

\begin{proof}[Proof of \autoref{theo:Predictive Transducer Coarse-Graining}]

Consider $S_t$ the memory state of a transducer. Let us first note that
\begin{align}\label{eq:convex}
    p(\bm y_{t:t'}|\bm a_{t:t'},\bm h_{:t-1})
    =
    \sum_{j\in\mathcal{J}} \alpha_j 
    p(\bm y_{t:t'}|\bm a_{t:t'},s_t,\bm h_{:t-1})
\end{align}
where $\alpha_j = p(s_t|\bm h_{:t-1},\bm a_{t:t'}) = p(s_t|\bm h_{:t-1})$, where the second equality holds because $S_t$ is a world model (property (2) in \autoref{def:world_model}). 
Above, we are using a suitable set of indices $\mathcal{J}$ corresponding to the possible values of $s_t$, which satisfy $\sum_{j\in\mathcal{J}}\alpha_j=\sum_{s_t\in\mathcal{S}}p(s_t|\bm h_{:t-1}) = 1$. 
Given that the Shannon entropy is concave, \autoref{eq:convex} implies 
\begin{equation}
    H\Big[p(\bm y_{t:t'}|\bm a_{t:t'},s_t)\Big]
    \geq
    \sum_{j\in\mathcal{J}} 
    \alpha_j H\Big[p(\bm y_{t:t'}|\bm a_{t:t'},s_t,\bm h_{:t-1})\Big],
    \label{eq:H_concave}
\end{equation}
where $H[p]$ is a shorthand notation for the entropy of a variable with distribution $p$. Moreover, given that $H$ is strictly concave, \autoref{eq:H_concave} turns into an equality if and only if all $\alpha_j$'s are either 0 or 1 for all $j\in\mathcal{J}$.

Now, consider $S_t$ the memory state of a predictive transducer and $M_t=\epsilon(\bm H_{:t-1})$ the memory state of the $\epsilon$-transducer, as determined by the coarse-graining mapping defined in~\autoref{eq:epsilon}. 
Note that $(M_t, A_t, Y_t)$ is a predictive transducer, and hence \autoref{lemma:cool_pred} can be used yielding
\begin{align}
p\big(\bm y_{t:t'}|\bm a_{t:t'},\epsilon(\bm h_{:t-1})\big)
=
p(\bm y_{t:t'}|\bm a_{t:t'},\epsilon(\bm h_{:t-1}),\bm h_{:t-1})
=
p(\bm y_{t:t'}|\bm a_{t:t'},\bm h_{:t-1})
\end{align}
for all $t'>t$. 
Then, using~\autoref{lemma:cool_pred} this time on $(S_t, A_t, Y_t)$, one can find that
\begin{align}
    \Pr\big(\bm Y_{t:t'}|\bm A_{t:t'},M_{t-1}=\epsilon(\bm h_{:t-1})\big)
    &=
    \Pr(\bm Y_{t:t'}|\bm A_{t:t'},\bm H_{:t-1}=\bm h_{:t-1})
     \nonumber \\
    &= 
    \Pr(\bm Y_{t:t'}|\bm A_{t:t'},S_t=s_t,\bm H_{:t-1}=\bm h_{:t-1}) 
    \label{eq:useful}
     \nonumber \\
    &=
    \Pr(\bm Y_{t:t'}|\bm A_{t:t'},S_t=s_t).
\end{align}
This implies that for each equivalence class $\epsilon_t(\bm h_{:t-1}) = [\bm h_{:t-1}]_{\sim_\epsilon}$ there exists at least one $s_t\in\mathcal{S}$ such that $\Pr\big(\bm Y_{t:t'}|\bm A_{t:t'},M_{t-1}=\epsilon(\bm h_{:t-1})\big)
=
\Pr(\bm Y_{t:t'}|\bm A_{t:t'},S_t=s_t)$. 
Moreover, the previous equalities imply that if $S_t$ is a predictive transducer, then \autoref{eq:H_concave} necessarily becomes an equality. This, in turn, implies that $\alpha_j=p(s_t|\bm h_{:t-1})$ is 1 for all $s_t\in\mathcal{S}$ for which 
\begin{align}
\Pr(\bm Y_{t:t'}|\bm A_{t:t'},S_t=s_t)
&= 
\Pr(\bm Y_{t:t'}|\bm A_{t:t'},S_t=s_t,\bm H_{:t-1}=\bm h_{:t-1}) \nonumber \\
&=
\Pr(\bm Y_{t:t'}|\bm A_{t:t'},\bm H_{:t-1}=\bm h_{:t-1})
\end{align}
holds, or 0 otherwise. 
This implies that the mapping $\epsilon'$ given by
\begin{align}
\epsilon'(s_t) = \epsilon'(s_t')
\quad\Leftrightarrow\quad
\Pr(\bm Y_{t:t'}|\bm A_{t:t'},S_t=s_t) 
=
\Pr(\bm Y_{t:t'}|\bm A_{t:t'},S_t=s_t')
\quad \forall t'\geq t
\end{align}
satisfies $\epsilon'_t(S_t) = \epsilon_t(\bm H_{:t-1})= M_t$.
\end{proof}

\section{Comparing the reduction of general vs predictive transducers}
\label{app:general_vs_predictive}

Building upon the discussion about the canonical dimension of a transducer (see \autoref{eq:cannonical_dimension}), let us focus on transducers with finite memory states (i.e. $|\mathcal{S}|=n$) and consider the matrix $W$ whose columns given by the vectors $\bm w(\bm h_{:t})\in\mathbb{R}^n$ of probabilities of generating $\bm y_{:t}$ given $\bm a_{:t}$ when starting from different world states, so that its $k$-th coordinate is 
$[\bm w(\bm h_{:t})]_k = \Pr(\bm Y_{:t} = \bm y_{:t} | \bm A_{:t}=\bm a_{:t}, S_0 = s_k)$ for all possible sequences when $t=n-1$ (see~\citep[Prop.~4.3]{cakir2021theory}).  
Then, the coarse-graining $\epsilon$ defined by~\autoref{eq:epsilon} correspond to merging together all rows of $W_t$ that are equal. 
In contrast, the canonical dimension $d(\mathcal{T})$ defined in~\autoref{eq:cannonical_dimension} corresponds to the number of linearly independent rows. 
The crucial point is that, if a transducer with memory states $S_t$ is predictive, then any coarse-graining $\epsilon(S_t)$ will also be predictive. However, reductions via more general procedures to trim linearly dependent components may not be attainable via coarse-grainings. In particular, the matrix $W_t$ of an $\epsilon$-transducer may have linearly dependent rows, and reducing those would --- due to \autoref{corollary:e_transducer} --- necessary make the transducer to stop being predictive.

It is interesting to note that the predictive beliefs of the $\epsilon$-transducer are isomorphic to the states of the $\epsilon$-transducer. 
However, the $\epsilon$-transducer is not the only machine whose MSP produces the $\epsilon$-transducer --- in fact, the MSP of any transducer without redundant states will produce the same.

Finally, it is worth noting that a non-predictive finite world model may have an associate $\epsilon$-transducer that has infinite states filling the simplex of belief states with intricate fractal patterns~\citep{jurgens2021divergent}. This fact makes techniques to study transducers at various degrees of resolution (e.g., via approximate homomorphisms~\citep{taylor2008bounding} or bisimulation~\citep{girard2011approximate}, or using rate-distortion trade-offs~\citep{marzen2016predictive}) particularly important.

\section{A canonical retrodictive world model}
\label{app:profet_tree}

One can show that all anticipation-free transducer have one retrodictive transducer that is somehow dual to \textit{Funes the memorious} (\autoref{lemma:funes}), and can be described as \textit{Sunef the prophet}, with knowledge of all possible sequence of future actions.
To build the world model of this transducer, let us first denote as $\mathcal{T}_\mathcal{A}$ the regular tree with one root and where each node has one branch per element in $\mathcal{A}$. Let us denote by $\mathcal{N}(\mathcal{T}_\mathcal{A})$ the nodes of the tree, and establish some operations:
\begin{itemize}
    \item $\mu: \mathcal{N}(\mathcal{T}_\mathcal{A})\to \mathcal{A}^*$ and $\nu: \mathcal{N}(\mathcal{T}_\mathcal{A})\to \mathcal{N}(\mathcal{T}_\mathcal{A})^*$ with $()^*$ the Kleene operator, where $\mu(v)$ and  $\nu(v)$ returns a vector with all the branches and nodes in the path leading back from $v$ to the root, respectively.
    \item $\pi:\mathcal{N}(\mathcal{T}_\mathcal{A})\times \mathcal{A}\to  \mathcal{N}(\mathcal{T}_\mathcal{A})$, where $\pi(v,a)$ gives the descendent of $v$ connected via branch $a$. 
   \item $\tau:\mathcal{N}(\mathcal{T}_\mathcal{A})\to\mathbb{N}$, where $\tau(v)$ is the depth of $v$ in the tree.
\end{itemize}
With all this, we are ready to define the world model. 
In general, $S_t \in \mathcal{Y}^{\mathcal{T}_\mathcal{A}}$ are random variables that take values on $\mathcal{T}_\mathcal{A}$-shaped sequences of symbols in $\mathcal{Y}$. 
Concretely, $S_0 = \big(Z_v: v\in\mathcal{N}(\mathcal{T}_\mathcal{A})\big)$ with $Z_v\in\mathcal{Y}$ being random variables, whose joint distribution is given by
\begin{equation}
    \Pr\big(S_0 = (Z_v:v\in\mathcal{T}_{\mathcal{A}}) \big) := \prod_{v\in{\mathcal{T}_\mathcal{A}}} \Pr(Z_v|\bm Z_{\nu(v)})
\end{equation}
with $\bm Z_{\nu(v)}$ the vector of variables corresponding to nodes in $\nu(v)$ and 
\begin{equation}
\Pr(Z_v = y |\bm Z_{\nu(v)} = \bm y_{:\nu(v)-1}) 
:= \Pr\big(Y_{\tau(v)} = y | \bm Y_{:\tau(v)-1} = \bm y_{:\nu(v)-1},\bm A_{:\tau(v)}=\mu(v)\big)     
\end{equation}
Then, the world's dynamics are established recursively by 
$p(s_{t+1}|\bm s_{:t},\bm h_:) := \delta_{s_{t+1}}^{f(s_t,a_t)}$ 
so that $S_{t+1} = f(S_t,A_t)$ a.s., with the unifilar update established by 
\begin{equation}
    S_{t+1} = (Z_v^{t+1}: v\in\mathcal{T}_\mathcal{A})
    \quad
    \text{with}
    \quad 
    Z_v^{t+1} = Z_{\pi(v,A_t)}^t. 
\end{equation}

In summary, the world is first initialised at time zero by sampling $S_0$, i.e. by sampling $Z_v$ for all $v\in\mathcal{T}_\mathcal{A}$ --- which stands to sample $\bm Y_:$ for all possible sequences of actions $\bm a_:$. After this, the world evolves deterministically by following the update rule given by $f$.

\section{Example of a non-reversible transducer}
\label{app:non-reversible_transducers}

Let us provide an example of how reversing a transducer can lead to a violation of the anticipation free condition. 
Let's consider the so-called delay channel, for which the output $Y_{t+1}$ is equal to the previous action $A_t$~\citep{barnett2015computational}. This channel displays acausal behaviour when time reversed: somehow the action $A_t$ determines the outcome at the previous time step $Y_{t-1}$, meaning that
\begin{align}
    I(\bm Y_{:t-1};\bm A_{t:}|\bm A_{t-1:}) 
    =
    I(Y_{t-1};A_{t}|\bm A_{t-1:})
    = 
    H(A_{t}|\bm A_{t-1:}),
\end{align}
which is nonzero if the entropy rate of the actions is nonzero. This leads to an interface that does not satisfy anticipation free, violating the conditions of \autoref{def:interface}.

\section{Reversing processes and proof of \autoref{teo_reversing}}
\label{app: Reversing Processes}

Here we present an extended exposition of the conditions for reversing various types of stochastic processes.

\subsection{Reversing Markov processes}

Let's start by considering a Markov process $X_t$, so that $p(x_t|\bm x_{:t-1}) = p(x_t|x_{t-1})$ for all $t\in\mathbb{N}$. Then the reverse process (given by $X_t,X_{t-1},\dots$) is also Markov, as
\begin{align}
    p(x_t|\bm x_{t+1:t'}) 
    &= 
    \frac{ p(\bm x_{t:t'}) }{ p(\bm x_{t+1:t'}) }
    = 
    \frac{ p(x_t) \prod_{k=t+1}^{t'} p(x_k|\bm x_{t:k-1}) }
    { p(x_{t+1}) \prod_{j=t+2}^{t'} p(x_j|\bm x_{t+1:j-1}) } \nonumber \\
    &=  
    \frac{ p(x_t) \prod_{k=t+1}^{t'} p(x_k|x_{k-1}) }
    { p(x_{t+1}) \prod_{j=t+2}^{t'} p(x_j|x_{j-1}) } 
    =
    \frac{ p(x_t) p(x_{t+1}| x_{t}) }
    { p(x_{t+1}) }
    = p(x_t|x_{t+1}).
\end{align}

\subsection{Reversing hidden Markov models}
\label{app:hmm_reverse}

Let's now consider a general (i.e. Mealy~\citep{riechers2016exact}) hidden Markov model, in which $p(s_{t+1},y_t|\bm s_{:t}, \bm y_{:t-1}) = p(s_{t+1},y_t|s_t)$ holds for all $t \in \mathbb{N}$. 
As for Markov chains, one can show that the reverse process is also an hidden Markov model, as
\begin{align}
    p(s_t,y_t|\bm s_{t+1:t'+1}, \bm y_{t+1:t'})
    &= 
    \frac{p(\bm s_{t:t'+1}, \bm y_{t:t'})}{p(\bm s_{t+1:t'+1}, \bm y_{t+1:t'})} \nonumber \\
    &= 
    \frac{ p(s_t,y_t,s_{t+1}) \prod_{k=t+1}^{t'} p(s_{k+1},y_k|\bm s_{t:k},\bm y_{t:k-1}) }
    { p(s_{t+1},y_{t+1},s_{t+2}) \prod_{j=t+2}^{t'} p(s_{j+1},y_j|\bm s_{t:j},\bm y_{t:j-1}) } \nonumber \\
    &= 
    \frac{ p(s_t,y_t,s_{t+1}) \prod_{k=t+1}^{t'} p(s_{k+1},y_k|s_{k}) }
    { p(s_{t+1},y_{t+1},s_{t+2}) \prod_{j=t+2}^{t'} p(s_{j+1},y_j|s_{j}) } \nonumber \\
    &= 
    \frac{ p(s_t) \prod_{k=t}^{t'} p(s_{k+1},y_k|s_{k}) }
    { p(s_{t+1}) \prod_{j=t+1}^{t'} p(s_{j+1},y_j|s_{j}) } \nonumber \\
    &= 
    \frac{ p(s_t) p(s_{t+1},y_{t}|s_{t}) }
    { p(s_{t+1}) } \nonumber \\
    &= 
    p(s_{t},y_{t}|s_{t+1}).
\end{align}
Note that in this case the structure is not perfectly time-symmetric, but could be described as `co-Mealy' structure --- as the time indices of the world are shifted. 

If the hidden Markov model is Moore~\citep{riechers2016exact}, so that $p(s_{t+1},y_t|\bm s_{:t}, \bm y_{:t-1}) = p(s_{t+1}|s_t)p(y_{t}|s_t)$, then a similar calculation leads to
\begin{equation}
    p(s_t,y_t|\bm s_{t+1:t'+1}, \bm y_{t+1:t'})
    = \frac{ p(s_t) p(s_{t+1},y_{t}|s_{t}) }
    { p(s_{t+1}) }
    = \frac{ p(s_t) p(s_{t+1}|s_{t}) p(y_{t}|s_{t}) }
    { p(s_{t+1}) }
    = p(s_{t}|s_{t+1}) p(y_{t}|s_{t}),
\end{equation}
yielding another Moore hidden Markov model.

\subsection{Reversing transducers}

Using the previous calculations as a foundation, let's now explore the reverse properties of a transducer, where $p(s_{t+1},y_t|\bm s_{:t}, \bm y_{:t-1}, \bm a_:) = p(s_{t+1},y_t|s_t,a_t)$ holds (see~\autoref{app:proof_transducer_faces}). 
Using this property, it is direct to see that
\begin{align}
    p(\bm y_{:t},\bm s_{:t+1}| \bm a_{:}) 
    &= 
    p(s_0)\prod_{\tau=0}^t p(y_\tau,s_{\tau+1}|\bm y_{:\tau-1},\bm s_{:\tau},\bm a_{:}) \nonumber \\
    &=
    p(s_0)\prod_{\tau=0}^t p(y_\tau,s_{\tau+1}|s_{\tau},\bm a_{:t}) \nonumber \\
    &=
    p(\bm y_{:t},\bm s_{:t+1}| \bm a_{:t}) ,
\end{align}
showing that transducers naturally impose some arrow of time over actions.\footnote{Note that the derivation uses the fact that $p(s_0|\bm a_:) = p(s_0)$, and it wouldn't work for other initial point where this does not hold.} 
Now, let's consider expressing $p(\bm y_{:t},\bm s_{:t+1}| \bm a_{:})$ factor backwards as follows:
\begin{align}
    p(\bm y_{:t},\bm s_{:t+1}| \bm a_{:}) 
    = p(\bm y_{:t},\bm s_{:t+1}| \bm a_{:t})
    = p(s_{t+1}|\bm a_{:t})\prod_{\tau=0}^t p(y_\tau,s_\tau|\bm y_{\tau+1:t},\bm s_{\tau+1:t+1},\bm a_{:t}).
\end{align}
This shows that we need to looks for ways of simplifying expressions of the form $p(y_\tau,s_\tau|\bm y_{\tau+1:t},\bm s_{\tau+1:t+1},\bm a_{:t})$. Using the properties of transducers, one can show that
\begin{align}
    p(s_\tau,y_\tau|\bm s_{\tau+1:t+1}, \bm y_{\tau+1:t},\bm a_{:t})
    &= 
    \frac{p(\bm s_{\tau:t+1}, \bm y_{\tau:t},\bm a_{:t})}{p(\bm s_{\tau+1:t+1}, \bm y_{\tau+1:t},\bm a_{:t})} \nonumber \\
    &= 
    \frac{ p(s_\tau,y_\tau,s_{\tau+1},\bm a_{:t}) \prod_{k=\tau+1}^{t} p(s_{k+1},y_k|\bm s_{\tau:k},\bm y_{\tau:k-1},\bm a_{:t}) }
    { p(s_{\tau+1},y_{\tau+1},s_{\tau+2},\bm a_{:t}) \prod_{j=\tau+2}^{t} p(s_{j+1},y_j|\bm s_{\tau:j},\bm y_{\tau:j-1},\bm a_{:t}) } \nonumber\\
    &= 
    \frac{ p(s_\tau,y_\tau,s_{\tau+1},\bm a_{:t}) \prod_{k=\tau+1}^{t} p(s_{k+1},y_k|s_{k},a_k) }
    { p(s_{\tau+1},y_{\tau+1},s_{\tau+2},\bm a_{:t}) \prod_{j=\tau+2}^{t} p(s_{j+1},y_j|s_{j},a_j) } \nonumber \\
    &= 
    \frac{ p(s_\tau,\bm a_{:t}) \prod_{k=\tau}^{t} p(s_{k+1},y_k|s_{k}, a_k) }
    { p(s_{\tau+1},\bm a_{:t}) \prod_{j=\tau+1}^{t} p(s_{j+1},y_j|s_{j}, a_j) } \nonumber \\
    &= 
    \frac{ p(s_\tau|\bm a_{:t}) p(s_{\tau+1},y_{\tau}|s_{\tau},a_\tau) }
    { p(s_{\tau+1}|\bm a_{:t}) } \nonumber \\
    &= 
    \frac{ p(s_\tau|\bm a_{:t}) p(s_{\tau+1}|s_{\tau},a_\tau)}
    { p(s_{\tau+1}|\bm a_{:t}) }
    p(y_{\tau}|s_{\tau+1},s_{\tau},a_\tau) \nonumber \\
    &= 
    p(s_{\tau}|s_{\tau+1},\bm a_{:t})
    p(y_{\tau}|s_{\tau},s_{\tau+1}, a_{\tau}).
\end{align}
This shows that, for any transducer $(S_t, A_t, Y_t)$, we can always `run it back' using the whole sequence of actions to reproduce the interface, as shown by the factorisation
\begin{align}
    p(\bm y_{:t},\bm s_{:t+1}| \bm a_{:}) 
    = p(s_{t+1}|\bm a_{:t}) 
    \prod_{\tau=0}^t 
    p(s_\tau|s_{\tau+1},\bm a_{:t})
    p(y_\tau|,s_\tau,s_{\tau+1}, a_{\tau}).
\end{align}
If the transducer satisfies the additional condition
$p(s_\tau|s_{\tau+1},\bm a_{:t}) 
= p(s_\tau|s_{\tau+1}, a_\tau)$,
or equivalently the information relation
    $I(\bm S_\tau;\bm A_{0:\tau-1}\bm A_{\tau+1:t}|S_{\tau+1}, A_\tau)=0$, 
then one obtains a reverse factorisation of the form
\begin{align}
\label{eq:reversed_factorisation}
    p(\bm y_{:t},\bm s_{:t+1}| \bm a_{:}) 
    = p(s_{t+1}|\bm a_{:t})\prod_{\tau=0}^t p(y_\tau,s_\tau|s_{\tau+1},a_t).
\end{align}
The reverse kernel $\kappa^R$ can be expressed in terms of the forward one via the following derivation:
\begin{align}
    \kappa^R_\tau(y_\tau,s_\tau|a_\tau,s_{\tau+1})
    &=
    p(y_\tau,s_\tau|a_{:\tau},s_{\tau+1})
    =
    \frac{p(s_{\tau}|a_{\tau})}{p(s_{\tau+1}|a_{\tau})} p(y_\tau,s_{\tau+1}|a_{:\tau},s_{\tau}),
\end{align}
which implies that for reversible transducers then
\begin{equation}
\label{eq:reversed_kernel0}
    \kappa^R_\tau(y,s|a,\tilde{s}) 
    = 
    \frac{\Pr(S_\tau=s|A_\tau=a)}{\Pr(S_{\tau+1}=\tilde{s}|A_\tau=a)}
    \kappa_\tau(y,\tilde{s}|a,s).
\end{equation}

It is interesting to note that that replacing \autoref{eq:reversed_kernel0} in \autoref{eq:reversed_factorisation} could give the impression of not recovering \autoref{eq:transducer}; however, under the assumption of transducer reversibility it does. 
To confirm this, let us first check that 
\begin{align}
    p(s_\tau|s_{\tau+1},\bm a_{:t})
    =
    \frac{p(s_\tau,s_{\tau+1}|\bm a_{:t})}{p(s_{\tau+1}|\bm a_{:t})}
    =
    \frac{p(s_\tau|\bm a_{:t}) p(s_{\tau+1}|s_\tau,\bm a_{:t})}{p(s_{\tau+1}|\bm a_{:t})},
\end{align}
and similarly
\begin{align}
    p(s_\tau|s_{\tau+1},a_{\tau})
    =
    \frac{p(s_\tau,s_{\tau+1}|a_{\tau})}{p(s_{\tau+1}|a_{\tau})}
    =
    \frac{p(s_\tau|a_{\tau}) p(s_{\tau+1}|s_\tau,a_{\tau})}{p(s_{\tau+1}|a_{\tau})}.
\end{align}
Now, as $s_\tau$ is the memory state of a transducer then $p(s_{\tau+1}|s_\tau,\bm a_{:t}) = p(s_{\tau+1}|s_\tau,a_{t})$. Putting all together, the condition $p(s_t|s_{t+1},\bm a_{:t})=p(s_t|s_{t+1},a_{t})$ can be seen to imply the following:
\begin{equation}
      \frac{p(s_\tau|\bm a_{:t})}{p(s_{\tau+1}|\bm a_{:t})}
      =
      \frac{p(s_\tau|a_{\tau})}{p(s_{\tau+1}|a_{\tau})}.
\end{equation}
This equality allows us to confirm that replacing 
\autoref{eq:reversed_kernel} in \autoref{eq:reversed_factorisation} indeed recovers \autoref{eq:transducer}.

In conclusion, if $p(s_\tau|s_{\tau+1},\bm a_{:t}) 
= p(s_\tau|s_{\tau+1}, a_\tau)$ holds then one can generate the interface via the following procedure:
\begin{enumerate}
    \item Initialise the state of the world at time $t+1$ by sampling $p(s_{t+1}|\bm a_{:t})$. Alternatively, for counterfactual analyses pick an arbitrary world state $s\in\mathcal{S}$ and set $S_{t+1}=s$ .
    \item Run the transducer backwards using the kernel $\kappa^R(y_\tau,s_\tau|a_\tau,s_{\tau+1})=p(y_\tau,s_\tau|s_{\tau+1},a_t)$.
\end{enumerate}

\subsection{Effect of action-unifiliarity}

A transducer is action-unifilar if $p(s_{\tau+1}|s_\tau,a_\tau) = \delta_{s_{\tau+1}}^{f(s_\tau,a_\tau)}$ with $S_{\tau+1} = f(S_\tau,A_\tau)$ a function. 
If the dynamics of the transducer is action-counifilar, meaning that $p(s_{\tau}|s_{\tau+1},a_\tau)=\delta_{s_\tau}^{r(s_{\tau+1},a_\tau)}$ where $S_{\tau}=r(S_{\tau+1},A_\tau)$, then we necessarily safisfy the condition of being reversible $p(s_\tau|s_{\tau+1},a_{:\tau})=p(s_\tau|s_{\tau+1},a_{\tau})$.  However, this is much more restrictive if than action-unifilarity if we insist that every world-state can accept every action $\sum_{s_{\tau+1}}p(s_{\tau+1}|s_\tau,a_\tau)=1$.  Using Bayes rule
\begin{align}
    p(s_{\tau+1}|s_\tau,a_\tau)&=p(s_\tau|s_{\tau+1},a_\tau) \frac{p(s_{\tau+1}|a_\tau)}{p(s_{\tau}|a_\tau)} \nonumber \\ 
    &=\delta_{s_\tau}^{r(s_{\tau+1},a_\tau)} \frac{p(s_{\tau+1}|a_\tau)}{p(s_{\tau}|a_\tau)},
\end{align}
we see that there is one nonzero transition for every combination of state $s_{\tau+1}$ and action $a_\tau$.  We can think of each transition as an edge betweens states labeled with the action, like a driven transition.  This means that there are $|\mathcal{A}|$ transitions per state $s_\tau$.  The condition that every world-state can accept every action means that every state has at least one outgoing edge for every action.  If this were a non-unifilar model, this would mean that there an action that had two or more outgoing edges.  However, that would mean that the total number of edges in the automata is larger than $|\mathcal{A}||\mathcal{S}|$, which is a contradiction.  Thus, each state $s_\tau$ has exactly one outgoing edge for each action $a_\tau$, meaning that the next state is a function of these states
\begin{align}
S_{\tau+1}=f(S_\tau,A_\tau).
\end{align}
Therefore, every action-counifilar transducer is also action-unifilar, meaning that it obeys a type of reversibility.

\section{Proof of \autoref{teo:retro_MSP}}
\label{app:proof_retro_mix}

For convenience, in this proof we use Dirac's notation, which 
uses bras like $\langle v|$ and kets like $|v\rangle$ to express row and column vectors respectively.  If we are describing vectors and matrices over states $\mathcal{S}$, then we can use an orthonormal basis ($\{ | s \rangle \}_{s \in \mathcal{S}}$ such that $\langle s |s' \rangle =\delta_{s,s'}$) in the Hilbert space $\mathcal{H}_\mathcal{S}$ to express the vector
\begin{align}
    |v \rangle = \sum_{s} v(s) |s \rangle.
\end{align}
Here, $v(s)$ represents the $s$th element of the vector.  Similarly, for a linear operator in this Hilbert space, we can think of 
\begin{align}
\langle s' |M |s \rangle,
\end{align}
as the element in the $s$th row and $s'$th column, and we can translate a matrix $A$ with elements $A_{ss'}$ into a linear operator in this space by using the outer-product
\begin{align}
A=\sum_{ss'}|s' \rangle A_{ss'}\langle s|.
\end{align}

Using this notation, we can consider a vector space $\mathbb{R}^{|\mathcal{S}|}$ using the orthonormal basis of states $\{|s \rangle \}_{s \in \mathcal{S}}$ such that $\langle s | s' \rangle = \delta_{s,s'}$. 
Then, the predictive Bayesian belief can be described as
\begin{align}
\label{eq:pred_quantum}
    |\rho^P(\bm y_{:t}, \bm a_{:t}) \rangle = \sum_{s_{t+1}} |s_{t+1} \rangle p(s_{t+1}|\bm h_{:t})~,
\end{align}
and the retrodictive Bayesian belief as
\begin{align}
\label{eq:retrodic_quantum}
    \langle \rho^R(\bm y_{:t}, \bm a_{:t}) | = \sum_{s_0}  p(s_{0}|\bm h_{:t})\langle s_0|~.
\end{align}
Similarly, the matrix corresponding a sequence of actions $\bm a_{:t}$ and outputs $\bm y_{:t}$ can be described as
\begin{align}
T^{(\bm y_{:t}|\bm a_{:t})} 
=
\prod_{\tau=0}^{t} T^{(y_{\tau}|a_{\tau})} 
= \sum_{s_0,s_{\tau+1}}|s_{\tau+1} \rangle p(s_{\tau+1} ,\bm y_{:\tau}|\bm a_{:\tau},s_0) \langle s_0 |,
\end{align}
If we define the initial diagonal operator $\rho_t\equiv \sum_{s_t} |s_t \rangle p(s_t) \langle s_t |$, 
then we can calculate the probability of joint start and end state as follows
\begin{align}
\label{eq:matrix_prod}
T^{(\bm y_{:\tau}|\bm a_{:\tau})} \rho_0
= 
\sum_{s_0,s_{\tau+1}}|s_{\tau+1} \rangle p(s_{\tau+1},s_0 ,\bm y_{:\tau}|\bm a_{:\tau}) \langle s_0 |.
\end{align}
With this, one can calculate the interface via expressions of the form
\begin{align}
\label{eq:inter_quantum}
p(\bm y_{:\tau}|\bm a_{:\tau})
= 
\langle 1 | T^{(\bm y_{:\tau}|\bm a_{:\tau})}  \rho_0 |1 \rangle,
\end{align}
where $|1 \rangle \equiv \sum_{s} |s \rangle.$

\begin{proof}[Proof of \autoref{teo:retro_MSP}]
    
Using this notation, the BMSM (\autoref{sec:retro_beliefs}) can be expressed as
\begin{align}
\rho(\bm y_{:\tau},\bm a_{:\tau})
=
\sum_{s_0,s_{\tau+1}}|s_{\tau+1} \rangle p(s_{\tau+1},s_0 |\bm y_{:\tau},\bm a_{:\tau}) \langle s_0 | ~.
\end{align}
Moreover, by using \autoref{eq:matrix_prod} and \autoref{eq:inter_quantum} one can find that
\begin{align}
    \rho(\bm y_{:\tau},\bm a_{:\tau})
    =
    \sum_{s_0,s_{\tau+1}}
    \frac{|s_{\tau+1}\rangle p(s_{\tau+1},s_0,\bm y_{:\tau}|\bm a_{:\tau}) \langle s_0 |}
    {p(\bm y_{:\tau}|\bm a_{:\tau})} 
    =
    \sum_{s_0,s_{\tau+1}}
    \frac{|s_{\tau+1}\rangle T^{(\bm y_{:\tau}|\bm a_{:\tau})} \rho_0 \langle s_0 | }
    {\langle 1 | T^{(\bm y_{:\tau}|\bm a_{:\tau})}  \rho_0 |1 \rangle}~,
\end{align}
which proves the first part of the theorem. 
Additionally, by comparing this with \autoref{eq:pred_quantum} and \autoref{eq:retrodic_quantum} one finds that
\begin{equation}
    \rho(\bm y_{:\tau},\bm a_{:\tau}) |1\rangle = |\rho^P(\bm y_{:t}, \bm a_{:t}) \rangle
    \quad\text{and}\quad
     \langle 1| \rho(\bm y_{:\tau},\bm a_{:\tau}) = \langle \rho^R(\bm y_{:t}, \bm a_{:t})| ~,
\end{equation}
which proves the second part of the theorem.

The corollary can be proven by noticing that
\begin{align}
\rho(\bm y_{:\tau+1},\bm a_{:\tau+1})
=
\frac{T^{(y_{\tau+1}|a_{\tau+1})}
\rho(\bm y_{:\tau},\bm a_{:\tau})}{\langle 1| T^{(y_{\tau+1}|a_{\tau+1})}\rho(\bm y_{:\tau},\bm a_{:\tau})|1\rangle},
\end{align}
where the denominator is a normalisation term. 
By contrast, the reverse-time update requires applying a modified version of the transducer operator $\rho_{0}^{-1} T^{(y|a)}\rho_0$ and normalizing:
\begin{align}
\rho(\bm y_{-1:\tau},\bm a_{-1:\tau})
=
\frac{\rho(\bm y_{:\tau},\bm a_{:\tau})\rho^{-1}_0T^{(y_{-1}|a_{-1})}\rho_{-1}}{\langle 1| \rho(\bm y_{:\tau},\bm a_{:\tau})\rho_0^{-1}T^{(y_{-1}|a_{-1})}\rho_{-1}|1\rangle}.
\end{align}
Reflecting the fact that not every transducer is reversible, the operation of $\rho_{0}^{-1} T^{(y|a)}\rho_0$ cannot necessarily be interpreted as the action of a transducer. However, it is nevertheless a valid method for retrodicting the state distribution of the world.
\end{proof}

It is important to note that, since not every transducer is reversible, the operation $\rho_{t}^{-1} T^{(y|a)}\rho_{t-1}$ generally does not yield the action of a transducer. This operation is, nevertheless, a valid method for retrodicting the state distribution of a world model if its initial state is assumed to be uncorrelated with future action sequences.

\end{document}